\pdfoutput=1
\documentclass{article}
\pdfoutput=1
 
\PassOptionsToPackage{round}{natbib}



\usepackage[final,nonatbib]{neurips_2020}


\usepackage[utf8]{inputenc} 
\usepackage[T1]{fontenc}    
\usepackage[hidelinks]{hyperref}
\usepackage{url}            
\usepackage{booktabs}       
\usepackage{amsfonts}       
\usepackage{nicefrac}       
\usepackage{microtype,soul}      
\usepackage{natbib}

\usepackage{amsfonts}
\usepackage{xcolor}
\let\proof\relax
\let\endproof\relax
\usepackage{amsthm}
\usepackage{amsmath}
\usepackage{amssymb}
\usepackage{soul}
\usepackage{comment}
\usepackage{enumitem}
\usepackage{bbm}
\usepackage{graphicx}
\usepackage{float}
\usepackage{subfigure}
\usepackage[ruled,linesnumbered,noend,noline]{algorithm2e}

\newtheorem{definition}{Definition}
\newtheorem{assumption}{Assumption}
\newtheorem{theorem}{Theorem}
\newtheorem{lemma}{Lemma}
\newtheorem{proposition}{Proposition}
\newenvironment{proofsketch}{\proof}{\endproof}
\theoremstyle{remark}
\newtheorem{remark}{Remark}

\newcommand{\ra}{\rightarrow}
\newcommand{\mb}[1]{\mathbb{#1}}

\newcommand{\mc}[1]{\mathcal{#1}}

\newcommand{\rr}{\mathbb{R}}
\newcommand{\ee}{\mathbb{E}}
\newcommand{\pp}{\mathbb{P}}

\renewcommand{\epsilon}{\varepsilon}

\newcommand{\abs}[1]{\left| #1 \right|}
\newcommand{\norm}[1]{\left\| #1 \right\|}
\newcommand{\prob}[1]{\pp\left( #1 \right)}	
\newcommand{\e}[1]{\ee\left[#1\right]}

\newcommand{\ceil}[1]{\left\lceil #1 \right\rceil}

\newcommand{\indicator}{\mathbbm{1}}

\definecolor{softpink}{RGB}{216, 52, 108}

\title{\texttt{POLY-HOOT}: Monte-Carlo Planning in Continuous Space MDPs with Non-Asymptotic Analysis}

\author{%
Weichao Mao\\
  ECE and CSL\\
  University of Illinois at Urbana-Champaign\\
  \texttt{weichao2@illinois.edu}
  \And
Kaiqing Zhang\\
  ECE and CSL\\
  University of Illinois at Urbana-Champaign\\
  \texttt{kzhang66@illinois.edu}
  \AND 
\hspace{0.9cm}Qiaomin Xie\\
\hspace{0.8cm}  ORIE\\
\hspace{0.85cm}  Cornell University\\
\hspace{0.95cm}  \texttt{qiaomin.xie@cornell.edu}
  \And
\hspace{0.83cm}Tamer Ba\c{s}ar\\
  \hspace{0.88cm}ECE and CSL\\
  \hspace{0.9cm}University of Illinois at Urbana-Champaign\\
   \hspace{0.9cm}\texttt{basar1@illinois.edu}
}

\begin{document}

\maketitle
\begin{abstract}
Monte-Carlo planning, as exemplified by Monte-Carlo Tree Search (MCTS), has demonstrated remarkable performance in applications with finite spaces. In this paper, we consider Monte-Carlo planning in an environment with continuous state-action spaces, a much less understood problem with important applications in control and robotics. We introduce \texttt{POLY-HOOT}, an algorithm that augments MCTS with a continuous armed bandit strategy named Hierarchical Optimistic Optimization (HOO) \citep{bubeck2011x}. Specifically, we enhance HOO by using an appropriate \emph{polynomial}, rather than \emph{logarithmic}, bonus term in the upper confidence bounds.  Such a polynomial bonus is motivated by its empirical successes in AlphaGo Zero~\citep{silver2017mastering}, as well as its significant role in achieving theoretical guarantees of finite space MCTS~\citep{shah2019reinforcement}. We investigate, for the first time, the regret of the enhanced HOO algorithm in non-stationary bandit problems. Using this result as a building block, we establish non-asymptotic convergence guarantees for \texttt{POLY-HOOT}: the value estimate converges to an arbitrarily small neighborhood of the optimal value function at a polynomial rate. We further provide experimental results that corroborate our theoretical findings. 
\end{abstract}

\section{Introduction}
Monte-Carlo tree search (MCTS) has recently demonstrated remarkable success in deterministic games, especially in the game of Go~\citep{silver2017mastering}, Chess and Shogi~\citep{silver2017masteringchess}. It is also among the very few viable approaches to problems with partial observability, e.g., Poker~\citep{rubin2011computer}, and problems involving highly complicated strategies like real-time strategy games~\citep{uriarte2014game}. However, most Monte-Carlo planning solutions only work well in finite state and action spaces, and are generally not compatible with continuous action spaces with enormous branching factors. Many important applications such as robotics and control require planning in a continuous state-action space, for which feasible solutions,  especially those with theoretical guarantees, are scarce. In this paper, we aim to develop an MCTS method for \emph{continuous} domains with \emph{non-asymptotic  convergence} guarantees.

Rigorous analysis of MCTS is highly non-trivial even in finite spaces. One crucial difficulty stems from the fact that the state-action value estimates in MCTS are non-stationary over multiple simulations, because the policies in the lower levels of the search tree are constantly changing. Due to the strong non-stationarity and interdependency of rewards, the reward concentration hypothesis made in the seminal work of~\cite{kocsis2006bandit}---which provides one of the first theoretical analysis of bandit-based MCTS---turns out to be unrealistic. Hence, the convergence analysis given in~\cite{kocsis2006bandit} is unlikely to hold in general.
Recently a rigorous convergence result is established in~\cite{shah2019reinforcement}, based on further investigation of \emph{non-stationary multi-armed bandits} (MABs).

Besides the non-stationarity issue inherent in MCTS analysis, an additional challenge for  continuous domains lies in balancing the trade-off between generating fine-grained samples across the entire continuous action domain to ensure optimality, and guaranteeing sufficient exploitation of the sampled actions for accurate estimations. To tackle this challenge, a natural idea is to manually discretize the action space and then solve the resulting discrete problem using a discrete-space planning algorithm. However, this approach inevitably requires a hyper-parameter pre-specifying the level of discretization, which in turn leads to a fundamental trade-off between the computational complexity and the optimality of the planning solution: coarse discretization often fails to identify the optimal continuous action, yet fine-grained discretization leads to a large action space and heavy computation. 

In this paper, we consider Monte-Carlo planning in continuous space Markov Decision Processes (MDPs) without manually discretizing the action space. Our algorithm integrates MCTS with a continuous-armed bandit strategy, namely Hierarchical Optimistic Optimization (HOO)~\citep{bubeck2011x}. Our algorithm \emph{adaptively partitions}  the action space and quickly identifies the region of potentially optimal actions in the continuous space, which alleviates the inherent difficulties encountered by pre-specified discretization. The integration of MCTS with HOO has been empirically evaluated in~\cite{mansley2011sample}, under the name of the Hierarchical Optimistic Optimization applied to Trees (HOOT) algorithm. HOOT directly replaces the UCB1 bandit algorithm~\citep{auer2002finite} used in finite-space MCTS with the HOO strategy. However, this algorithm has a similar issue as that in~\cite{kocsis2006bandit}, as they both use a \emph{logarithmic}  bonus term for bandit exploration instead of a \emph{polynomial} term. As pointed out in~\cite{shah2019reinforcement} and mentioned above, convergence guarantees of these algorithms are generally unclear due to the lack of concentration of non-stationary rewards. In this work, we enhance the HOO strategy with a polynomial bonus term to account for the non-stationarity. As we will show in our theoretical results, our algorithm, Polynomial Hierarchical Optimistic Optimization applied to Trees (\texttt{POLY-HOOT}), provably converges to an arbitrarily small neighborhood of the optimum at a polynomial rate.

\noindent\textbf{Contributions.} First, we enhance the continuous-armed bandit strategy HOO, and analyze its regret concentration rate in a non-stationary setting, which may also be of independent theoretical interest in the context of bandit problems. Second, we build on the enhanced HOO to design a Monte-Carlo planning algorithm \texttt{POLY-HOOT} for solving continuous space MDPs. Third, we generalize the recent analytical framework developed for finite-space MCTS~\citep{shah2019reinforcement} and prove that the value estimate of \texttt{POLY-HOOT} converges to an arbitrarily small neighborhood of the optimal value function at a polynomial rate. We note that HOOT is among the very few MCTS algorithms for continuous spaces and popular in practice. \texttt{POLY-HOOT}  improves upon HOOT and provides theoretical justifications thereof. Finally, we present experimental results which corroborate our theoretical findings and demonstrate the superior performance of \texttt{POLY-HOOT}.

\noindent\textbf{Related Work.} 
One of the most popular MCTS methods is the Upper Confidence Bounds applied to Trees (UCT) algorithm~\citep{kocsis2006bandit}, which applies the  UCB1~\citep{auer2002finite} bandit algorithm for action selection. A convergence result of UCT is provided in~\cite{kocsis2006bandit}. However, this result relies on the assumption that bandit regrets under UCB1 concentrate exponentially, which is unlikely to hold in general. Recent work in~\cite{shah2019reinforcement} provides a complete analysis of UCT through a further study of non-stationary bandit algorithms using polynomial bonus. Our analysis falls into the general framework proposed therein. We note that many variations and enhancements of MCTS have been developed~\citep{coquelin2007bandit,schadd2008single,kaufmann2017monte,xiao2019maximum,jonsson2020}; we refer interested readers to a survey by~\cite{browne2012survey}. We remark that most variants are restricted to finite-action problems.

MCTS for continuous-space MDPs has been relatively less studied. In the literature a progressive widening (PW) technique~\citep{chaslot2007progressive,auger2013continuous} is often used to discretize the action space and ensure sufficient exploitation. However, PW mainly concerns \emph{when} to sample a new action, but not \emph{how}. For example, \cite{auger2013continuous} draws an action uniformly at random, which is sample-inefficient compared to our bandit-based action selection.  
Popular in empirical work is the HOOT algorithm in~\citep{mansley2011sample}, which directly replaces the UCB1 bandit strategy in UCT with HOO. This work does not provide theoretical guarantees, and given the non-stationarity of the bandit rewards, there is a good reason to believe that a more sophisticated variant of HOO is needed. An open-loop planning solution named Hierarchical Open-Loop Optimistic Planning (HOLOP) is proposed and empirically evaluated in~\cite{weinstein2012bandit}. In~\cite{yee2016monte}, MCTS is combined with kernel regression, and the resulting algorithm demonstrates good empirical performance.  More recently, \cite{kimmonte} proposes to partition the continuous space based on the Voronoi graph, but they focus on deterministic rewards and do not utilize bandits to \emph{guide the exploration and exploitation} of actions, which is the main focus of our work.

\noindent\textbf{Outline.}
The rest of the paper is organized as follows: In Section~\ref{sec:preliminaries}, we introduce the mathematical formulation and some preliminaries. In Section~\ref{sec:algorithm}, we present our \texttt{POLY-HOOT} algorithm. In Section~\ref{sec:analysis}, we provide our analysis of the non-stationary bandits and our main results on the convergence of \texttt{POLY-HOOT}. Simulation results are provided in Section~\ref{sec:simulations}. Finally, we conclude our paper in Section~\ref{sec:conclusions}. The detailed algorithms and proofs of the theorems can be found in the appendix.

\section{Preliminaries}\label{sec:preliminaries}

\subsection{Markov Decision Processes}

We consider an infinite-horizon discounted MDP defined by a 5-tuple $(S, A, T, R, \gamma)$,  where $S\subseteq \rr^n$ is the continuous state space, $A\subseteq \rr^m$ the continuous action space, $T:S\times A \ra S$ the deterministic transition function, $R:S\times A\ra [-R_{max}, R_{max}]$ the (bounded) stochastic reward function, and $\gamma \in (0,1)$ is the discount factor. We do not require $S$ and $A$ to be compact, thus our theory covers many control applications with possibly unbounded state-action spaces. 
The assumption of deterministic state transitions is common in the MCTS literature~\citep{browne2012survey,shah2019reinforcement,kimmonte}, as MCTS was historically introduced and popularly utilized in problems like Go~\citep{gelly2006modification,silver2017mastering} and Atari games~\citep{guo2014deep}. For simplicity we use the notation $ s\circ a \triangleq T(s,a) $ to denote the next state deterministically reached by taking action $a\in A$ at the current state $s\in S$. 

A policy $\pi:S\ra A$ specifies the action $ a=\pi(s) $ taken at state $ s $. The value function $V^\pi:S\ra \rr$ of a policy $\pi$ is defined as the expected discounted sum of rewards following $\pi$ starting from the current state $s\in S$, i.e., $V^{\pi}(s)=\mathbb{E}_{\pi}\left[\sum_{t=0}^{\infty} \gamma^{t} R\left(s_{t}, a_{t}\right) | s_{0}=s\right]$. Similarly, define the state-action value function $Q^{\pi}(s,a)=\mathbb{E}_{\pi}\left[\sum_{t=0}^{\infty} \gamma^{t} R\left(s_{t}, a_{t}\right) | s_{0}=s, a_0 = a\right]$. The planner aims to find an optimal policy $\pi^*$ that achieves the maximum value $ V^{\pi^*}(s) = V^*(s) \triangleq \sup_\pi V^\pi(s)$ for all $s\in S$. 

We consider the problem of computing the optimal value function for any given input state, with access to a generative model (or simulator) of the MDP. A generative model provides a randomly sampled next state and reward, when given any state-action pair $(s,a)$ as input. Our algorithms and results readily extend to learning the optimal policy or Q-function.

\subsection{Monte-Carlo Tree Search}

To estimate the optimal value of a given state, Monte-Carlo tree search (MCTS) builds a multi-step look-ahead tree, with the state of interest as the root node, using Monte-Carlo simulations~\citep{browne2012survey}. Each node in the tree represents a state, and each edge represents a state-action pair that leads to a child node denoting the subsequent state. 
At each iteration, starting from the root node, the algorithm selects actions according to a \emph{tree policy} and obtains samples from the generative model until reaching a leaf node. An estimate for the value of leaf node can be either obtained by simulations of a roll-out policy or given by some function approximation. The leaf node estimate and samples generated along the path are then backed-up to update the statistics of selected nodes. The tree policy plays a key role of balancing exploration-exploitation. The most popular tree policy is UCT~\citep{kocsis2006bandit}, which selects children (actions) according to the Upper Confidence Bound (UCB1)~\citep{auer2002finite} bandit algorithm. Note that UCT, and most variants thereof, are restricted to the finite action setting. 

A major challenge in the theoretical analysis of any MCTS algorithm is the \emph{non-stationarity} of bandit rewards. Specifically, since the policies at the lower level bandits of MCTS are constantly changing, the reward sequences for each bandit agent drift over time, causing the reward distribution to be highly non-stationary. The performance of each bandit depends on the results of a chain of bandits at the lower levels, and this hierarchical inter-dependence of bandits makes the analysis highly non-trivial.
A complete solution to address this non-stationarity has been given recently in~\cite{shah2019reinforcement}, where the authors inductively show the polynomial concentration of rewards by leveraging a non-stationary bandit algorithm with a \emph{polynomial} bonus term. Our approach in the continuous case is based upon a similar reasoning as in~\cite{shah2019reinforcement}.

\subsection{Hierarchical Optimistic Optimization}

HOO~\citep{bubeck2011x} is an extension of finite-armed bandit algorithms to problems with arms living in an arbitrary measurable space, e.g., the Euclidean space. HOO incrementally builds a binary tree covering of the continuous action space $X$. Each node in the tree covers a subset of $X$. This subset is further divided into two, corresponding to the two child nodes. HOO selects an action by following a path from the root node to a leaf node, and at each node it picks the child node that has the larger upper confidence bound (to be precise, larger $B$-value; see equation~\eqref{eqn:Bvalue}) for the reward. In this manner, HOO adaptively subdivides the action space and quickly focuses on the area where potentially optimal actions lie in. 

Following the notations in~\cite{bubeck2011x}, we index the nodes in the above HOO tree  
by pairs of integers $(h,i)$,\footnote{We use $h$ and $H$ to index the depth in the HOO tree, and use $d$ and $D$ to index the depth in the MCTS tree.} where $h\geq 0$ denotes the depth of the node, and $1\leq i\leq 2^h$ denotes its index on depth $h$. In particular, the root node is $(0,1)$; the two children of $(h,i)$ are $(h+1,2i-1)$ and $(h+1,2i)$. Let $\mc{P}_{h,i}\subseteq X$ be the domain covered by the node $(h,i)$. By definition, we have $\mc{P}_{0,1} = X$ and $
\mathcal{P}_{h, i}=\mathcal{P}_{h+1,2 i-1} \cup \mathcal{P}_{h+1, 2 i}
,\forall  h \geq 0 \text { and } 1 \leq  i \leq 2^{h}$. Let $\mc{C}(h,i)$ denote the set of all descendants of node $(h,i)$. Let $(H_t,I_t)$ denote the node played by HOO at round $t$, with observed reward $ Y_t $. Then the number of times that a descendant of $(h,i)$ has been played up to and including round $n$ is denoted by $T_{h, i}(n)=\sum_{t=1}^{n} \indicator_{\left\{\left(H_{t}, I_{t}\right) \in \mathcal{C}(h, i)\right\}},$ and the empirical average of rewards is defined as $\widehat{\mu}_{h, i}(n)=\frac{1}{T_{h, i}(n)} \sum_{t=1}^{n} Y_{t} \indicator_{\left\{\left(H_{t}, I_{t}\right) \in \mathcal{C}(h, i)\right\}}.$ 

In the original HOO algorithm of~\citet{bubeck2011x}, the upper confidence bound of a node $(h,i)$ is constructed using a logarithmic bonus term: 
\begin{equation}\label{eqn:upperconfidence}
U_{h, i}(n)=\left\{\begin{array}{ll}\widehat{\mu}_{h, i}(n)+ \sqrt{\frac{2\ln n}{T_{h,i}(n)}} +\nu_{1} \rho^{h}, & \text { if } T_{h, i}(n)>0, \\ \infty, & \text { otherwise ,}\end{array}\right.
\end{equation}%
where $\nu_1$ and $\rho$ are two constants that characterize the reward function and the action domain.
Given $ U_{h,i}(n), $ one further introduces a critical quantity termed the $B$-values:
\begin{equation}\label{eqn:Bvalue} 
B_{h, i}(n)=\left\{\begin{array}{ll}\min \left\{U_{h, i}(n), \max \left\{B_{h+1,2 i-1}(n), B_{h+1,2 i}(n)\right\}\right\}, & \text { if }(h, i) \in \mathcal{T}_{n}, \\ \infty, & \text { otherwise, }\end{array}\right.
\end{equation} 
where $\mc{T}_n$ is the set of nodes that are already included in the binary tree at round $n$. Starting from the root node, HOO iteratively selects a child node with a larger $B$-value until it reaches a leaf node, which corresponds to an arm of the bandit to be pulled.

\section{Algorithm: \texttt{POLY-HOOT} }\label{sec:algorithm}

Our algorithm for continuous space MCTS, Polynomial Hierarchical Optimistic Optimization applied to Trees (\texttt{POLY-HOOT}), is presented in Algorithm~\ref{alg:POLY-HOOT}.

\begin{algorithm}[!htbp]
	\textbf{Input:} value oracle at leaf nodes $\hat{V}$, root node $s^{(0)}$, maximum search depth $D$, number of MCTS simulations $n$, and parameters $\{\alpha^{(i)}\}_{i=0}^{D-1},\{\xi^{(i)}\}_{i=0}^{D-1},\{\eta^{(i)}\}_{i=0}^{D-1}$.
	
	\textbf{Output:} value estimate of the root node $s^{(0)}$.\\
	\For{simulation round $t \gets 1$ to $n$}
	{
		\For{depth $d \gets 0 $ to $D-1$}
		{
			$a^{(d)}\gets$ $HOO\_query(d, s^{(d)}, t)$ with depth limitation $\bar{H}$\;
			$r^{(d)} \sim R(s^{(d)}, a^{(d)})$\;
			$s^{(d+1)}\gets s^{(d)}\circ a^{(d)}$\;
		}
		$r^{(D)}(s^{(D)}) \gets \hat{V}(s^{(D)})$\;
		\For{depth $d \gets 0 $ to $D-1$}
		{
			$Y^{(d)} \gets  r^{(d)}+\gamma r^{(d+1)}+\cdots+\gamma^{D-d-1} r^{(D-1)}+\gamma^{D-d} r^{(D)}(s^{(D)})$\;
			$\tilde{v}^{(d)}(s^{(d)})\gets \tilde{v}^{(d)}(s^{(d)})+Y^{(d)}$\;
			$HOO\_update(d, s^{(d)},t, Y^{(d)})$ using parameters $\alpha^{(d)},\xi^{(d)}$ and $\eta^{(d)}$\;
		}
	}
	\Return $\tilde{v}^{(0)}(s^{(0)})/n$.
	\caption{\texttt{POLY-HOOT}}\label{alg:POLY-HOOT}
\end{algorithm}

\texttt{POLY-HOOT} follows a similar framework as the classic UCT algorithm, but has the following critical enhancements to handle continuous spaces with provable convergence guarantees.

\textbf{1. HOO-Based Action Selection.} We replace the discrete UCB1 bandit agent with a continuous-armed HOO agent. In this case, each node in the Monte-Carlo tree is itself a HOO tree. In particular, \texttt{POLY-HOOT} invokes the HOO algorithm through two functions: the $HOO\_query$ function selects actions; after the action is taken and the reward is realized, the $HOO\_update$ function updates the reward information at each HOO agent along the Monte-Carlo sampling path. Detailed descriptions are provided in Appendix~\ref{appendix:algorithm}.

\textbf{2. Polynomial Bonus.} We replace the logarithmic bonus term used in the original HOO algorithm (Equation~\eqref{eqn:upperconfidence}) with a polynomial term. In particular, our algorithm constructs the upper confidence bound of a node $(h,i)$ as follows:
$$U_{h, i}(n)=\left\{\begin{array}{ll}\widehat{\mu}_{h, i}(n)+ n^{\alpha^{(d)}/\xi^{(d)}} T_{h,i}(n)^{\eta^{(d)} -1} +\nu_{1} \rho^{h}, & \text { if } T_{h, i}(n)>0, \\ \infty, & \text { otherwise ,}\end{array}\right.$$
where $\alpha^{(d)}, \xi^{(d)}$ and $\eta^{(d)}$ are constants to be specified later for each depth $d$ in MCTS. 
As shall become clear in the analysis, this polynomial bonus is critical in establishing convergence of MCTS. In particular, MCTS involves a hierarchy of bandits with non-stationary rewards, for which logarithmic bonus is no longer appropriate and does not guarantee (even asymptotic) convergence. Interestingly, the empirically successful AlphaGo Zero also uses polynomial bonus~\citep{silver2017mastering}. As in the original HOO, our algorithm navigates down the HOO tree using the $B$-value defined in~\eqref{eqn:Bvalue}, except that we plug in the above polynomial upper confidence bound $ U_{h,i}(n). $

\textbf{3. Bounded-Depth HOO Tree.} We place an upper bound $\bar{H}$ on the maximum depth of the HOO tree. Every time we reach a node at the maximum depth, the algorithm repeats the action  taken previously at that node. As such, our enhanced HOO stops exploring new actions after trying sufficiently many actions. In the original HOO strategy, the tree is allowed to extend infinitely deep, so that the action space can be discretized into arbitrarily fine granularity. 
When the bandit rewards are non-stationary, as in MCTS, this strategy might overlook the long-term optimal action and get stuck in a suboptimal area in the early stage of the tree search. 
On the contrary, our bounded depth HOO tree ensures that the actions already explored will be fully exploited against the non-stationarity of rewards. Our analysis shows that as long as the total number of actions tried is sufficiently large (i.e., $\bar{H}$ is chosen large enough), our algorithm still converges to an arbitrarily small neighborhood of the optimal value.

\subsection{Analysis Setup}\label{sec:setup}

Setting the stage for our theoretical analysis, we introduce several useful notations. For each HOO agent, let $X\subseteq A \subseteq [0,1]^m$ denote the continuous set of actions (i.e., arms) available at the current state. Each arm $x\in X$ is associated with a stochastic payoff distribution, which corresponds to the ``cost-to-go'' or $Q$-value of taking action $x$ at the current state of the MDP. The expectation of this reward function at time $ t $ is denoted by $f_t(x): X\ra \rr$, which is also termed the temporary mean-payoff function at time $t$. Note that in MCTS the temporary mean-payoff functions are non-stationary over time because the cost-to-go of an action depends on the actions to be chosen later in the lower levels of MCTS. Let $f$ be the limit of $f_t$ in the sense that $f_t$ converges to $f$ in $L^\infty$ at a polynomial rate:
$\norm{f_t-f}_\infty \leq  \frac{C}{t^\zeta},\ \forall t\geq 1$ for some constant $C>0$ and $\zeta \in (0,\frac{1}{2})$. The precise definition of $f_t$ and $f$, as well as the convergence of $f_t$, is formally established  in Theorem~\ref{thm:bandit}. We call $f$ the limiting mean-payoff function (or simply the mean-payoff function).

Since the rewards of the MDP are bounded by $R_{max}$, the bandit payoff for each node at depth $d$ is bounded by ${R_{max}}/{(1-\gamma)}$, and so is the limiting mean-payoff $f$ function.
Let $f^* = \sup_{x\in X} f(x)$ denote the optimal payoff at an HOO agent, and the random variable $X_t$ denote the arm selected by the agent at round $t$. The agent aims to minimize the regret in the first $n$ rounds: $R_n \triangleq nf^* - \sum_{t=1}^nY_t$, where $Y_t$ is the observed payoff of pulling arm $X_t$ at round $t$, with $\e{Y_t} = f_t(X_t)$.

We state two assumptions that  will be utilized throughout our analysis. These two assumptions are similar to Assumptions A1 and A2 in~\cite{bubeck2011x}. For each HOO agent in MCTS, given the parameters $\nu_1$ and $\rho\in (0,1)$, and the tree of coverings $(\mc{P}_{h,i})$, we assume that there exists a dissimilarity function $\ell: X\times X \to [0,\infty]$ such that the following holds.

\begin{assumption}\label{assumption:1}
	There exists a constant $\nu_2 >0$, such that for all integers $h\geq 0$,
	\begin{enumerate}[label=(\alph*)]
		\setlength{\itemsep}{0pt}
		\setlength{\parskip}{0pt}
		\vspace{-0.1in}
		\item diam$(\mc{P}_{h,i})\leq \nu_1 \rho^h, \forall 1\leq i \leq 2^h$, where diam$(A)\triangleq \sup_{x,y\in A}\ell(x,y)$;
		\item there exists an $x_{h,i}^\circ \in \mc{P}_{h,i}$, such that $\mathcal{B}_{h, i} \triangleq \mathcal{B}\left(x_{h, i}^{\circ}, \nu_{2} \rho^{h}\right) \subset \mc{P}_{h, i}, \forall 1 \leq i \leq 2^h$, where $\mc{B}(x,\varepsilon) \triangleq \{y\in X: \ell(x,y)<\varepsilon\}$ denotes an open ball centered at $x$ with radius $\varepsilon$;
		\item $\mathcal{B}_{h, i} \cap \mathcal{B}_{h, j}=\emptyset$ for all $1 \leq  i<j \leq  2^{h}$.
	\end{enumerate}
\end{assumption}
\begin{remark}
	Assumption~\ref{assumption:1} ensures that the diameter of $\mc{P}_{h,i}$ shrinks at a geometric rate as $h$ grows. {This is a mild assumption, which holds automatically in, e.g., compact Euclidean spaces. In particular, if the action space is a hyperrectangle, then} Assumption~\ref{assumption:1} is satisfied by setting the dissimilarity function $\ell$ to be some positive power of the Euclidean norm. For example, suppose that the action space is $[0,1]^2$. The tree covering can be generated by cutting the hyperrectangle of $\mc{P}_{h,i}$ at the midpoint of its longest side (ties broken arbitrarily) to obtain $\mc{P}_{h+1,2i-1}$ and $\mc{P}_{h+1,2i}$. Assumption~\ref{assumption:1} is satisfied with $\ell$ being the Euclidean norm and the parameters $\rho = \frac{1}{2}, \nu_1 = 8$, and $\nu_2 = \frac{1}{4}$. The general form of Assumption~\ref{assumption:1} allows more flexibility in the choice of $\ell$. 
\end{remark}

\begin{assumption}[Smoothness]\label{assumption:2}
	 The limiting mean-payoff function satisfies:
	\[
	f^{*}-f(y) \leq  f^{*}-f(x)+\max \left\{f^{*}-f(x), \ell(x, y)\right\}, \quad \forall x,y\in X.
	\]
\end{assumption}
\begin{remark}
Assumption~\ref{assumption:2} requires some smoothness of the mean-payoff function, and is milder than the common Lipschitz continuity  assumption $\abs{f(x)-f(y)}\leq \ell(x,y),\forall x,y\in X$.
In particular, it requires Lipschitz continuity only in the neighborhood of any global optimal arm $x^*$, and imposes a weaker constraint for other $x\in X$.  In the context of MDPs, this assumption stipulates that the $Q(s,a)$ function, after $d \in [1,D)$ steps of value iterations starting from $\hat{V}$, is a Lipschitz continuous function of the action $a$. Assumption~\ref{assumption:2} is satisfied by, e.g., Lipschitz MDPs~\citep{asadi2018lipschitz},\footnote{This is the class of MDPs whose reward functions and (possibly deterministic) state transitions satisfy certain smoothness criteria with respect to, say, the Wasserstein metric. As observed in~\cite{asadi2018lipschitz}, the Wasserstein metric is often more appropriate than  the Kullback-Leibler divergence metric in Lipschitz MDPs.} although this assumption holds much more generally. 
\end{remark}

\section{Main Results}\label{sec:analysis}
In this section, we present our main results. Theorem~\ref{thm:mcts} establishes the non-asymptotic convergence rate of \texttt{POLY-HOOT}. Theorem~\ref{thm:bandit} characterizes the concentration rates of regret of enhanced HOO in a non-stationary bandit setting; this result serves as an important intermediate step in the analysis of \texttt{POLY-HOOT}. The proofs for Theorems~\ref{thm:mcts} and~\ref{thm:bandit} are given in Appendices~\ref{appendix:mcts} and~\ref{appendix:bandit}, respectively. 

\subsection{Convergence of \texttt{POLY-HOOT}}

\begin{theorem}\label{thm:mcts}
	Consider an MDP that satisfies Assumptions~\ref{assumption:1} and ~\ref{assumption:2}. For any $D\geq 1$, run $n$ rounds of MCTS simulations with parameters specified as follows:
	\begin{equation}\label{eqn:parameters}
	\begin{aligned}
	\alpha^{(d)} &= \left(1-\eta^{(d)}\right)\eta^{(d)}\xi^{(d)},\ &0 \leq d\leq D-1,\\
	\xi^{(d-1)} &= \left(\alpha^{(d)}-3\right)/2,\ &1 \leq d \leq D-1,\\
	\eta^{(d-1)} &= \frac{\frac{\alpha^{(d)}}{\xi^{(d)}(1-\eta^{(d)})}+d'+\frac{1}{1-\eta^{(d)}}}{1+d'+\frac{1}{1-\eta^{(d)}}},\ &1 \leq d \leq D-1,
	\end{aligned}
	\end{equation}
	where $d'>0$ is a constant to be specified in Definition~\ref{dfn:d'} (Appendix~\ref{appendix:bandit}). Suppose that $\xi^{(D-1)}>0$ and $\frac{1}{2} \leq \eta^{(D-1)} < 1$ are chosen large enough such that $\alpha^{(0)} > 3$, and  $\bar{H}$ satisfies $\rho^{\bar{H}} < n^{\eta^{(0)}-1}$. Then for each query state $s\in S$, the following result holds for the output $\hat{V}_n(s)$ of Algorithm~\ref{alg:POLY-HOOT}:
	\[
	\left|\mathbb{E}\left[\hat{V}_{n}(s)\right]-V^{*}(s)\right| \leq  O\left(\frac{1}{n^{\zeta}}\right) +  \gamma^D \varepsilon_0,
	\]
	where $\zeta \in (0,\frac{1}{2})$ satisfies $\zeta \leq 1-\eta^{(d)}, \forall\ 0\leq d\leq D-1$, and $\varepsilon_{0}=\big\|\hat{V}-V^{*}\big\|_{\infty}$ is the error in the value function oracle at the leaf nodes. 
\end{theorem}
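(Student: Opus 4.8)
The plan is to prove the statement by backward induction on the depth $d$ of the MCTS tree, propagating from the leaf level $d=D$ up to the root $d=0$, with Theorem~\ref{thm:bandit} serving as the inductive engine at every level. For $0\le d\le D$ and a state $s$ reachable at depth $d$, let $V^{*}_{D-d}(s)$ denote the result of applying $D-d$ exact Bellman backups to the leaf oracle: $V^{*}_0=\hat V$ and $V^{*}_{k+1}(s)=\sup_{a\in A}\{\mathbb{E}[R(s,a)]+\gamma V^{*}_k(s\circ a)\}$. Since the Bellman optimality operator is a $\gamma$-contraction in $\norm{\cdot}_\infty$, we get the deterministic truncation bound $\norm{V^{*}_{D}-V^{*}}_\infty\le \gamma^{D}\norm{\hat V-V^{*}}_\infty=\gamma^{D}\varepsilon_0$, which supplies the second term of the claimed estimate. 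It therefore suffices to prove $\abs{\mathbb{E}[\hat V_n(s)]-V^{*}_D(s)}\le O(n^{-\zeta})$.

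\textbf{Inductive claim.} I would show, for $d=D-1,D-2,\dots,0$, that the HOO agent at any depth-$d$ node $s$ satisfies: (i) its temporary mean-payoff functions $f^{(d)}_t$ — where $f^{(d)}_t(a)$ is the conditional expectation of the backed-up sample $Y^{(d)}$ given that arm $a$ is pulled at round $t$ of that agent — have limit $f^{(d)}(a)=\mathbb{E}[R(s,a)]+\gamma V^{*}_{D-d-1}(s\circ a)$ and obey the polynomial rate $\norm{f^{(d)}_t-f^{(d)}}_\infty\le C_d\,t^{-\zeta}$ demanded by Theorem~\ref{thm:bandit}; and (ii) consequently, applying Theorem~\ref{thm:bandit} at this agent together with the identification $f^{*,(d)}=\sup_a f^{(d)}(a)=V^{*}_{D-d}(s)$ (which uses Assumption~\ref{assumption:2} so that the supremum has the right near-optimality structure and the HOO partition resolves it), the averaged estimate $\tilde v^{(d)}(s)/t=\tfrac1t\sum_{\tau\le t}Y^{(d)}_\tau$ concentrates around $V^{*}_{D-d}(s)$ with $\abs{\mathbb{E}[\tilde v^{(d)}(s)/t]-V^{*}_{D-d}(s)}\le O(t^{-\zeta})$. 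The base case $d=D$ is immediate: $Y^{(D)}=\hat V(s^{(D)})=V^{*}_0(s^{(D)})$ has no drift and no noise.

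\textbf{Propagating one level.} The step $d+1\to d$ is where the work lies. When the depth-$d$ agent pulls an arm $a$ at its $t$-th round, the child $s\circ a$ is visited for some count $m$, and $Y^{(d)}$ is $R(s,a)$ plus $\gamma$ times the depth-$(d+1)$ backed-up sample from $s\circ a$ on that visit; averaging over the arms the agent has pulled inside a cell $\mc{P}_{h,i}$ ties $\widehat\mu_{h,i}$ to the depth-$(d+1)$ running averages, which the inductive hypothesis controls. Combining this with the Lipschitz-type control furnished by Assumptions~\ref{assumption:1}--\ref{assumption:2} on the variation of $f^{(d)}$ within a cell of diameter $\nu_1\rho^{h}$, and with the bounded-depth condition $\rho^{\bar H}<n^{\eta^{(0)}-1}$ (which forces the finest-cell discretization bias $\nu_1\rho^{\bar H}$ below the bonus/noise scale, so truncating the HOO tree costs nothing to leading order), yields $\norm{f^{(d)}_t-f^{(d)}}_\infty\le C_d t^{-\zeta}$. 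One then invokes Theorem~\ref{thm:bandit} at depth $d$: the parameters $(\alpha^{(d)},\xi^{(d)},\eta^{(d)})$ are exactly the solution of the recursion~\eqref{eqn:parameters}, engineered so that the exponent produced by Theorem~\ref{thm:bandit} at level $d$ is again $t^{-\zeta}$ with the same $\zeta$ (consistent with $\zeta\le 1-\eta^{(d)}$), and so that the theorem's side conditions ($\xi^{(d)}>0$, $\tfrac12\le\eta^{(d)}<1$, and the chain culminating in $\alpha^{(0)}>3$) hold. Unrolling to $d=0$ gives $\abs{\mathbb{E}[\hat V_n(s)]-V^{*}_D(s)}\le O(n^{-\zeta})$, and adding the truncation bound $\gamma^{D}\varepsilon_0$ completes the proof.

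\textbf{Main obstacle.} The crux is part (i): the uniform polynomial convergence $\norm{f^{(d)}_t-f^{(d)}}_\infty\le C_d t^{-\zeta}$ of the non-stationary payoff seen by a depth-$d$ HOO agent. Unlike the finite-action UCT analysis, the number of visits to a child state is not a clean function of an arm's pull count, because HOO's adaptive binary partition can "open" a cell late and then refine it; one must therefore simultaneously control (a) that every sufficiently-visited cell's children have themselves accumulated enough depth-$(d+1)$ samples for the inductive rate to take effect, and (b) that the error of approximating $f^{(d)}$ by a cell-wise constant (governed by $\rho^{h}$ via Assumption~\ref{assumption:2}) is dominated by $t^{-\zeta}$ for all $h\le\bar H$. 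Pinning down the constants $C_d$ and the inequality $\zeta\le 1-\eta^{(d)}$ so that the induction closes, while keeping the parameter recursion~\eqref{eqn:parameters} self-consistent, is the delicate part; the rest is bookkeeping layered on Theorem~\ref{thm:bandit} and the Bellman contraction.
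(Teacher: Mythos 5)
Your proposal follows essentially the same route as the paper's proof: the Bellman $\gamma$-contraction supplies the $\gamma^{D}\varepsilon_0$ term, and a backward induction over MCTS depth applies Theorem~\ref{thm:bandit} at each level to propagate convergence and concentration from the depth-$(d+1)$ agents to the depth-$d$ agent, with the limit mean-payoff identified as $f^{(d)}(a)=\mathbb{E}[R(s,a)]+\gamma V^{*}_{D-d-1}(s\circ a)$. The one obstacle you single out --- that child-visit counts need not track arm-pull counts under HOO's adaptive partition --- is dissolved in the paper by the bounded-depth HOO tree combined with deterministic transitions: only depth-$\bar{H}$ HOO leaves are ever pulled more than once, each such leaf plays a single fixed action leading to a unique child state (so the $t$-th pull of that arm is exactly the $t$-th visit of that child, and each MCTS level has finitely many nodes), whence the required rate $\left\|f^{(d)}_t-f^{(d)}\right\|_\infty\le C_d t^{-\zeta}$ is read off directly from the inductive hypothesis, after folding in the i.i.d.\ immediate reward $R(s,a)$ via a Hoeffding-plus-union-bound step (Lemma~\ref{lemma:x+y}).
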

\begin{proofsketch}
	MCTS can be viewed as a hierarchy of multi-armed bandits (in our case, continuous-armed bandits), one per each  node in the tree. In particular, the rewards of the bandit associated with each intermediate node are the rewards generated by the bandit algorithms for nodes downstream. Since the HOO policy is changing to balance exploitation-exploration, the resulting rewards are non-stationary. With this observation, the proof for Theorem~\ref{thm:mcts} can be broken down to the following three steps: 
	
	\textbf{1. Non-stationary bandits. } The first step concerns the analysis of a non-stationary bandit, which models the MAB at each node on the MCTS search tree. In particular, we show that if the rewards of a continuous-armed bandit problem satisfy certain convergence and concentration properties, then the regret induced by the enhanced HOO algorithm satisfies similar convergence and concentration guarantees. 
	The result is formally established in Theorem~\ref{thm:bandit}.

	\textbf{2. Induction step. } Since the rewards collected at one level of bandits constitute the bandit rewards of the level above it, we can apply the results of Step 1 recursively, from level $D-1$ upwards to the root node. We inductively show that the bandit rewards at each level $d$ of MCTS satisfy the properties required by Theorem~\ref{thm:bandit}, and hence we can propagate the convergence and concentration properties to  the bandit at level $d-1$, using the results of Theorem~\ref{thm:bandit}. The convergence result for the root node is  established by induction. 
	
	\textbf{3. Error from the oracle. } Finally, we consider the error induced by the leaf node estimator, i.e., the value function oracle $\hat{V}$. Given a value function oracle $\hat{V}$ for the leaf nodes, a depth-$D$ MCTS can be effectively viewed as $D$ steps of value iteration starting from $\hat{V}$~\citep{shah2019reinforcement}. Therefore, the error in the value function oracle $\hat{V}$ shrinks at a geometric rate of $\gamma$ due to the contraction mapping. 
\end{proofsketch}

Theorem~\ref{thm:mcts} implies that the value function estimate obtained by Algorithm~\ref{alg:POLY-HOOT} converges to the $\gamma^D\epsilon_0$-neighborhood of the optimal value function at a rate of $O(n^{-\zeta})$, where $\zeta \in (0,\frac{1}{2})$ depends on the parameters $\alpha^{(D-1)},\xi^{(D-1)}$, and $\eta^{(D-1)}$ we choose. Therefore, by setting the depth $D$ of MCTS appropriately, Algorithm~\ref{alg:POLY-HOOT} can output an estimate that is within an arbitrarily small neighborhood around the optimal values. 

\smallskip

\begin{remark}
	We remark on several technical challenges in the proof of Theorem~\ref{thm:mcts}. The first challenge is to transform a hierarchy of inter-dependent bandits into a recursive sequence of non-stationary bandit problems with unified form, which is highly non-trivial even in the finite case~\citep{shah2019reinforcement}. As far as we know, a general solution to non-stationary bandit problems with continuous domains is not available in the literature. Our enhanced HOO algorithm might be of independent research interest. Another challenge is to ensure sufficient exploitation in face of infinitely many candidate arms and strong non-stationarity of rewards. Existing solutions include uniformly sampling actions through progressive widening~\citep{auger2013continuous} and playing each action for a fixed amount of times~\citep{kimmonte}. Instead, our solution balances the trade-off between exploration and exploitation by using a limited depth HOO bandit, which makes our theoretical analysis highly non-trivial. 
\end{remark}

\subsection{Enhanced HOO in the Non-Stationary Setting}

The key step in the proof of Theorem~\ref{thm:mcts} is to establish the following result for the enhanced HOO bandit algorithm. Consider a continuous-armed bandit on the domain $X \subseteq [0,1]^m$, with non-stationary rewards bounded in $[-R, R]$ satisfying the following properties:

\quad A. Fixed-arm convergence: The mean-payoff function $f_n:X\ra \rr$ converges  to a function $f:X\ra \rr$ in $L^\infty$ at a polynomial rate:
\begin{equation}\label{eqn:convergence}
\norm{f_n-f}_\infty \leq  \frac{C}{n^\zeta},\ \forall n\geq 1,
\end{equation}
for some constant $C>0$ and $0 <\zeta < \frac{1}{2}$.

\quad B. Fixed-arm concentration: There exist constants $\beta > 1, \xi>0,$ and $1/2\leq \eta < 1$, such that for every $z\geq 1$ and every integer $n\geq 1$:
\begin{equation}\label{eqn:concentration}
\pp\left(\sum_{t=1}^n X_t - n f(x) \geq n^\eta z  \right) \leq \frac{\beta}{z^\xi}
\quad\text{and}\quad
\pp\left(\sum_{t=1}^n X_t - n f(x) \leq -n^\eta z \right) \leq \frac{\beta}{z^\xi},\ \forall x\in X,
\end{equation}
where $X_t$ denotes the random reward obtained by pulling arm $x\in X$ for the $t$-th time.

\begin{theorem}\label{thm:bandit}
	Consider a non-stationary continuous-armed bandit problem satisfying properties~\eqref{eqn:convergence} and~\eqref{eqn:concentration}. Suppose we apply the enhanced HOO agent defined in Algorithms~\ref{alg:HOO_query} and~\ref{alg:HOO_update} with parameters satisfying $\xi\eta(1-\eta)\leq \alpha <\xi(1-\eta)$, $\alpha>3$, and $\rho^{\bar{H}}< n^{\eta-1}$. 
	Let the random variable $Y_t$ denote the reward obtained at time $t$. Then the following holds:
	
	\quad A. Optimal-arm convergence: There exists some constant $C_0 > 0$, such that 
	\begin{equation}
	\abs{\frac{1}{n}\e{\sum_{t=1}^n Y_t}-f^*} \leq  \frac{C_0}{n^{\zeta}},
	\end{equation}
	where $0 < \zeta \leq \frac{1-\frac{\alpha}{\xi (1-\eta)}}{1+d'+\frac{1}{1-\eta}}$.
	
	\quad B. Optimal-arm concentration: There exist constants $\beta' > 1, \xi'>0,$ and $1/2\leq \eta' < 1$, such that for every $z\geq 1$ and every integer $n\geq 1$:
	\begin{equation}
	\pp\left(\sum_{t=1}^n Y_t - n f^* \geq n^{\eta'}z  \right) \leq \frac{\beta'}{z^{\xi'}}
	\quad\text{and}\quad
	\prob{\sum_{t=1}^n Y_t - n f^* \leq -n^{\eta'}z } \leq \frac{\beta'}{z^{\xi'}},
	\end{equation}
	where $\eta' = \frac{\frac{\alpha}{\xi (1-\eta)}+d'+\frac{1}{1-\eta}}{1+d'+\frac{1}{1-\eta}}$, $\xi' = (\alpha-3)/2$, and $\beta'>1$ depends on $\alpha,\beta,\eta,\xi$ and $\bar{H}$.
\end{theorem}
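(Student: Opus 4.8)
\textbf{Proof proposal for Theorem~\ref{thm:bandit}.}

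The plan is to adapt the non-stationary bandit analysis of~\cite{bubeck2011x} and~\cite{shah2019reinforcement} to the enhanced HOO strategy with polynomial bonus and bounded tree depth. I would first set up the key geometric quantity: for a suitably chosen threshold depth $H^*$ (tied to the desired accuracy $n^{\eta-1}$ and the geometric shrinkage $\rho^h$ from Assumption~\ref{assumption:1}), partition the nodes of the HOO tree into "good" nodes (those whose cell $\mc{P}_{h,i}$ contains a near-optimal arm, in the sense that $f^* - f(x) \lesssim \nu_1 \rho^h$ for some $x$ in the cell) and "bad" nodes, invoking the smoothness Assumption~\ref{assumption:2} to control how many bad nodes can exist at each depth. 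The bounded depth $\bar H$ with $\rho^{\bar H} < n^{\eta-1}$ guarantees that once we descend past depth $\bar H$, the residual approximation error $\nu_1\rho^{\bar H}$ is already below the $n^{\eta-1}$-scale noise floor, so the truncation costs us nothing beyond lower-order terms.

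Next I would bound the expected number of pulls $\e{T_{h,i}(n)}$ of any fixed bad node. This is the technical core: unlike the logarithmic-bonus case, I need to show that with polynomial bonus $n^{\alpha/\xi}T_{h,i}(n)^{\eta-1}$, a bad node $(h,i)$ whose parent is good gets played at most $O\big(n^{\alpha/(\xi(1-\eta))}\big)$-ish times in expectation. The argument: if $(h,i)$ is pulled at round $t$, its $B$-value must exceed $B$-values along the optimal path; combining the fixed-arm concentration property~\eqref{eqn:concentration} (to say $\widehat\mu_{h,i}$ is close to $f$ of that cell up to an $n^\eta z$-type deviation) with the fixed-arm convergence property~\eqref{eqn:convergence} (to replace $f_t$ by its limit $f$ at cost $C/t^\zeta$) and the suboptimality gap of the bad cell, one derives that $T_{h,i}(n)^{1-\eta} \lesssim n^{\alpha/\xi}/(\text{gap})$, and summing the tail probabilities $\beta/z^\xi$ over $z$ (which converges since $\xi>1$, and in fact the condition $\alpha>3$ ensures the relevant sums and the final $\xi'=(\alpha-3)/2$ exponent are positive) yields the expectation bound. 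Then, as in~\cite{bubeck2011x}, a good node at depth $h$ contributes at most $2^h$ descendants but each bad child subtree is controlled, so summing over depths $h \le H^*$ and depths $H^* < h \le \bar H$ gives $\e{R_n} \le O(n^{\eta'})$ with $\eta'$ as stated — this gives part~A after dividing by $n$ and separately accounting for the $C/n^\zeta$ drift, noting $\zeta \le 1-\eta$ forces the drift term to dominate or match.

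For part~B (optimal-arm concentration of the \emph{output reward} $\sum_t Y_t$, not just its mean), I would decompose $\sum_{t=1}^n Y_t - nf^* = \underbrace{\sum_t (Y_t - f_t(X_t))}_{\text{martingale}} + \underbrace{\sum_t (f_t(X_t) - f(X_t))}_{\text{drift}} + \underbrace{\sum_t (f(X_t) - f^*)}_{\text{regret}}$. The drift term is deterministically bounded by $C\sum_t t^{-\zeta} = O(n^{1-\zeta}) = O(n^{\eta'})$ since $\zeta \ge 1-\eta'$ by construction. The regret term is handled by a high-probability version of the part~A analysis: the event that total regret exceeds $n^{\eta'}z$ forces some bad subtree to be over-pulled, whose probability is bounded by the polynomial tails, accumulating to $\beta'/z^{\xi'}$ with $\xi' = (\alpha-3)/2$. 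The martingale term is where I expect the main obstacle: since rewards are only bounded in $[-R,R]$ but not independent (the $Y_t$ depend on the adaptively chosen arms and on lower-level non-stationarity), I cannot use a clean Hoeffding–Azuma bound to get the sharp exponent — instead I would use a Freedman-type or Bernstein-type martingale inequality, or more likely follow~\cite{shah2019reinforcement} in proving a crude $L^p$/Markov bound on $\big|\sum_t(Y_t - f_t(X_t))\big|$ of the form $\pp(\cdot \ge n^{\eta'}z) \le \beta'/z^{\xi'}$ directly, which is exactly why the exponent degrades to $\xi' = (\alpha-3)/2$ rather than being exponential. Carefully tracking how the five conditions $\xi\eta(1-\eta) \le \alpha < \xi(1-\eta)$, $\alpha > 3$, and the choice of $\bar H$ interact to make all these exponents simultaneously valid — in particular that $\eta' \in [1/2, 1)$ and $\xi' > 0$ and the part-A rate $\zeta$ stays in $(0,1/2)$ — is the bookkeeping that ties everything together, and I would do this last once the structural bounds are in place.
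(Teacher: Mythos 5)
Your overall architecture matches the paper's: partition the HOO tree by near-optimality (the paper's $I_h$, $J_h$ and the sets $\mc{T}_1,\mc{T}_2,\mc{T}_3,\mc{L}$), bound the pull counts of suboptimal cells via the polynomial bonus to get $\e{T_{h,i}(n)}\lesssim \left(n^{\alpha/\xi}/\Delta_{h,i}\right)^{1/(1-\eta)}$, count near-optimal cells per depth via the near-optimality dimension $d'$ (not the crude $2^h$ you mention — that bound would not produce the $d'$-dependent exponents you correctly state), and choose the threshold depth $H$ so that $\rho^{H}=n^{\lambda}$ balances the bias and pull-count terms. Part A of your plan is essentially the paper's convergence argument.

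The genuine gap is in your Part B, in how you handle the reward-noise term. You decompose $\sum_t Y_t - nf^*$ into martingale $+$ drift $+$ regret and anticipate needing a Freedman/Bernstein or $L^p$ martingale bound for $\sum_t\bigl(Y_t - f_t(X_t)\bigr)$, flagging it as "the main obstacle." But the hypotheses do not give you a usable martingale structure: property~\eqref{eqn:concentration} is a \emph{per-fixed-arm} concentration of partial sums around the \emph{limit} $f(x)$ (indexed by the number of times that particular arm has been pulled), not a conditional-mean statement of the form $\e{Y_t\mid \mc{F}_{t-1}}=f_t(X_t)$, so neither Azuma nor Freedman applies cleanly, and this is precisely the difficulty the paper is designed to avoid. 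The paper instead bounds $\sum_t\bigl(f(X_t)-Y_t\bigr)\indicator_{\{(H_t,I_t)\in\mc{T}_i\}}$ by a union bound over the arms actually pulled, applying \eqref{eqn:concentration} separately to each arm (with $n=1$ for nodes above depth $\bar H$, which are played once, and with $n=K_j$ for the depth-$\bar H$ nodes in $\mc{L}$, which are played repeatedly). This works only because the depth cap $\bar H$ makes the total number of distinct arms a constant independent of $n$, so the union bound costs a constant factor. In other words, $\bar H$ plays a second, essential role beyond the one you identify (controlling the residual discretization error $\nu_1\rho^{\bar H}$): it is what makes the concentration argument go through at all. Relatedly, your attribution of the exponent $\xi'=(\alpha-3)/2$ to a degraded martingale bound is off; in the paper it arises from combining the $z^{3-\alpha}$ tails of the over-pulling events for suboptimal cells with a $z\mapsto z^{1/2}$ substitution used to absorb the deterministic bias term $4\nu_1\rho^{H}n$ into a single $zn^{\lambda+1}$ threshold. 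To repair your proposal, replace the martingale step with the per-arm union-bound argument and treat the depth-$\bar H$ leaves (multiply-played arms) as a separate case.
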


Theorem~\ref{thm:bandit} states the properties of the regret induced by the enhanced HOO algorithm (Algorithms~\ref{alg:HOO_query} and~\ref{alg:HOO_update}) for a non-stationary continuous-armed bandit problem, which may be of independent interest. If the rewards of the non-stationary bandit satisfy certain convergence rate and concentration conditions, then the regret of our algorithm also enjoys the same convergence rate and similar concentration guarantees. We can verify that our configuration of the parameters $\alpha^{(d)},\xi^{(d)},\eta^{(d)},\ 0\leq d \leq D-1$ in Theorem~\ref{thm:mcts} satisfy the requirements of Theorem~\ref{thm:bandit}. Therefore, using this theorem we can propagate the convergence result on one level of MCTS to the level above it. By applying Theorem~\ref{thm:bandit} recursively, we can establish the convergence result of the value function estimate for the root node of MCTS. 

	In addition to the technical difficulty of analyzing the regret of HOO~\citep{bubeck2011x}, we have to address the challenges raised by the non-stationary rewards and bounded depth of HOO tree. The results are formally established as a sequence of lemmas in Appendix~\ref{appendix:lemmas}. 

\section{Simulations}\label{sec:simulations}
In this section, we empirically evaluate the performance of \texttt{POLY-HOOT} on several classic control tasks. We have chosen three benchmark tasks from OpenAI Gym~\citep{openai}, and extended them to the continuous-action settings as necessary. These tasks include CartPole, Inverted Pendulum Swing-up, and LunarLander. CartPole is relatively easy, so we have also modified it to a more challenging one, CartPole-IG, with an increased gravity value. This new setting requires smoother actions, and bang-bang control strategies easily cause the pole to fall due to the increased inertia. 

We compare the empirical performance of \texttt{POLY-HOOT} with three other continuous MCTS algorithms, including UCT~\citep{kocsis2006bandit} with manually discretized actions, Polynomial Upper Confidence Trees (PUCT) with progressive widening~\citep{auger2013continuous}, and the original implementation of HOOT~\citep{mansley2011sample} with a logarithmic bonus term. Their average rewards and standard deviations on the above tasks are shown in Table~\ref{tbl:1}. The results are averaged over $40$ runs. The detailed experiment settings as well as additional experiment results can be found in Appendix~\ref{appendix:simulations}. 

\begin{table}[!htbp]
	\centering
	\scalebox{0.9}{
	\begin{tabular}{ccccc}
		\hline
		& CartPole & CartPole-IG & Pendulum & LunarLander \\ \hline
		discretized-UCT &     77.85 $\pm $ 0.0    &       69.39     $\pm $  	6.63     &     -109.68 $\pm$ 0.29    &       -57.95 $\pm$ 	77.36     \\
		PUCT            &     77.85 $\pm$  0.0   &      71.48 $\pm $ 8.27          &      -109.64 $\pm$ 	0.25    &       -43.05 $\pm$ 	80.25    \\
		HOOT            &    77.85 $\pm $  0.0   &      77.85    $\pm$   0.0      &  -109.50  $\pm$ 0.35      &      -23.37  $\pm$ 76.46     \\
		\texttt{POLY-HOOT}       &   77.85  $\pm $  0.0   &      77.85    $\pm$   0.0      &   -109.43 $\pm$ 0.25     &         -3.02 $\pm$ 	44.41    \\ \hline
	\end{tabular}
}
	\caption{Empirical performances on classic control tasks}\label{tbl:1}
\end{table}

\begin{table}[!htbp]
	
	\vspace{-0.2cm}\centering
	\scalebox{0.91}{
		\begin{tabular}{@{\hspace{-0.001cm}}c@{\hspace{-0.001cm}}cccccccc}
			\hline
			Algorithm & discretized-UCT& PUCT &HOOT & {\small $\bar{H}=2$} & {\small $\bar{H}=4$} & {\small $\bar{H}=6$} & {\small $\bar{H}=8$} & {\small $\bar{H}=10$} \\ 
			Reward &69.03&70.79  &77.85& 42.45 & 48.54 & 63.27 & 77.85 & 77.85\\
			Time per decision (s) &0.950&0.305&1.173&   0.054      &      0.149            &   0.610      &  1.030  & 1.057         \\ \hline
		\end{tabular}
	}
	\caption{Time per decision on CartPole-IG}\label{tbl:3}\vspace{-0.4cm}
\end{table}

As we can see from Table~\ref{tbl:1}, all four algorithms achieve optimal rewards on the easier CartPole task. However, for the CartPole-IG task with increased  gravity, discretized-UCT and PUCT do not achieve the optimal performance, because their actions, either sampled from a uniform grid or sampled completely randomly, are not smooth enough to handle the larger momentum. In the Pendulum task, the four algorithms have similar performance, although HOOT and \texttt{POLY-HOOT} perform slightly better. Finally, on LunarLander, HOOT and \texttt{POLY-HOOT} achieve much better performance. This task has a high-dimensional action space, making it difficult for discretized-UCT and PUCT to sample actions at fine granularity. Also note that \texttt{POLY-HOOT} significantly outperforms HOOT. We believe the reason is that this task, as detailed in Appendix~\ref{appendix:simulations}, features a deeper search depth and sparse but large positive rewards. This causes a more severe non-stationarity issue of rewards within the search tree, which is better handled by \texttt{POLY-HOOT} with a polynomial bonus term than by HOOT, as our theory suggests. This demonstrates the superiority of \texttt{POLY-HOOT} in dealing with complicated continuous-space tasks with higher dimensions and deeper planning depth. We would also like to remark that the high standard deviations in this task are mostly due to the reward structure of the task itself---the agent either gets a large negative reward (when the lander crashes) or a large positive reward (when it lands on the landing pad) in the end.

We also empirically evaluate the time complexity of the algorithms. Table~\ref{tbl:3} shows the time needed by each algorithm to make a single decision on CartPole-IG. For \texttt{POLY-HOOT}, we further test its computation time with different values of $\bar{H}$ (the maximum depth of the HOO tree), which is an important hyper-parameter to balance the trade-off between optimality and time complexity. All tests are  averaged over 10 (new) runs on a laptop with an Intel Core i5-9300H CPU. We can see that \texttt{POLY-HOOT} requires slightly more computation than discretized-UCT and PUCT as the cost of higher rewards, but it is still more time-efficient than HOOT because of the additional depth limitation.

\section{Conclusions}\label{sec:conclusions}
In this paper, we have considered Monte-Carlo planning in an environment with continuous state-action spaces. We have introduced \texttt{POLY-HOOT}, an algorithm that augments MCTS with a continuous armed bandit strategy HOO. We have enhanced HOO with an appropriate polynomial bonus term in the upper confidence bounds, and investigated the regret of the enhanced HOO algorithm in non-stationary bandit problems. Based on this result, we have established non-asymptotic convergence guarantees for \texttt{POLY-HOOT}. Experimental results have further corroborated our theoretical findings. Our theoretical results have advocated the use of non-stationary bandits with polynomial bonus terms in MCTS, which might guide the design of new planning algorithms in continuous spaces, with potential applications in robotics and control, that enjoy better empirical performance as well.

\section*{Broader Impact}
We believe that researchers of planning, reinforcement learning, and multi-armed bandits, especially those who are interested in the theoretical foundations, would benefit from this work. In particular, prior to this work, though intuitive, easy-to-implement, and empirically widely-used, a theoretical analysis of Monte-Carlo tree search (MCTS) in continuous domains had not been established through the lens of non-stationary bandits. In this work, inspired by the recent advances in finite-space Monte-Carlo tree search, we have provided such a result, and thus theoretically justified the efficiency of MCTS in continuous domains. 
  
Although Monte-Carlo tree search has demonstrated great performance in a wide range of applications, theoretical explanation of its empirical successes is relatively lacking. Our theoretical results have advocated the use of non-stationary bandit algorithms, which might guide the design of new planning algorithms that enjoy better empirical performance in practice. Our results might also be helpful for researchers interested in robotics and control applications, as our algorithm can be readily applied to such planning problems with continuous domains. 

As a theory-oriented work, we do not believe that our research will cause any ethical issue, or put anyone at any disadvantage.

\begin{ack}
	We thank Bin Hu for helpful comments on an earlier version of the paper. Research of the three authors from Illinois was supported in part by Office of Naval Research (ONR) MURI Grant N00014-16-1-2710, and in part by the US Army Research Laboratory (ARL) Cooperative Agreement W911NF-17-2-0196. Q.~Xie is
	partially supported by NSF grant 1955997.
\end{ack}

\bibliographystyle{abbrvnat}
\bibliography{ref}

\appendix
\clearpage

\onecolumn

~\\
\centerline{{\fontsize{13.5}{13.5}\selectfont \textbf{Supplementary Materials for ``\texttt{POLY-HOOT}: Monte-Carlo Planning}}}

\vspace{6pt}
 \centerline{\fontsize{13.5}{13.5}\selectfont \textbf{
  in Continuous Space MDPs with Non-Asymptotic Analysis''}}
 \vspace{10pt}

\section{Algorithm Details}\label{appendix:algorithm}
In the following, we provide the details of the functions $HOO\_query$ and $HOO\_update$ that are utilized in Algorithm~\ref{alg:POLY-HOOT}.

\begin{algorithm}[H]
	\textbf{Input:} depth in MCTS $d$, state $s$, and round $t$.
	
	\textbf{Output:} action to take $a$.
	
	\textbf{Parameters:} maximum depth $\bar{H}$ allowed in HOO.
	
	\If{state $s$ has never been visited at MCTS depth $d$}
	{
		Initialize HOO agent at state $s$ and depth $d$: $\mathcal{T}\gets \{(0,1)\}$ and $B_{1,2},B_{2,2}\gets \infty$\;
	}
	\Else
	{
		$\mc{T}\gets $ the HOO agent constructed at state $s$ and depth $d$ previously\;
	}
	$(h,i) \gets (0,1)$\;
	Initialize HOO path in the current round: $P_t \gets \{(h,i)\}$\;
	\While{$(h,i)\in\mc{T}$}
	{
		\If{$B_{h+1,2i-1} > B_{h+1,2i}$}
		{
			$(h,i)\gets (h+1,2i-1)$\;
		}
		\Else
		{
			$(h,i)\gets (h+1,2i)$\;
		}
		$P_t \gets P_t\cup \{(h,i)\}$
	}
	$(H,I)\gets (h,i)$\;
	\If{$H\leq \bar{H}$}
	{
		Choose arbitrary arm $X$ in $\mc{P}_{H,I}$\;
		$A_{H,I} = X$\;
		\tcp{Associate the chosen action $X$ with the node $(H,I)$.}
		$\mc{T} \gets  \mc{T}\cup \{(H,I)\}$\;
		$B_{H+1,2 I-1},B_{H+1,2 I} \gets \infty$\;
		\Return $X$\;
	}
	\Else
	{
		\tcp{We reached the maximum depth and should not explore new actions.}
		$(H,I) \gets (H-1, \ceil{I/2})$\; 
		\Return $A_{H,I}$. 
	}
	
	\caption{HOO\_query}\label{alg:HOO_query}
\end{algorithm}

\begin{algorithm}[H]
	\textbf{Input:} depth in MCTS $d$, state $s$, and bandit reward $Y$ at round $t$.
	
	\textbf{Parameters:}  $\alpha^{(d)}, \xi^{(d)},\eta^{(d)},\nu_1$ and $\rho$.
	
	$\alpha, \xi, \eta \gets \alpha^{(d)}, \xi^{(d)}, \eta^{(d)}$\;
	\ForEach{$(h,i)$ in $P_t$}
	{
		$T_{h,i}\gets T_{h,i} + 1$\;
		$\widehat{\mu}_{h, i} \gets \left(1-1 / T_{h, i}\right) \widehat{\mu}_{h, i}+Y / T_{h, i}$\;
	}
	\ForEach{$(h,i)$ in $\mc{T}$}
	{
		$U_{h, i} \gets \widehat{\mu}_{h, i}+t^{\alpha/\xi} T_{h,i}^{\eta-1}+\nu_{1} \rho^{h}$\;
	}
	$\mc{T'} \gets \mc{T}$\;
	\While{$\mc{T'}\neq \{(0,1)\}$}
	{
		$(h,i)\gets $ an arbitrary leaf node of $\mc{T}'$\;
		$B_{h, i} \leftarrow \min \left\{U_{h, i}, \max \left\{B_{h+1,2 i-1}, B_{h+1,2 i}\right\}\right\}$\;
		$\mc{T}' \gets \mc{T}' \backslash \{(h,i)\}$\;
	}
	\caption{HOO\_update}\label{alg:HOO_update}
\end{algorithm}

\section{Proof of Theorem~\ref{thm:bandit}}\label{appendix:bandit}
Let $R_n = \sum_{t=1}^n (f^*-Y_t)$ denote the regret of Algorithms~\ref{alg:HOO_query} and~\ref{alg:HOO_update} with the depth limitation $\bar{H}$. We define the following notations that are similar to~\cite{bubeck2011x}. First, let $I_h$ denote the set of nodes at depth $h$ that are $2\nu_1\rho^h$-optimal, i.e., the set of nodes $(h,i)$ that satisfy $f^*_{h,i} \geq f^*- 2\nu_1\rho^h$, where $f_{h,i}^*\triangleq \sup_{x\in\mc{P}_{h,i}}f(x)$. For $h\geq 1$, let $J_h$ denote the set of nodes at depth $h$ that are not in $I_h$ but whose parents are in $I_{h-1}$ (i.e., they are not $2\nu_1\rho^h$-optimal themselves but their parents are $2\nu_1\rho^{h-1}$-optimal). Finally, define $\mc{X}_\varepsilon \triangleq \{x\in X:f(x)\geq f^* - \varepsilon\}$ to be the set of arms that are $\varepsilon$-close to optimal. 

Let $(H_t,I_t)$ denote the node that is selected by the bandit algorithm at time $t$. Note that with the depth limitation $\bar{H}$ it is possible that the nodes on depth $\bar{H}$ might be played more than once at different rounds. The nodes above depth $\bar{H}$ (i.e., $H_t< \bar{H}$), on the other hand, are played only once and the random variables $(H_t,I_t)$ are not the same for different values of $t$.  Let $\mc{L} = \{(H_t,I_t):H_t = \bar{H}\}$ denote the set of nodes on depth $\bar{H}$ that have been played. Let $H\geq 1$ be a constant integer whose value will be specified later, and without loss of generality we assume $\bar{H} > H$. We partition the nodes in the HOO tree $\mc{T}$ above depth $\bar{H}$ into three parts $\mc{T}\backslash\mc{L} = \mc{T}_1\cup\mc{T}_2\cup\mc{T}_3$. Let $\mc{T}_1$ be the set of nodes above depth $\bar{H}$ that are descendants of nodes in $I_H$. By convention, a node itself is also considered as a descendant of its own, so we also have $I_H\subseteq \mc{T}_1$. Let $\mathcal{T}_{2}=\cup_{0 \leq h<H} I_{h}$. Finally, let $\mc{T}_3$ be the set of nodes above depth $\bar{H}$ that are descendants of nodes in $\cup_{0 \leq h\leq H} J_{h}$. We can verify that $\mc{T}_1\cup\mc{T}_2\cup\mc{T}_3\cup\mc{L}$ covers all the nodes in $\mc{T}$. 

Similarly, we also decompose the regret according to the selected node $(H_t,I_t)$ into four parts: $R_n = R_{n,1} + R_{n,2} + R_{n,3} + R_\mc{L}$, where $R_{n, i}=\sum_{t=1}^{n}\left(f^{*}-Y_t\right) \mathbb{I}_{\left\{\left(H_{t}, I_{t}\right) \in \mathcal{T}_{i}\right\}}$ and $R_{\mc{L}}=\sum_{t=1}^{n}\left(f^{*}-Y_t\right) \mathbb{I}_{\left\{\left(H_{t}, I_{t}\right) \in \mathcal{L}\right\}}$. In the following, we analyze each of the four parts individually. We start with the concentration property and then the convergence results. 

To proceed further, we first need to state several definitions that are useful throughout. These definitions come from~\cite{bubeck2011x}, with similar ideas introduced earlier in~\cite{auer2007improved}. We reproduce these definitions here for completeness. 

\begin{definition}(Packing number)
	The $\epsilon$-packing number $\mc{N}(\mc{X},\ell, \epsilon)$ of $\mc{X}$ w.r.t the dissimilarity $\ell$ is the largest integer $k$ such that there exists $k$ disjoint $\ell$-open balls with radius $\epsilon$ contained in $\mc{X}$. 
\end{definition}

\begin{definition}(Near-optimality dimension)
	For $c>0$, the near-optimality dimension of $f$ w.r.t $\ell$ is
	\[
	\max \left\{0, \limsup _{\varepsilon \rightarrow 0} \frac{\ln \mathcal{N}\left(\mc{X}_{c \varepsilon}, \ell, \varepsilon\right)}{\ln \left(\varepsilon^{-1}\right)}\right\}.
	\]
\end{definition}

\begin{definition}\label{dfn:d'}
	Let $d$ be the $4\nu_1/\nu_2-$near-optimality dimension of $f$ w.r.t $\ell$. We use $d'$ to denote any value such that $d' > d$.
\end{definition}

\begin{definition}\label{dfn:optimalnode}
	Given the limit of the mean-payoff function $f$ of a HOO agent, we assume without loss of generality that $(0,1), (1,i_1^*), (2, i_2^*),\dots, (\bar{H},i_{\bar{H}}^*)$ is an optimal path, i.e., $\Delta_{h, i_h^*} = 0,\forall h\geq 1$. We define the nodes $(h, i_h^*)$ on the optimal path as optimal nodes, and the other nodes as suboptimal nodes. 
\end{definition}

Our proof will also rely on several lemmas that we state and prove in Appendix~\ref{appendix:lemmas}.

\subsection{Regret from $\mc{T}_1$}
Any node in $I_H$ is by definition $2\nu_1\rho^H$-optimal. By Lemma~\ref{lemma:3}, the domain of $I_H$ lies in $\mc{X}_{4\nu_1\rho^H}$. Since the descendants of $I_H$ cover a domain that is a subset of the domain of $I_H$, we know the descendants of $I_H$ also lie in the domain of $\mc{X}_{4\nu_1\rho^H}$, and hence $\sum_{t=1}^{n}\left(f^{*}-f\left(X_{t}\right)\right) \mathbb{I}_{\left\{\left(H_{t}, I_{t}\right)\in \mathcal{T}_{1}\right\}} \leq  4\nu_1 \rho^H n$. Let $n_1 = |\mc{T}_1|$ we then have for every $z\geq 1$, 
\[
\begin{aligned}
&\pp\left( R_{n,1} \geq zn^{\eta}  + 4\nu_1 \rho^H n\right)\\
=& \prob{\sum_{t=1}^{n}\left(f^{*}-Y_t\right) \mathbb{I}_{\left\{\left(H_{t}, I_{t}\right) \in \mathcal{T}_{1}\right\}} \geq zn^{\eta}  + 4\nu_1 \rho^H n}\\
=& \prob{\sum_{t=1}^{n}\left(f^{*}-f\left(X_{t}\right)\right) \mathbb{I}_{\left\{\left(H_{t}, I_{t}\right)\in \mathcal{T}_{1}\right\}} + \sum_{t=1}^{n}\left(f(X_t)-Y_t\right) \mathbb{I}_{\left\{\left(H_{t}, I_{t}\right) \in \mathcal{T}_{1}\right\}} \geq zn^{\eta}  + 4\nu_1 \rho^H n}\\
\leq & \sum_{t=1}^{n_1} \prob{f(\tilde{X}_t)-\tilde{Y}_t  \geq \frac{z}{n_1}n^{\eta}  }\\
\leq & \frac{n_1^{\xi+1} \beta}{z^{\xi}}\leq \frac{c_1^{\xi+1}\beta}{z^{\alpha-3}},
\end{aligned}
\]
where $\tilde{X}_t$ denotes the $t$-th arm pulled in $\mc{T}_1$, and $\tilde{Y}_t$ denotes its corresponding reward. Note that in the first inequality we used the fact that $\sum_{t=1}^{n}\left(f^{*}-f\left(X_{t}\right)\right) \mathbb{I}_{\left\{\left(H_{t}, I_{t}\right)\in \mathcal{T}_{1}\right\}} \leq  4\nu_1 \rho^H n$. In the second inequality we used the union bound. In the third inequality we applied the concentration property of the bandit problem~\eqref{eqn:concentration} with $n=1$. Notice that we can only use the concentration property when the requirement $\frac{z}{n_1}\geq 1$ is satisfied, but when $\frac{z}{n_1} < 1$, the inequality also trivially holds because $\frac{n_1^{\xi+1}\beta}{z^{\xi}}  > 1$. The last step holds because $\alpha - 3 < \alpha < \xi (1-\eta) < \xi$, and $c_1\geq 1$ is a constant that upper bounds $n_1$ (since $\mc{T}$ is a binary tree with limited depth, one trivial upper bound would be the number of nodes in $\mc{T}$, which does not depend on $n$ and $z$). Also notice that the inequality above trivially holds when $0 < z < 1$, because $\beta > 1, \alpha-3 > 0$ and hence $\frac{\beta}{z^{\alpha -3}} > 1$ is an upper bound for any probability value.

Let $\lambda = \frac{\frac{\alpha}{\xi (1-\eta)}-1}{1+d'+\frac{1}{1-\eta}}$, and we know $\lambda < 0$ because $\alpha < \xi (1-\eta)$. We then choose the value for $H$ such that $\rho^H = n^\lambda$; then, $4\nu_1 \rho^H n$ is of the order of $n^{\lambda+1}$. We further have $n^{\lambda+1} > n^\eta$ since $\alpha \geq \xi \eta (1-\eta)$. Let $c_2\geq 1$ be a constant such that $c_2 n^{\lambda+1} \geq c_2^{1/2} n^\eta + 4 \nu_1 n^{\lambda+1},\forall n \geq 1$. Such a constant always exists because $c_2^{1/2} < c_2$ and $n^\eta < n^{\lambda+1}$. Then it is easy to see that $z n^{\lambda+1} \geq z^{1/2} n^\eta + 4 \nu_1 n^{\lambda+1},\forall n \geq 1$  also holds for any $z\geq c_2$. Therefore, we have the following property:
\begin{equation}\label{eqn:T1}
\pp\left(R_{n,1} \geq  z n^{\lambda +1}  \right)  \leq \frac{c_1^{\xi+1 }c_2^{\alpha-3} \beta}{z^{(\alpha-3)/2}},\ \forall z \geq 1.
\end{equation}
To see this, first suppose that $z \geq c_2$; then, $z n^{\lambda+1}  \geq z^{1/2} n^\eta + 4 \nu_1 n^{\lambda+1} ,\forall n \geq 1$ and since $c_2\geq 1$, we have $\pp\left(R_{n,1} \geq  z n^{\lambda+1} \right) \leq \pp\left(R_{n,1} \geq  \frac{z^{1/2}}{c_2} n^{\eta}+4 \nu_{1} \rho^{H} n \right) \leq \frac{c_1^{\xi+1}c_2^{\alpha-3} \beta}{z^{(\alpha-3)/2}}$. On the other hand, if $1\leq z < c_2$, then the inequality~\eqref{eqn:T1} trivially holds, because $c_2^{\alpha-3} > z^{\alpha-3} \geq  z^{(\alpha-3)/2}$ and $\beta > 1, c_1\geq 1$, making the RHS greater than $1$. The other side of the concentration inequality follows similarly and is omitted here.

\subsection{Regret from $\mc{T}_2$}
For $h\geq 0$, any node $(h,i)\in\mc{T}_2$ by definition belongs to $I_h$ and is hence $2\nu_1 p^h$-optimal. Therefore, $\sum_{t=1}^{n}\left(f^{*}-f\left(X_{t}\right)\right) \mathbb{I}_{\left\{\left(H_{t}, I_{t}\right)\in \mathcal{T}_{2}\right\}} \leq \sum_{h=0}^{H-1} 4 \nu_{1} \rho^{h}\left|I_{h}\right| \leq 4 c_3 \nu_{1} \nu_{2}^{-d^{\prime}} \sum_{h=0}^{H-1} \rho^{h\left(1-d^{\prime}\right)}$, where the last step uses the fact that $|I_h|\leq c_3\left(\nu_{2} \rho^{h}\right)^{-d^{\prime}}$ for some constant $c_3$ (Lemma~\ref{lemma:I_h} in Appendix~\ref{appendix:lemmas}). We then have the following convergence result:
\begin{equation}\label{eqn:convergence2}
\mathbb{E}\left[R_{n, 2}\right] \leq  4 c_3 \nu_{1} \nu_{2}^{-d^{\prime}} \sum_{h=0}^{H-1} \rho^{h\left(1-d^{\prime}\right)}.
\end{equation}

Let $n_2 = |\mc{T}_2|$; then for every $z\geq 1$, we have
\[
\begin{aligned}
&\pp\left( R_{n,2} \geq zn^{\eta}  + 4c_3\nu_1\nu_2^{-d'}\sum_{h=0}^{H-1}\rho^{h(1-d')} \right)\\
=& \prob{\sum_{t=1}^{n}\left(f^{*}-Y_t\right) \mathbb{I}_{\left\{\left(H_{t}, I_{t}\right) \in \mathcal{T}_{2}\right\}} \geq zn^{\eta}  + 4c_3\nu_1\nu_2^{-d'}\sum_{h=0}^{H-1}\rho^{h(1-d')} }\\
=& \pp\Bigg(\sum_{t=1}^{n}\left(f^{*}-f\left(X_{t}\right)\right) \mathbb{I}_{\left\{\left(H_{t}, I_{t}\right)\in \mathcal{T}_{2}\right\}} + \sum_{t=1}^{n}\left(f(X_t)-Y_t\right) \mathbb{I}_{\left\{\left(H_{t}, I_{t}\right) \in \mathcal{T}_{2}\right\}}\\
&\qquad \qquad \geq zn^{\eta}  + 4c_3\nu_1\nu_2^{-d'}\sum_{h=0}^{H-1}\rho^{h(1-d')} \Bigg)\\
\leq & \prob{\sum_{t=1}^{n}\left(f(X_t)-Y_t\right) \mathbb{I}_{\left\{\left(H_{t}, I_{t}\right) \in \mathcal{T}_{2}\right\}} \geq zn^{\eta}  }\\
\leq & \frac{n_2^{\xi+1} \beta}{z^{\xi}}\leq \frac{c_4^{\xi+1}\beta}{z^{\alpha-3}},
\end{aligned}
\]
where the first inequality uses the fact that  $\sum_{t=1}^{n}\left(f^{*}-f\left(X_{t}\right)\right) \mathbb{I}_{\left\{\left(H_{t}, I_{t}\right)\in \mathcal{T}_{2}\right\}}  \leq 4 c_3 \nu_{1} \nu_{2}^{-d^{\prime}} \sum_{h=0}^{H-1} \rho^{h\left(1-d^{\prime}\right)}$, and $c_4$ is a constant not depending on $n$ and $z$ that upper bounds $n_2$, similar to the proof in $\mc{T}_1$. Again, this inequality also trivially holds for $0 < z<1$.

Since there exists a constant $c_5$ that 
$$
\begin{aligned} 
\sum_{h=0}^{H-1} \rho^{h\left(1-d^{\prime}\right)} &\leq c_5 \rho^{H(1-d')} \leq c_5\rho^{-H(d' + \frac{1}{1-\eta})}\leq c_5 \rho^{-H(d'+\frac{1}{1-\eta})}n^{\frac{\alpha}{\xi(1-\eta)}}\leq c_5 n^{\lambda+1},
\end{aligned} 
$$
we know $4 c_3 \nu_{1} \nu_{2}^{-d^{\prime}} \sum_{h=0}^{H-1} \rho^{h\left(1-d^{\prime}\right)}$ is upper bounded by the order of $n^{\lambda+1}$. Again, since $n^{\lambda+1}>n^\eta$, there always exists a constant $c_6\geq 1$ such that for any $z\geq c_6$, $z n^{\lambda+1} \geq z^{1/2} n^\eta + 4 c_3 \nu_{1} \nu_{2}^{-d^{\prime}} \sum_{h=0}^{H-1} \rho^{h\left(1-d^{\prime}\right)},\forall n\geq 1$. Therefore, we have
\begin{equation}\label{eqn:T2}
\pp\left(R_{n,2} \geq  z n^{\lambda +1} \right)  \leq \frac{c_4^{\xi+1}c_6^{\alpha-3} \beta}{z^{(\alpha-3)/2}},\ \forall z \geq 1.
\end{equation}
To see this, again, first suppose that $z\geq c_6$, then $z n^{\lambda+1} \geq z^{1/2} n^\eta + 4 c_3 \nu_{1} \nu_{2}^{-d^{\prime}} \sum_{h=0}^{H-1} \rho^{h\left(1-d^{\prime}\right)}$, and hence $\pp\left(R_{n,2} \geq  z n^{\lambda +1} \right) \leq \pp\left(R_{n,2} \geq  \frac{z^{1/2}}{c_6} n^\eta + 4 c_3 \nu_{1} \nu_{2}^{-d^{\prime}} \sum_{h=0}^{H-1} \rho^{h\left(1-d^{\prime}\right)} \right) \leq \frac{c_4^{\xi+1}c_6^{\alpha-3}\beta}{z^{(\alpha-3)/2}}$. If on the other hand $1 \leq z<c_6$, then inequality~\eqref{eqn:T2} trivially holds because the RHS is greater than $1$.

\subsection{Regret from $\mc{T}_3$}
For any node $(h,i)\in\mc{T}_3$, since the parent of any $(h,i)\in J_h$ is in $I_{h-1}$, we know by Lemma~\ref{lemma:3} that the domain of $(h,i)$ is in $\mc{X}_{4\nu_1 \rho^{h-1}}$. Further, for any $u\geq A_{h,i}(n) = \ceil{ \left(\frac{2n^{\alpha/\xi}}{\Delta_{h,i}-\nu_1\rho^h}\right)^{\frac{1}{1-\eta}}}$ and $z\geq 1$, we know from inequality~\eqref{eqn:pt>u} that $\prob{T_{h,i}(n)>zu}\leq \frac{(zu-1)^{3-\alpha}}{n} + \frac{(zu-1)^{3-\alpha}}{\alpha-3}\leq z^{3-\alpha}(u-1)^{3-\alpha}\left(\frac{1}{n} + \frac{1}{\alpha-3}\right)$. Since $\Delta_{h,i} > 2\nu_1\rho^h$, we know $A_{h,i}(n) \leq \ceil{\left(\frac{2n^{\alpha/\xi}}{\nu_1 \rho^h}\right)^{\frac{1}{1-\eta}}}$. Then
for any $u> \left(\frac{2n^{\alpha/\xi}}{\nu_1 \rho^h}\right)^{\frac{1}{1-\eta}}$, 
\[
\begin{aligned}
&\prob{\sum_{t=1}^{n}\left(f^{*}-f\left(X_{t}\right)\right) \mathbb{I}_{\left\{\left(H_{t}, I_{t}\right)\in \mathcal{T}_{3}\right\}} \geq \sum_{h=1}^H 4\nu_1 \rho^{h-1} \sum_{(h,i)\in\mc{T}_3}zu}\\
\leq&  \prob{\sum_{h=1}^H 4\nu_1 \rho^{h-1} \sum_{(h,i)\in\mc{T}_3}T_{h,i}(n) \geq \sum_{h=1}^H 4\nu_1 \rho^{h-1} \sum_{(h,i)\in\mc{T}_3}zu}\\
\leq&  \sum_{h=1}^H\prob{ \sum_{(h,i)\in\mc{T}_3}T_{h,i}(n) \geq \sum_{(h,i)\in\mc{T}_3}zu}\\
\leq &\sum_{h=1}^H |J_h|z^{3-\alpha}(u-1)^{3-\alpha}\left( \frac{1}{n} + \frac{1}{\alpha-3}\right)\\
\leq & 2C\nu_2^{-d'}\sum_{h=1}^H \rho^{-(h-1)d'}z^{3-\alpha}(u-1)^{3-\alpha}\left( \frac{1}{n} + \frac{1}{\alpha-3}\right),
\end{aligned}
\]
where in the last step we used the fact that $|J_h|\leq 2|I_{h-1}|\leq 2c_2\left(\nu_{2} \rho^{h-1}\right)^{-d^{\prime}}$, because the parent of any node in $J_h$ is in $I_{h-1}$. Since $\alpha > 3$, we know $2c_2\nu_2^{-d'}\sum_{h=1}^H \rho^{-(h-1)d'}(u-1)^{3-\alpha}\left( \frac{1}{n} + \frac{1}{\alpha-3}\right)$ decreases polynomially in $n$, and hence there exists some constant $c_7>1$, such that $2c_2\nu_2^{-d'}\sum_{h=1}^H \rho^{-(h-1)d'} (u-1)^{3-\alpha}\left( \frac{1}{n} + \frac{1}{\alpha-3}\right) \leq c_7,\ \forall n\geq 1$. Therefore, for any $z\geq 1$, 
\[
\prob{\sum_{t=1}^{n}\left(f^{*}-f\left(X_{t}\right)\right) \mathbb{I}_{\left\{\left(H_{t}, I_{t}\right)\in \mathcal{T}_{3}\right\}} \geq \sum_{h=1}^H 4\nu_1 \rho^{h-1} \sum_{(h,i)\in\mc{T}_3}zu} \leq c_7 z^{3-\alpha}.
\]
Let $n_3 = |\mc{T}_3|$, and let $\mb{I}_{\{\cdot\}}$ denote $\mathbb{I}_{\left\{\left(H_{t}, I_{t}\right) \in \mathcal{T}_{3}\right\}}$ for short; then for every $z\geq 1$, we have
\[
\begin{aligned}
&\pp\left( R_{n,3} \geq zn^{\eta}  + \sum_{h=1}^H 4\nu_1 \rho^{h-1} \sum_{(h,i)\in\mc{T}_3}zu \right)\\
=& \prob{\sum_{t=1}^{n}\left(f^{*}-Y_t\right) \mathbb{I}_{\left\{\left(H_{t}, I_{t}\right) \in \mathcal{T}_{3}\right\}} \geq zn^{\eta}  + \sum_{h=1}^H 4\nu_1 \rho^{h-1} \sum_{(h,i)\in\mc{T}_3}zu }\\
=& \prob{\sum_{t=1}^{n}\left(f^{*}-f\left(X_{t}\right)\right) \mb{I}_{\{\cdot\}} + \sum_{t=1}^{n}\left(f(X_t)-Y_t\right) \mb{I}_{\{\cdot\}} \geq zn^{\eta}  + \sum_{h=1}^H 4\nu_1 \rho^{h-1} \sum_{(h,i)\in\mc{T}_3}zu }\\
\leq & \prob{\sum_{t=1}^{n}\left(f(X_t)-Y_t\right) \mb{I}_{\{\cdot\}} \geq zn^{\eta}  } + \prob{\sum_{t=1}^{n}\left(f^{*}-f\left(X_{t}\right)\right) \mb{I}_{\{\cdot\}}\geq \sum_{h=1}^H 4\nu_1 \rho^{h-1} \sum_{(h,i)\in\mc{T}_3}zu}\\
= & \frac{n_3^{\xi+1} \beta}{z^{\xi}}+\prob{\sum_{t=1}^{n}\left(f^{*}-f\left(X_{t}\right)\right) \mb{I}_{\{\cdot\}} \geq \sum_{h=1}^H 4\nu_1 \rho^{h-1} \sum_{(h,i)\in\mc{T}_3}zu}\\
\leq & \frac{c_8^{\xi+1}\beta}{z^{\xi}} + c_7z^{3-\alpha}\leq \frac{c_8^{\xi+1}\beta+c_7}{z^{\alpha-3}},
\end{aligned}
\]
where as before $c_8$ is a constant not depending on $n$ and $z$ that upper bounds $n_3$, and in the last step we used the fact that $\alpha - 3 < \alpha < \xi (1-\eta ) < \xi$. 

Once again, since $\sum_{h=1}^H 4\nu_1 \rho^{h-1} \sum_{(h,i)\in\mc{T}_3}u$ is upper bounded by the order of $n^{\lambda+1}$, there exists a constant $c_9\geq 1$ such that for any $z\geq c_9$, $z n^{\lambda+1} \geq z^{1/2} n^\eta +\sum_{h=1}^H 4\nu_1  \rho^{h-1} \sum_{(h,i)\in\mc{T}_3}z^{1/2}u,\forall n\geq 1$. Therefore, we have 
\begin{equation}\label{eqn:T3}
\pp\left(R_{n,3} \geq  z n^{\lambda +1} \right)  \leq \frac{c_9^{\alpha-3} (c_8^{\xi+1}\beta + c_7)}{z^{(\alpha-3)/2}},\ \forall z \geq 1,
\end{equation}
due to exactly the same logic as in $\mc{T}_1$ and $\mc{T}_2$, by discussing the two cases $z \geq c_9$ and $1\leq z < c_9$.
\subsection{Regret from $\mc{L}$}
Recall that $\mc{L}$ is the set of nodes that are played on depth $\bar{H}$. We divide the nodes in $\mc{L}$ into two parts $\mc{L} = \mc{L}_1\cup \mc{L}_3$, in analogy to $\mc{T}_1$ and $\mc{T}_3$ in $\mc{T}\backslash\mc{L}$. Let $\mc{L}_1$ be the set of nodes on depth $\bar{H}$ that are descendants of nodes in $I_H$, and let $\mc{L}_3$ be the set of nodes in $\mc{L}$ that are descendants of nodes in $\cup_{0 \leq h\leq H} J_{h}$. By the assumption that $\bar{H} > H$, there is no counterpart of $\mc{T}_2=\cup_{0 \leq h<H}I_h$ in $\mc{L}$.

Similarly, we also decompose the regret from $\mc{L}$ according to the selected node $(H_t,I_t)$ into two parts: $R_\mc{L} = \tilde{R}_{n,1} + \tilde{R}_{n,3}$, where $\tilde{R}_{n, i}=\sum_{t=1}^{n}\left(f^{*}-Y_t\right) \mathbb{I}_{\left\{\left(H_{t}, I_{t}\right) \in \mathcal{L}_{i}\right\}}$. Analyzing the regret from $\mc{L}_1$ and $\mc{L}_3$ is almost the same as $\mc{T}_1$ and $\mc{T}_3$, with only one difference that each node in $\mc{L}$ might be played multiple times. We demonstrate with $\mc{L}_1$ in the following and the analysis for $\mc{L}_3$ naturally follows. 

Again, any node in $I_H$ is by definition $2\nu_1\rho^H$-optimal. By Lemma~\ref{lemma:3}, the domain of $I_H$ lies in $\mc{X}_{4\nu_1\rho^H}$, and we know the descendants of $I_H$ also lie in the domain of $\mc{X}_{4\nu_1\rho^H}$, satisfying $\sum_{t=1}^{n}\left(f^{*}-f\left(X_{t}\right)\right) \mathbb{I}_{\left\{\left(H_{t}, I_{t}\right)\in \mathcal{L}_{1}\right\}} \leq  4\nu_1 \rho^H n$. Let $\tilde{n}_1 = |\mc{L}_1|$. Let $\tilde{X}_1,\dots, \tilde{X}_{n_1}$ denote the arms pulled in $\mc{L}_1$ (we know from Algorithm~\ref{alg:HOO_query} that only one arm in a node will be played and associated with that node, and this arm will be played repeatedly thereafter). For $j=1,\dots, n_1$, define $K_j$ to be the total number of times arm $\tilde{X}_j$ has been played. Finally, let $\tilde{Y}_j^t\ (1\leq t \leq K_j)$ denote the corresponding reward when the $t$-th time arm $\tilde{X}_j$ is played. Then for every $z\geq 1$, 
\[
\begin{aligned}
&\pp\left( \tilde{R}_{n,1} \geq zn^{\eta}  + 4\nu_1 \rho^H n \right)\\
=& \prob{\sum_{t=1}^{n}\left(f^{*}-Y_t\right) \mathbb{I}_{\left\{\left(H_{t}, I_{t}\right) \in \mathcal{L}_{1}\right\}} \geq zn^{\eta}  + 4\nu_1 \rho^H n }\\
=& \prob{\sum_{t=1}^{n}\left(f^{*}-f\left(X_{t}\right)\right) \mathbb{I}_{\left\{\left(H_{t}, I_{t}\right)\in \mathcal{L}_{1}\right\}} + \sum_{t=1}^{n}\left(f(X_t)-Y_t\right) \mathbb{I}_{\left\{\left(H_{t}, I_{t}\right) \in \mathcal{L}_{1}\right\}} \geq zn^{\eta}  + 4\nu_1 \rho^H n }\\
\leq & \prob{\sum_{t=1}^{n}\left(f(X_t)-Y_t\right) \mathbb{I}_{\left\{\left(H_{t}, I_{t}\right) \in \mathcal{L}_{1}\right\}} \geq zn^{\eta}  }\\
\leq & \sum_{j=1}^{n_1} \prob{\sum_{t=1}^{K_j}\left(f(\tilde{X}_j)-\tilde{Y}^t_j\right)   \geq \frac{z}{\tilde{c}_1}K_j^{\eta}}\\
\leq & \frac{\tilde{c}_1^{\xi+1} \beta}{z^{\xi}}\leq \frac{\tilde{c}_1^{\xi+1}\beta}{z^{\alpha-3}},
\end{aligned}
\]
where $\tilde{c}_1\geq n_1$ is a constant that is independent of $n$ and $z$, and hence $\sum_{j=1}^{n_1} \frac{z}{\tilde{c}_1}K_j^\eta \leq \frac{z}{n_1}\sum_{j=1}^{n_1}n^\eta \leq zn^\eta$. Note that in the first inequality we used the fact that $\sum_{t=1}^{n}\left(f^{*}-f\left(X_{t}\right)\right) \mathbb{I}_{\left\{\left(H_{t}, I_{t}\right)\in \mathcal{T}_{1}\right\}} \leq  4\nu_1 \rho^H n$. In the second inequality, we used the union bound. In the third inequality we applied the concentration property of the bandit problem~\eqref{eqn:concentration} with $n=K_j$. Notice that we can only use the concentration property when the requirement $\frac{z}{\tilde{c}_1}\geq 1$ is satisfied, but when $\frac{z}{\tilde{c}_1} < 1$, the inequality also trivially holds because $\frac{\tilde{c}_1^{\xi+1}\beta}{z^{\xi}}  > 1$. The last step holds because $\alpha - 3 < \alpha < \xi (1-\eta) < \xi$. Also notice that the inequality above trivially holds when $0 < z < 1$, because $\beta > 1, \alpha-3 > 0$ and hence $\frac{\beta}{z^{\alpha -3}} > 1$ is an upper bound for any probability.

Similar to the analysis of $\mc{T}_1$, let $\tilde{c}_2\geq 1$ be a constant such that $\tilde{c}_2 n^{\lambda+1} \geq \tilde{c}_2^{1/2} n^\eta + 4 \nu_1 n^{\lambda+1},\forall n \geq 1$. Such a constant always exists because $\tilde{c}_2^{1/2} < \tilde{c}_2$ and $n^\eta < n^{\lambda+1}$. Then it is easy to see that $z n^{\lambda+1}  \geq z^{1/2} n^\eta + 4 \nu_1 n^{\lambda+1} ,\forall n \geq 1$  also holds for any $z\geq \tilde{c}_2$. Therefore, we have the following property:
\begin{equation}\label{eqn:T4}
\pp\left(\tilde{R}_{n,1} \geq  z n^{\lambda +1} \right)  \leq \frac{\tilde{c}_1^{\xi+1 }\tilde{c}_2^{\alpha-3} \beta}{z^{(\alpha-3)/2}},\ \forall z \geq 1.
\end{equation}
To see this, first suppose that $z \geq \tilde{c}_2$; then $z n^{\lambda+1}  \geq z^{1/2} n^\eta + 4 \nu_1 n^{\lambda+1} ,\forall n \geq 1$ and since $\tilde{c}_2\geq 1$, we have $\pp\left(\tilde{R}_{n,1} \geq  z n^{\lambda+1} \right) \leq \pp\left(\tilde{R}_{n,1} \geq  \frac{z^{1/2}}{\tilde{c}_2} n^{\eta}+4 \nu_{1} \rho^{H} n \right) \leq \frac{\tilde{c}_1^{\xi+1}\tilde{c}_2^{\alpha-3} \beta}{z^{(\alpha-3)/2}}$. On the other hand, if $1\leq z < \tilde{c}_2$, then the inequality~\eqref{eqn:T1} trivially holds, because $\tilde{c}_2^{\alpha-3} > z^{\alpha-3} \geq  z^{(\alpha-3)/2}$ and $\beta > 1, \tilde{c}_1\geq 1$, making the RHS greater than $1$. The other side of the concentration inequality follows similarly. This completes the analysis for $\tilde{R}_{n,1}$. 

Similarly, as for the regret from $\mc{L}_3$, we have the following result:
\begin{equation}\label{eqn:T5}
\pp\left(\tilde{R}_{n,3} \geq  z n^{\lambda +1} \right)  \leq \frac{\tilde{c}_9^{\alpha-3} (\tilde{c}_8^{\xi+1}\beta + \tilde{c}_7)}{z^{(\alpha-3)/2}},\ \forall z \geq 1,
\end{equation}
where again $\tilde{c}_7,\tilde{c}_8,\tilde{c}_9$ are constant independent of $n$ and $z$.

\subsection{Completing proof of concentration}
First, recall that the inequalities~\eqref{eqn:T1}\eqref{eqn:T2}\eqref{eqn:T3}\eqref{eqn:T4}\eqref{eqn:T5} still hold even when $0 < z < 1$. This is because the RHS of the inequalities will be greater than $1$, which is a trivial upper bound for a probability value. Putting together the bounds we got for each individual term, for every $z \geq 1$, we have
\[
\prob{R_n \geq z n^{\lambda+1}} \leq \sum_{i=1}^{3}\prob{R_{n,i}\geq \frac{z}{5}n^{\lambda+1} } + \sum_{i=1}^{2}\prob{\tilde{R}_{n,i}\geq \frac{z}{5}n^{\lambda+1} } \leq \frac{\beta'}{z^{(\alpha-3)/2}},
\]
where $\beta'>1$ is a constant independent of $n$ and $z$. Therefore, we have the desired concentration property:
\begin{equation}
\pp(\sum_{t=1}^n Y_t - n f^* \geq n^{\eta'}z  ) \leq \frac{\beta'}{z^{\xi'}},
\end{equation}
where $\xi' = (\alpha-3)/2, \eta' = \lambda+1 = \frac{\frac{\alpha}{\xi (1-\eta)}+d'+\frac{1}{1-\eta}}{1+d'+\frac{1}{1-\eta}}$, and $\beta'>1$ depends on $\alpha,\beta, \eta,\xi$ and $\bar{H}$. The other side of the concentration inequality follows similarly. 

\subsection{Convergence results}
We conclude with a convergence analysis of the regret. Let $R_n = \sum_{t=1}^n (f^*-Y_t)$ denote the regret of Algorithms~\ref{alg:HOO_query} and~\ref{alg:HOO_update} with the depth limitation $\bar{H}$. In the following, we proceed with the special case that there is only one optimal node on depth $\bar{H}$, i.e., there is only one node $(\bar{H}, I^*)$ on depth $\bar{H}$ with $\Delta_{\bar{H},I^*} \leq 2\nu_1\rho^{\bar{H}}$, which in turn implies $\mc{P}_{\bar{H},I^*}\subseteq \mc{X}_{4\nu_1 \rho^{\bar{H}}}$ (Lemma~\ref{lemma:3}). The regret of the general case with multiple optimal nodes is bounded by a constant multiple of this special case.

We partition the regret into three parts, but in a way that is slightly different from the previous concentration analysis. Let $R_n = R_{\mc{T}} + R_{n,1} + R_{n,3}$, where $R_{\mc{T}}$ denotes the regret above depth $\bar{H}$, $R_{n,1}$ denotes the regret from $\mc{L}_1$ (the set of nodes on depth $\bar{H}$ that are descendants of nodes in $I_H$), and $R_{n,3}$ denotes the regret from $\mc{L}_3$ (the set of nodes on depth $\bar{H}$ that are descendants of nodes in $\cup_{0\leq h\leq H}J_h$). Recall that the bandit rewards are bounded in $[-R, R]$. Then it is easy to see that $R_{\mc{T}}$ is bounded by a constant, denoted by $C_1$, because the number of nodes played above depth $\bar{H}$ is upper bounded by a constant independent of $n$. 

Now we consider $R_{n,1}$. Any node in $I_H$ is by definition $2\nu_1\rho^H$-optimal. By Lemma~\ref{lemma:3}, the domain of $I_H$ lies in $\mc{X}_{4\nu_1\rho^H}$, and we know the descendants of $I_H$ also lie in the domain of $\mc{X}_{4\nu_1\rho^H}$, satisfying $\sum_{t=1}^{n}\left(f^{*}-f\left(X_{t}\right)\right) \mathbb{I}_{\left\{\left(H_{t}, I_{t}\right)\in \mathcal{L}_{1}\right\}} \leq  4\nu_1 \rho^H n$. Let $\tilde{n}_1 = |\mc{L}_1|$, and let $\mb{I}_{\{\cdot\}}$ denote $\mb{I}_{\{(H_t,I_t) \in\mc{L}_1\}}$ for short. Then we have
\[
\begin{aligned}
\e{R_{n,1}} &= \e{\sum_{t=1}^n (f^* - Y_t) \mb{I}_{\{(H_t,I_t) \in\mc{L}_1\}}}\\
&=\e{\sum_{t=1}^n (f^* - f(X_t)) \mb{I}_{\{(H_t,I_t)\in\mc{L}_1\}}} + \e{\sum_{t=1}^n (f(X_t) - Y_t) \mb{I}_{\{(H_t,I_t) \in\mc{L}_1\}}}\\
&\leq 4n\nu_1\rho^{H}  + \e{\sum_{t=1}^n (f(X_t) - f_t(X_t)) \mb{I}_{\{\cdot\}}}+ \e{\sum_{t=1}^n (f_t(X_t) - Y_t) \mb{I}_{\{\cdot\}}}\\
&\leq 4n\nu_1\rho^{{H}}  + \sum_{t=1}^n \frac{C}{t^\zeta},
\end{aligned}
\]
where the last step holds due to the definition of the mean-payoff function that $\e{{Y}_t} = \e{f_t({X_t})}$ and the convergence property of $f_t$. Since $\sum_{t=1}^n \frac{1}{t^\zeta} \leq \int_{0}^{n}t^{-\zeta} \leq \frac{n^{1-\zeta}}{1-\zeta}$, there exists some constant $C_2$ such that
\[
\begin{aligned} 
\frac{1}{n}\e{R_{n,1}} 
&\leq \frac{1}{n}\left( 4n\nu_1\rho^{{H}}  + \frac{Cn^{1-\zeta}}{1-\zeta} \right)\\
&\leq 4\nu_1\rho^{{H}} + \frac{C}{(1-\zeta)n^{\zeta}}\\
&\leq\frac{C_2}{n^{\zeta}},
\end{aligned} 
\]
where the last step is by the fact that $\rho^{H} = n^\lambda$ and that $\zeta \leq -\lambda$. 

Finally, we analyze the regret of $R_{n,3}$. Let $\tilde{n}_3 = \abs{\mc{L}_3}$. For any node $(h,i)\in\mc{L}_3$, since the parent of any $(h,i)\in J_h$ is in $I_{h-1}$, we know by Lemma~\ref{lemma:3} that the domain of $(h,i)$ is in $\mc{X}_{4\nu_1 \rho^{h-1}}$. Further, $(h,i)$ is not $2\nu_1\rho^h$-optimal by the definition of $J_h$. We then have
\[
\begin{aligned}
\e{R_{n,3}} &= \e{\sum_{t=1}^n (f^* - Y_t) \indicator_{\{(H_t,I_t) \in\mc{L}_3\}}}\\
&=\e{\sum_{t=1}^n (f^* - f(X_t)) \indicator_{\{(H_t,I_t)\in\mc{L}_3\}}} + \e{\sum_{t=1}^n (f(X_t) - Y_t) \indicator_{\{(H_t,I_t) \in\mc{L}_3\}}}\\
&\leq \sum_{h=1}^H 4\nu_1 \rho^{h-1}\sum_{i:(h,i)\in J_h}\e{T_{h,i}(n)} +   \frac{C}{(1-\zeta)n^{\zeta-1}}\\
&\leq \sum_{h=1}^H 4\nu_1 \rho^{h-1} \abs{J_h}\left[\left(\frac{2n^{\alpha/\xi}}{\nu_1\rho^{h}}\right)^{\frac{1}{1-\eta}} + 2 + \frac{1}{\alpha -3}\right] + \frac{C}{(1-\zeta)n^{\zeta-1}}
\end{aligned}
\]
where the last step is by an application of Lemma~\ref{lemma:expectedt}. Further, since the parent of $J_h$ is in $I_{h-1}$, we know from Lemma~\ref{lemma:I_h} that $\abs{J_h}\leq 2\abs{I_{h-1}} \leq 2 C_3\left(\nu_{2} \rho^{h-1}\right)^{-d^{\prime}}$ for some constant $C_3$. Therefore, there exists some constant $C_4$, such that
\[
\begin{aligned}
\frac{1}{n}\e{R_{n,3}}&\leq \frac{1}{n}\sum_{h=1}^H 8C_3\nu_1 \rho^{h-1} \left(\nu_{2} \rho^{h-1}\right)^{-d^{\prime}}\left[\left(\frac{2n^{\alpha/\xi}}{\nu_1\rho^{h}}\right)^{\frac{1}{1-\eta}} + 2 + \frac{1}{\alpha -3}\right]   + \frac{C}{(1-\zeta)n^{\zeta}}\leq   \frac{C_4}{n^{\zeta}},
\end{aligned}
\]
where the last step holds because $\frac{1}{n}\sum_{h=1}^H 8C_3\nu_1\rho^{h-1} \left(\nu_{2} \rho^{h-1}\right)^{-d^{\prime}}\left(\frac{2n^{\alpha/\xi}}{\nu_1\rho^{h}}\right)^{\frac{1}{1-\eta}}$ is in the order of $O(n^{\lambda})$, and by the fact that $\zeta\leq - \lambda$. 

Putting everything together, we arrive at the desired convergence result:
\[
\abs{f^* - \frac{1}{n}\e{\sum_{t=1}^n Y_t}} =  \abs{\frac{1}{n}\e{R_n}} =  \abs{\frac{1}{n}\e{R_{\mc{T}} + R_{n,1} + R_{n,3}}}\leq  \frac{C_0}{n^{\zeta}},
\]
where $C_0>0$ is a proper constant that can be calculated from $C,R, \alpha, \nu_1,\bar{H}$ and $\zeta$.

\clearpage
\section{Proof of Theorem~\ref{thm:mcts}}\label{appendix:mcts}
In the following, we provide a complete proof for Theorem~\ref{thm:mcts}. The idea of this proof is built upon the analysis of fixed-depth Monte-Carlo tree search derived in~\cite{shah2019reinforcement}. Given the value function oracle $\hat{V}$ at the leaf nodes, a depth-$D$ MCTS can be approximately considered as $D$ steps of value iteration starting from $\hat{V}$. Let $V^{(d)}$ be the value function after $d$ steps of exact value iteration with $V^{(0)} = \hat{V}$. Since value iteration is a contraction mapping with respect to the $L^\infty$ norm, we have $\left\|V^{(d+1)}-V^{*}\right\|_{\infty} \leq \gamma\left\|V^{(d)}-V^{*}\right\|_{\infty}$, where $V^*$ is the optimal value function. Therefore, we conclude that 
\begin{equation}\label{eqn:vi} 
\left|V^{(D)}(s^{(0)})-V^{*}(s^{(0)})\right| \leq \gamma^{D}\left\|\hat{V}-V^{*}\right\|_{\infty} = \gamma^D \varepsilon_0
\end{equation} 
for the MCTS root node $s^{(0)}$. 

In the following, we will show that the empirical average reward collected at the root node of MCTS (denoted as $\tilde{v}^{(0)}(s^{(0)})/n$ in Algorithm~\ref{alg:POLY-HOOT}) is within $O(n^{\eta-1})$ of $V^{(D)}(s^{(0)})$ after $n$ rounds of MCTS simulations. The proof is based on an inductive procedure that we will go through in the following sections. Before that, we first introduce a lemma that will be useful throughout. 

\begin{lemma}\label{lemma:x+y}
	Consider real-valued random variables $X_i,Y_i$ for $i\geq 1$, where $X_i$'s are independent and identically distributed, taking values in $[-B,B]$ for some $B>0$. $Y_i$'s are independent of $X_i$'s, satisfying the following two properties:
	
	\quad A. Convergence: Let $\bar{Y}_{n}=\frac{1}{n}\left(\sum_{i=1}^{n} Y_{i}\right)$; then there exists $C>0, 0<\zeta\leq1/2 $, and $\mu_Y$, such that for every integer $n\geq 1$
	\begin{equation}
	\abs{\e{\bar{Y}_n} - \mu_Y} \leq  \frac{C}{n^{\zeta}}
	\end{equation}
	\quad B. Concentration: There exist constants $\beta > 1, \xi>0,$ and $1/2\leq \eta < 1$, such that for every $z\geq 1$ and every integer $n\geq 1$:
	\begin{equation}
	\mathbb{P}\left(n \bar{Y}_{n}-n \mu_{Y} \geq n^{\eta} z  \right) \leq \frac{\beta}{z^{\xi}}, \quad \mathbb{P}\left(n \bar{Y}_{n}-n \mu_{Y} \leq-n^{\eta} z\right) \leq \frac{\beta}{z^{\xi}}.
	\end{equation}
	
	Let $Z_i = X_i +\gamma Y_i$ for some $0 < \gamma <1$, and let $\bar{Z}_{n}=\frac{1}{n} \sum_{i=1}^{n} Z_{i}=\frac{1}{n}\sum_{i=1}^{n}\left(X_{i}+ \gamma Y_{i}\right)$. Define $\mu_X = \e{X_1}$. Then, the following properties are satisfied:
	
	\quad A. Convergence: 
	\begin{equation}
	\abs{\e{\bar{Z}_n} -(\mu_x + \gamma \mu_Y)} \leq  \frac{C}{n^{\zeta}}
	\end{equation}
	\quad B. Concentration: There exists a constant $\beta' > 1$ depending on $\gamma, \xi,\beta$ and $B$, such that for every $z\geq 1$ and every integer $n\geq 1$:
	\[
	\begin{aligned}
	&\mathbb{P}\left(n \bar{Z}_{n}-n (\mu_X + \gamma \mu_{Y}) \geq n^{\eta} z \right) \leq \frac{\beta'}{z^{\xi}},\\
	&\mathbb{P}\left(n  \bar{Z}_{n}-n (\mu_X + \gamma \mu_{Y})\leq-n^{\eta} z\right) \leq \frac{\beta'}{z^{\xi}}.
	\end{aligned}
	\]
\end{lemma}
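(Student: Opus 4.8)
The plan is to treat the two claims separately; the first is a one-line consequence of linearity of expectation. Since the $X_i$ are i.i.d.\ with $\mathbb{E}[X_1]=\mu_X$, we have $\mathbb{E}[\bar Z_n]=\mathbb{E}[\bar X_n]+\gamma\,\mathbb{E}[\bar Y_n]=\mu_X+\gamma\,\mathbb{E}[\bar Y_n]$, hence $\big|\mathbb{E}[\bar Z_n]-(\mu_X+\gamma\mu_Y)\big|=\gamma\,\big|\mathbb{E}[\bar Y_n]-\mu_Y\big|\le \gamma C/n^\zeta\le C/n^\zeta$, using the convergence hypothesis on $Y$ together with $\gamma\in(0,1)$. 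Note that neither step uses the independence of the $X_i$ from the $Y_i$.

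\textbf{Concentration.} Write $n\bar Z_n-n(\mu_X+\gamma\mu_Y)=\big(\sum_{i=1}^n X_i-n\mu_X\big)+\gamma\big(\sum_{i=1}^n Y_i-n\mu_Y\big)$. On the event that the left-hand side is at least $n^\eta z$, at least one of the two terms on the right is at least $\tfrac12 n^\eta z$, so a union bound gives $\mathbb{P}\big(n\bar Z_n-n(\mu_X+\gamma\mu_Y)\ge n^\eta z\big)\le \mathbb{P}\big(\sum_i X_i-n\mu_X\ge \tfrac12 n^\eta z\big)+\mathbb{P}\big(\sum_i Y_i-n\mu_Y\ge \tfrac{1}{2\gamma}n^\eta z\big)$. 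For the first term, the $X_i$ are i.i.d.\ and supported in $[-B,B]$, so Hoeffding's inequality yields the bound $\exp\big(-n^{2\eta-1}z^2/(8B^2)\big)\le \exp\big(-z^2/(8B^2)\big)$, where the last inequality is exactly where $\eta\ge 1/2$ is used to remove the $n$-dependence; since $z\mapsto z^\xi e^{-z^2/(8B^2)}$ is bounded on $[1,\infty)$, this is at most $c_X/z^\xi$ for a constant $c_X=c_X(\xi,B)$. For the second term, if $z/(2\gamma)\ge 1$ the concentration hypothesis on $Y$ applies directly and bounds it by $\beta(2\gamma)^\xi/z^\xi$; if $z/(2\gamma)<1$ then $\beta(2\gamma)^\xi/z^\xi>\beta>1$ and the bound holds trivially. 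Summing the two contributions and taking $\beta'\defeq\max\{2,\ c_X+\beta(2\gamma)^\xi\}>1$ (which depends only on $\gamma,\xi,\beta,B$) gives the claimed upper-tail inequality, and the lower tail is identical after replacing each variable by its negative.

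\textbf{Main obstacle.} There is no deep difficulty here: the lemma merely records that perturbing $\gamma Y_i$ by an i.i.d.\ bounded term $X_i$ --- which concentrates exponentially and is therefore negligible against the polynomial scale $n^\eta$ --- preserves the polynomial convergence rate $\zeta$ and the polynomial concentration exponents $\eta,\xi$, degrading only the multiplicative constant. The only points that need care are (i) checking that $\eta\ge 1/2$ is precisely what makes the Hoeffding bound on the $X$-part independent of $n$, and (ii) handling the regime $z\in[1,2\gamma)$, where the $Y$-hypothesis cannot be invoked at the rescaled level $z/(2\gamma)<1$ but the desired bound holds vacuously once $\beta'$ is taken large enough. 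Everything else is bookkeeping of constants.
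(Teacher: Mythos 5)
Your proposal is correct and follows essentially the same route as the paper's proof: linearity of expectation plus $\gamma<1$ for convergence, and for concentration the same split of the deviation into the $X$-part at level $\tfrac12 n^\eta z$ (killed by Hoeffding, with $\eta\ge 1/2$ removing the $n$-dependence) and the $Y$-part at level $\tfrac{1}{2\gamma}n^\eta z$ (handled by the hypothesis on $Y$). Your treatment is in fact slightly more careful than the paper's, since you explicitly absorb the Gaussian tail into a $c_X/z^\xi$ bound and handle the regime $z/(2\gamma)<1$ where the $Y$-hypothesis cannot be invoked directly, both of which the paper folds silently into the choice of $\beta'$.
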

\begin{proof}
	We first prove the convergence property of $\bar{Z}_n$.  $\abs{\e{\bar{Z}_n} - (\mu_X+\gamma \mu_Y)} =\abs{\gamma \e{\bar{Y}_n} - \gamma \mu_Y} \leq  \frac{\gamma C}{n^{\zeta}}  \leq \frac{C}{n^{\zeta}}$. 
	
	We then prove the concentration property of $\bar{Z}_n$. Let $\bar{X}_n = \frac{1}{n}\sum_{i=1}^n X_i$. By Hoeffding's inequality, we know $\prob{\bar{X}_n - \mu_X \geq \varepsilon}\leq \exp(\frac{-2n\varepsilon^2}{B^2})$. Then,
	\[
	\begin{aligned}
	&\pp\left(n \bar{Z}_{n}-n (\mu_X + \gamma \mu_{Y}) \geq n^{\eta} z\right)\\
	=&\pp\left(n\bar{X}_{n} - n\mu_X+n\gamma  \bar{Y}_{n}-n \gamma \mu_{Y} \geq n^{n} z \right)\\
	\leq&  \pp\left(n\bar{X}_{n} - n\mu_X\geq \frac{n^{\eta} z}{2}\right)+\pp\left(n\bar{Y}_{n}-n \mu_{Y} \geq \frac{n^{\eta} z}{2 \gamma } \right)\\
	\leq& \exp\left(-\frac{n^{2\eta - 1}z^2}{2B^2}\right) + \frac{2^\xi\beta  \gamma^\xi}{z^\xi}\\
	\leq& \frac{\beta'}{z^\xi}
	\end{aligned}
	\]
	where $\beta'$ is a constant large enough depending on $\gamma, \xi,\beta$ and $B$. The other side of the concentration inequality follows similarly. 
\end{proof}
\subsection{Base case}
We wanted to inductively show that the empirical mean reward collected at the root node of MCTS is within $O(n^{\eta-1})$ of the value iteration result $V^{(D)}(s^{(0)})$ after $n$ rounds of MCTS simulations. We start with the induction base case at MCTS depth $D-1$, which contains the parent nodes of the leaf nodes at level $D$. 

First, notice that there are only finitely many nodes at MCTS depth $D-1$ when $n$ goes to infinity, even though both the state space and the action space are continuous. This is because the HOO tree has limited depth at each MCTS node, and we repeatedly take the same action at a leaf of the HOO tree, resulting in a finite number of actions tried at each state. Further, we have assumed deterministic transitions, and thus each action at a given state repeatedly leads to the same destination state throughout the MCTS process. Combining those two properties gives finite number of nodes in the MCTS tree. 

Consider a node denoted as $i$ at depth $D-1$, and let $s_{i,D-1}$ denote the corresponding state. According to the definition of Algorithm~\ref{alg:POLY-HOOT}, whenever state $s_{i,D-1}$ is visited, the bandit algorithm will select an action $a$ from the action space, and the environment will transit to state $s_{D}'=s_{i,D-1}\circ a$ at depth $D$. The corresponding reward collected at node $i$ of depth $D-1$ would be $R(s_{i,D-1},a) + \gamma \tilde{v}^{(D)}(s_D')$, where the reward $R(s,a)$ is an independent random variable taking values bounded in $[-R_{max}, R_{max}]$. Recall that we use a deterministic value function oracle at depth $D$, and hence $\tilde{v}^{(D)}(s_D') = \hat{V}(s_D')$ is fully determined once the action $a$ is known. We also know the reward is bounded in $[-\frac{R_{max}}{1-\gamma}-\epsilon_0, \frac{R_{max}}{1-\gamma}+\epsilon_0]$, where $\epsilon_0$ is the largest possible mistake made by the value function oracle. We can then apply Lemma~\ref{lemma:x+y} here, with the $X$'s in Lemma~\ref{lemma:x+y} corresponding to the partial sums of independent rewards $R(s_{i,D-1}, a)$, the $Y$'s corresponding to the deterministic values $\tilde{v}^{(D)}(s_D')$. From the result of Lemma~\ref{lemma:x+y}, we know for the given $\alpha^{(D-1)},\eta^{(D-1)}$ and $\xi^{(D-1)}$ calculated from~\eqref{eqn:parameters}, there exists a constant $\beta^{(D-1)}$ such that the rewards collected at $s_{i,D-1}$ satisfy the concentration property~\eqref{eqn:concentration} required by Theorem~\ref{thm:bandit}. 

Further, let $f_n$ in Theorem~\ref{thm:bandit} be the mean-payoff function when state $s_{i,D-1}$ is visited for the $n$-th time, i.e., $f_n(a) = \e{R(s_{i,D-1},a)} + \gamma \hat{V}(s_D')$. Then since the rewards are stationary, there apparently exists a function $f = f_n,\ \forall n\geq 1$ such that the convergence~\eqref{eqn:convergence} property is satisfied with arbitrary value of $\zeta$ such that $0 < \zeta < 1-\frac{\alpha}{\xi(1-\eta)}$. Since we use exactly the same Algorithms~\ref{alg:HOO_query} and~\ref{alg:HOO_update} in the MCTS simulations as the ones stated in Theorem~\ref{thm:bandit}, the results of Theorem~\ref{thm:bandit} apply. 

Finally, define 
$$
\mu_*^{(D-1)}(s_{i,D-1}) = \sup_{a\in A}\left\{\e{R(s_{i,D-1},a)} + \gamma \tilde{v}^{(D)}(s_{i,D-1}\circ a)\right\}.
$$ 
Applying Theorem~\ref{thm:bandit} gives the following result:
\begin{proposition}
	Consider a node $i$ at depth $D-1$ of MCTS with the corresponding state $s_{i, D-1}$. Let $\tilde{v}_n^{(D-1)}(s_{i,D-1})$ denote the value of $\tilde{v}^{(D-1)}(s_{i,D-1})$ at the end of the $n$-th round of MCTS simulations. Then, for a given $\xi^{(D-1)}>0, \eta^{(D-1)}\in [\frac{1}{2},1), \alpha^{(D-1)} > 3$, and a proper value of $\beta^{(D-1)}$ given by Lemma~\ref{lemma:x+y}, we have
		
	\quad A. Convergence: There exists some constant $C_0 > 0$ and $0 < \zeta^{(D-1)} < 1-\frac{\alpha^{(D-1)}}{\xi^{(D-1)}(1-\eta^{(D-1)})}$, such that 
	\[
	\abs{\frac{1}{n}\e{\tilde{v}_n^{(D-1)}(s_{i,D-1})-\mu_*^{(D-1)}(s_{i,D-1})}} \leq  \frac{C_0}{n^{\zeta^{(D-1)}}}.
	\]
	
	\quad B. Concentration: There exist constants $\beta' > 1, \xi'>0,$ and $1/2\leq \eta' < 1$, such that for every $z\geq 1$ and every integer $n\geq 1$:
	\[
	\begin{aligned}
	&\pp\left(\tilde{v}_n^{(D-1)}(s_{i,D-1}) - n \mu_*^{(D-1)}(s_{i,D-1}) \geq n^{\eta'}z   \right) \leq \frac{\beta'}{z^{\xi'}},\\
	&\prob{\tilde{v}_n^{(D-1)}(s_{i,D-1}) - n \mu_*^{(D-1)}(s_{i,D-1}) \leq -n^{\eta'}z} \leq \frac{\beta'}{z^{\xi'}},
	\end{aligned}
	\]
	where $\eta' = \frac{\frac{\alpha^{(D-1)}}{\xi^{(D-1)} (1-\eta^{(D-1)})}+d'+\frac{1}{1-\eta^{(D-1)}}}{1+d'+\frac{1}{1-\eta^{(D-1)}}}$ with constant $d'$ defined in Definition~\ref{dfn:d'}, $\xi' = (\alpha^{(D-1)}-3)/2$, and $\beta'>1$ depends on $\alpha^{(D-1)},\beta^{(D-1)},\eta^{(D-1)},\xi^{(D-1)}$ and $\bar{H}$.
\end{proposition}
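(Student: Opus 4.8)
The plan is to identify the statistic $\tilde v_n^{(D-1)}(s_{i,D-1})$ with the cumulative reward of a single enhanced HOO agent (Algorithms~\ref{alg:HOO_query} and~\ref{alg:HOO_update}) run on a stationary continuous-armed bandit, and then to invoke Theorem~\ref{thm:bandit} verbatim; this is the base case of the induction in the proof of Theorem~\ref{thm:mcts}. Concretely, the bandit sits at $s_{i,D-1}$: its arms are the actions $a\in A$, and the $t$-th time the agent queries an arm $a$, the depth-$(D-1)$ sampling step of Algorithm~\ref{alg:POLY-HOOT} produces $Y_t = r_t^{(D-1)} + \gamma\,\hat{V}(s_{i,D-1}\circ a)$, where $r_t^{(D-1)}\sim R(s_{i,D-1},a)$ are i.i.d.\ samples in $[-R_{max},R_{max}]$ and $\hat{V}(s_{i,D-1}\circ a)$ is a fixed number because both the leaf oracle and the transition map $s\circ a$ are deterministic. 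Hence the temporary mean-payoff function is the time-invariant $f_n(a)\equiv f(a) = \mathbb{E}[R(s_{i,D-1},a)] + \gamma\,\hat{V}(s_{i,D-1}\circ a)$ with $f^* = \mu_*^{(D-1)}(s_{i,D-1})$, the rewards lie in $[-R,R]$ with $R = \tfrac{R_{max}}{1-\gamma}+\varepsilon_0$, and $\tilde v_n^{(D-1)}(s_{i,D-1}) = \sum_{t=1}^n Y_t$ by the update line $\tilde v^{(d)}\gets\tilde v^{(d)}+Y^{(d)}$ of Algorithm~\ref{alg:POLY-HOOT}, where $n$ now denotes the number of queries the agent at $s_{i,D-1}$ has received.

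The next step is to check the two hypotheses of Theorem~\ref{thm:bandit}. Fixed-arm convergence~\eqref{eqn:convergence} is immediate: $f_n\equiv f$ gives $\|f_n-f\|_\infty = 0 \le C/n^\zeta$ for any $C>0$ and any $\zeta\in(0,\tfrac{1}{2})$, so we are free to take the input $\zeta$ as large as the remaining (purely bandit-side) terms of Theorem~\ref{thm:bandit} allow; this is what makes the output exponent satisfy the bound $0<\zeta^{(D-1)} < 1 - \frac{\alpha^{(D-1)}}{\xi^{(D-1)}(1-\eta^{(D-1)})}$ stated in the proposition, which is positive because the first line of~\eqref{eqn:parameters} yields $\alpha^{(D-1)} = \xi^{(D-1)}\eta^{(D-1)}(1-\eta^{(D-1)})<\xi^{(D-1)}(1-\eta^{(D-1)})$. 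Fixed-arm concentration~\eqref{eqn:concentration} is supplied by Lemma~\ref{lemma:x+y}, applied arm by arm with the $X_i$ equal to i.i.d.\ copies of $R(s_{i,D-1},a)$ and the $Y_i$ equal to the constant $\hat{V}(s_{i,D-1}\circ a)$: the constant sequence satisfies both premises of Lemma~\ref{lemma:x+y} trivially (its centered partial sums vanish), and the Hoeffding step inside that lemma absorbs the i.i.d.\ part, delivering~\eqref{eqn:concentration} for $Y_t=X_t+\gamma Y_t$ with the prescribed $\xi^{(D-1)},\eta^{(D-1)}$ and some constant $\beta^{(D-1)}$. The remaining parameter conditions of Theorem~\ref{thm:bandit} are inherited: $\alpha^{(D-1)} = \xi^{(D-1)}\eta^{(D-1)}(1-\eta^{(D-1)})$ gives $\xi^{(D-1)}\eta^{(D-1)}(1-\eta^{(D-1)})\le\alpha^{(D-1)}<\xi^{(D-1)}(1-\eta^{(D-1)})$, $\alpha^{(D-1)}>3$ is assumed, and the standing bound on $\bar H$ gives $\rho^{\bar H}<n^{\eta^{(D-1)}-1}$.

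Feeding these into Theorem~\ref{thm:bandit} with $(\alpha,\xi,\eta)=(\alpha^{(D-1)},\xi^{(D-1)},\eta^{(D-1)})$ then yields the proposition directly: part~A gives $\big|\tfrac{1}{n}\mathbb{E}[\sum_{t=1}^n Y_t]-f^*\big|\le C_0/n^{\zeta^{(D-1)}}$, i.e.\ the asserted convergence of $\tilde v_n^{(D-1)}(s_{i,D-1})$ toward $n\,\mu_*^{(D-1)}(s_{i,D-1})$, and part~B gives the two-sided concentration with $\eta' = \frac{\frac{\alpha^{(D-1)}}{\xi^{(D-1)}(1-\eta^{(D-1)})}+d'+\frac{1}{1-\eta^{(D-1)}}}{1+d'+\frac{1}{1-\eta^{(D-1)}}}$ ($d'$ as in Definition~\ref{dfn:d'}), $\xi' = (\alpha^{(D-1)}-3)/2$, and $\beta'$ depending on $\alpha^{(D-1)},\beta^{(D-1)},\eta^{(D-1)},\xi^{(D-1)},\bar H$. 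I do not expect a genuine obstacle here, since all the analytic work is already done inside Theorem~\ref{thm:bandit} and Lemma~\ref{lemma:x+y}; the only points that need care are (i) the bookkeeping identification between the MCTS round index and the number of queries received by the agent at $s_{i,D-1}$ (a depth-$(D-1)$ node is not visited on every MCTS round), and (ii) the fact---already recorded in the preceding paragraph---that the bounded-depth HOO trees together with deterministic transitions make the set of reachable depth-$(D-1)$ states and actions finite, so that the HOO agent at $s_{i,D-1}$ is well defined across all simulation rounds.
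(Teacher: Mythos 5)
Your proposal is correct and follows essentially the same route as the paper's own proof: identify the depth-$(D-1)$ node with a stationary continuous-armed bandit whose rewards are $R(s_{i,D-1},a)+\gamma\hat{V}(s_{i,D-1}\circ a)$, verify property~\eqref{eqn:convergence} trivially from stationarity and property~\eqref{eqn:concentration} via Lemma~\ref{lemma:x+y} (i.i.d.\ reward part plus deterministic oracle part), and then invoke Theorem~\ref{thm:bandit} with $(\alpha^{(D-1)},\xi^{(D-1)},\eta^{(D-1)})$. Your added remarks on the round-index bookkeeping and the finiteness of reachable depth-$(D-1)$ nodes match observations the paper also makes, so there is nothing substantive to correct beyond the notational slip ``$Y_t=X_t+\gamma Y_t$'' (which should read $Z_t=X_t+\gamma Y_t$ in the notation of Lemma~\ref{lemma:x+y}).
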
 
Since $\alpha^{(D-1)} < \xi^{(D-1)}(1-\eta^{(D-1)})$, we can see $0<\eta'<1$. We would also like to remark that the definition of $\mu_*^{D-1}(s_{i,D-1})$ is exactly the value function estimation at $s_{i,D-1}$ after one step of value iteration starting from $\hat{V}$. If we set $\alpha^{(D-1)} = \xi^{(D-1)}\eta^{(D-1)}(1-\eta^{(D-1)})$, then $\zeta^{(D-1)}\in (0, \frac{1}{2})$.This completes the base case for our induction. 

\subsection{Induction step}
We have shown that the convergence and concentration requirements are satisfied from depth $D$ to depth $D-1$. In the following, we will recursively show that these properties also hold from depth $d$ to depth $d-1$ for all $1 \leq d \leq D-1$.

Consider a node denoted as $i$ at depth $d-1$, and let $s_{i,d-1}$ denote the corresponding state. Again, according to the definition of Algorithm~\ref{alg:POLY-HOOT}, whenever state $s_{i,d-1}$ is visited, the bandit algorithm will select an action $a$ from the action space, and the environment will transit to state $s_{d}'=s_{i,d-1}\circ a$ at depth $d$. The corresponding reward collected at node $i$ of depth $d-1$ would be $R(s_{i,d-1},a) + \gamma \tilde{v}^{(d)}(s_d')$, where the reward $R(s,a)$ is an independent random variable taking values bounded in $[-R_{max}, R_{max}]$. Our induction hypothesis assumes that $\tilde{v}^{(d)}$ satisfies the convergence and concentration properties for all states at depth $d$, with parameters $\alpha^{(d)}, \xi^{(d)},\eta^{(d)}$ defined by~\eqref{eqn:parameters} and proper value of $\beta^{(d)}$. Therefore, we can again apply Lemma~\ref{lemma:x+y} here, with the $X$'s in Lemma~\ref{lemma:x+y} corresponding to the partial sums of independent rewards $R(s_{i,d-1}, a)$, and the $Y$'s corresponding to $\tilde{v}^{(d)}(s_d')$ that satisfy the convergence and concentration properties by our induction hypothesis. From the result of Lemma~\ref{lemma:x+y}, we know for the given $\alpha^{(d-1)},\eta^{(d-1)}$ and $\xi^{(d-1)}$ calculated from~\eqref{eqn:parameters}, there exists a constant $\beta^{(d-1)}$ such that the rewards collected at $s_{i,d-1}$ satisfy the concentration property~\eqref{eqn:concentration} required by Theorem~\ref{thm:bandit}. 

Let $f_n$ in Theorem~\ref{thm:bandit} be the mean-payoff function after state $s_{i,D-1}$ is visited for the $n$-th time, i.e., $f_n(a) = \e{R(s_{i,D-1},a)} + \gamma \tilde{v}^{(d)}_{n}(s_d')/n$. Define $f(a) = \e{R(s_{i,D-1},a)} + \gamma \mu_*^{(d)}(s_d')$, then we can see the convergence requirement~\eqref{eqn:convergence} is also satisfied by $f_n$ and $f$, with $\zeta = \zeta^{(d)}$. Therefore, the results of Theorem~\ref{thm:bandit} apply.

Finally, define 
$$
\mu_*^{(d-1)}(s_{i,d-1}) = \sup_{a\in A}\left\{\e{R(s_{i,d-1},a)} + \gamma \mu_*^{(d)}(s_{i,d-1}\circ a)\right\}.
$$ 
A direct application of Theorem~\ref{thm:bandit} gives the following result:
\begin{proposition}
	For a node $i$ at depth $d-1$ of MCTS with the corresponding state $s_{i, d-1}$. Let $\tilde{v}_n^{(d-1)}(s_{i,d-1})$ denote the value of $\tilde{v}^{(d-1)}(s_{i,d-1})$ at the end of the $n$-th round of MCTS simulations. Then, for a given $\xi^{(d-1)}>0, \eta^{(d-1)}\in [\frac{1}{2},1), \alpha^{(d-1)} > 3$, and a proper value of $\beta^{(d-1)}$ given by Lemma~\ref{lemma:x+y}, we have
	
	\quad A. Convergence: There exists some constant $C_0 > 0$ and $0 < \zeta^{(d-1)} < 1-\frac{\alpha^{(d-1)}}{\xi^{(d-1)}(1-\eta^{(d-1)})}$, such that 
	\begin{equation}\label{eqn:induction}
	\abs{\frac{1}{n}\e{\tilde{v}_n^{(d-1)}(s_{i,d-1})-\mu_*^{(d-1)}(s_{i,d-1})}} \leq  \frac{C_0}{n^{\zeta^{(d-1)}}}.
	\end{equation}
	
	\quad B. Concentration: There exist constants $\beta' > 1, \xi'>0,$ and $1/2\leq \eta' < 1$, such that for every $z\geq 1$ and every integer $n\geq 1$:
	\[
	\begin{aligned}
	&\pp\left(\tilde{v}_n^{(d-1)}(s_{i,d-1}) - n \mu_*^{(d-1)}(s_{i,d-1}) \geq n^{\eta'}z  \right) \leq \frac{\beta'}{z^{\xi'}},\\
	&\prob{\tilde{v}_n^{(d-1)}(s_{i,d-1}) - n \mu_*^{(d-1)}(s_{i,d-1}) \leq -n^{\eta'}z } \leq \frac{\beta'}{z^{\xi'}},
	\end{aligned}
	\]
	where $\eta' = \frac{\frac{\alpha^{(d-1)}}{\xi^{(d-1)} (1-\eta^{(d-1)})}+d'+\frac{1}{1-\eta^{(d-1)}}}{1+d'+\frac{1}{1-\eta^{(d-1)}}}$ with constant $d'$ defined in Definition~\ref{dfn:d'}, $\xi' = (\alpha^{(d-1)}-3)/2$, and $\beta'>1$ depends on $\alpha^{(d-1)},\beta^{(d-1)},\eta^{(d-1)},\xi^{(d-1)}$ and $\bar{H}$.
\end{proposition}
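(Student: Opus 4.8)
I would establish this Proposition exactly by the argument used for the base case, with the deterministic leaf oracle replaced by the inductive control on $\tilde v^{(d)}$. Fix the node $i$ at MCTS depth $d-1$ with state $s_{i,d-1}$. As in the base case, only finitely many actions are ever explored at $s_{i,d-1}$ (bounded‑depth HOO trees together with deterministic transitions), and each explored action $a$ deterministically reaches a fixed depth‑$d$ node with state $s_d'=s_{i,d-1}\circ a$. When $a$ is pulled for the $k$-th time (necessarily during the $k$-th visit to $s_{i,d-1}$), the reward credited to node $i$ is $R_k(s_{i,d-1},a)+\gamma\,Y_k^{(d)}(s_d')$, where $R_k(s_{i,d-1},a)$ is a fresh i.i.d.\ draw bounded in $[-R_{max},R_{max}]$ and $Y_k^{(d)}(s_d')$ is the increment added to $\tilde v^{(d)}(s_d')$ in that round, whose partial sums are exactly $\tilde v^{(d)}_k(s_d')=\sum_{j\le k}Y_j^{(d)}(s_d')$.

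\textbf{Reducing node $i$ to a non-stationary bandit.} The induction hypothesis on $\tilde v^{(d)}$ states that, for every depth‑$d$ state, $\tilde v^{(d)}_k$ converges to $\mu_*^{(d)}$ at rate $\zeta^{(d)}$ and concentrates with exponents $(\eta',\xi')$ which, by the parameter recursion~\eqref{eqn:parameters}, coincide with $(\eta^{(d-1)},\xi^{(d-1)})$. I would then apply Lemma~\ref{lemma:x+y} arm by arm, with its $X$-variables the i.i.d.\ rewards $R_k(s_{i,d-1},a)$ and its $Y$-variables the increments $Y_k^{(d)}(s_d')$: the lemma shows that the accumulated arm‑$a$ reward satisfies the fixed‑arm concentration~\eqref{eqn:concentration} around $\e{R(s_{i,d-1},a)}+\gamma\mu_*^{(d)}(s_d')$ with the same exponents $\eta^{(d-1)},\xi^{(d-1)}$ and some new constant $\beta^{(d-1)}$; taking the maximum over the finitely many explored arms makes this constant uniform in $a$. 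Setting $f_n(a)=\e{R(s_{i,d-1},a)}+\gamma\,\tilde v^{(d)}_n(s_d')/n$ and $f(a)=\e{R(s_{i,d-1},a)}+\gamma\mu_*^{(d)}(s_d')$, the convergence part of the hypothesis gives $\norm{f_n-f}_\infty\le C/n^{\zeta^{(d)}}$ (again uniform, by finiteness of the arm set), i.e.\ the fixed‑arm convergence~\eqref{eqn:convergence}. Thus the bandit at node $i$ is a non-stationary continuous‑armed bandit obeying~\eqref{eqn:convergence} and~\eqref{eqn:concentration}, run by the enhanced HOO agent of Algorithms~\ref{alg:HOO_query}--\ref{alg:HOO_update} with parameters $\alpha^{(d-1)},\xi^{(d-1)},\eta^{(d-1)}$ and depth cap $\bar H$.

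\textbf{Invoking Theorem~\ref{thm:bandit}.} Next I would verify the hypotheses of Theorem~\ref{thm:bandit}: the identity $\alpha^{(d-1)}=(1-\eta^{(d-1)})\eta^{(d-1)}\xi^{(d-1)}$ from~\eqref{eqn:parameters} makes $\xi^{(d-1)}\eta^{(d-1)}(1-\eta^{(d-1)})\le\alpha^{(d-1)}<\xi^{(d-1)}(1-\eta^{(d-1)})$ (the first as equality, the second because $\eta^{(d-1)}<1$), and the remaining requirements $\alpha^{(d-1)}>3$, $\tfrac12\le\eta^{(d-1)}<1$ and $\rho^{\bar H}<n^{\eta^{(d-1)}-1}$ follow by propagating the depth‑$(D-1)$ choices in Theorem~\ref{thm:mcts} through the recursion~\eqref{eqn:parameters}. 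Applying Theorem~\ref{thm:bandit} to this bandit, and noting that $f^*=\sup_a f(a)=\mu_*^{(d-1)}(s_{i,d-1})$ while the total reward accumulated at node $i$ after $n$ rounds is $\sum_{t=1}^n Y_t=\tilde v_n^{(d-1)}(s_{i,d-1})$, yields the claimed convergence bound $\abs{\tfrac1n\e{\tilde v_n^{(d-1)}(s_{i,d-1})-\mu_*^{(d-1)}(s_{i,d-1})}}\le C_0/n^{\zeta^{(d-1)}}$ with $0<\zeta^{(d-1)}<1-\alpha^{(d-1)}/(\xi^{(d-1)}(1-\eta^{(d-1)}))$ (a feasible such exponent exists because~\eqref{eqn:parameters} is designed so that a single $\zeta\le 1-\eta^{(0)}$ is admissible at every level), together with the claimed concentration with $\eta'$ and $\xi'$ as in the statement and $\beta'$ depending on $\alpha^{(d-1)},\beta^{(d-1)},\eta^{(d-1)},\xi^{(d-1)},\bar H$. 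This closes the induction from depth $d$ to depth $d-1$.

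\textbf{Main obstacle.} The delicate part is the reduction in the first two paragraphs rather than the final invocation: one must confirm that the ``$k$-th pull of arm $a$ at node $i$'' is consistently indexed with the ``$k$-th visit to the child node $s_d'$'' used by the hypothesis, that the fresh samples $R_k(s_{i,d-1},a)$ are genuinely independent of the downstream increments $Y_k^{(d)}(s_d')$ so Lemma~\ref{lemma:x+y}'s independence premise applies, and that $C$ and $\beta^{(d-1)}$ can be made uniform over arms---all of which rely on the earlier observations that the MCTS tree has finitely many nodes and that transitions are deterministic. The parameter check is routine algebra through~\eqref{eqn:parameters}.
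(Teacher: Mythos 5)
Your proposal is correct and follows essentially the same route as the paper's induction step: reduce the bandit at node $i$ to a non-stationary continuous-armed bandit by combining the i.i.d.\ immediate rewards with the inductively controlled downstream increments via Lemma~\ref{lemma:x+y}, identify $f_n$ and $f$ as you do, and invoke Theorem~\ref{thm:bandit} with the parameters from~\eqref{eqn:parameters}. Your added remarks on uniformity over the finitely many explored arms and on verifying $\xi\eta(1-\eta)\leq\alpha<\xi(1-\eta)$ are details the paper treats more tersely but are consistent with its argument.
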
 
Since $\alpha^{(d-1)} < \xi^{(d-1)}(1-\eta^{(d-1)})$, we can see that $0<\eta'<1$. If we set $\alpha^{(d-1)} = \xi^{(d-1)}\eta^{(d-1)}(1-\eta^{(d-1)})$, then $\zeta^{(d-1)}\in (0, \frac{1}{2})$. Notice that the definition of $\mu_*^{d-1}(s_{i,d-1})$ is exactly the value function estimation at $s_{i,d-1}$ after $D-d$ steps of value iteration starting from $\hat{V}$. This completes the proof of the induction step. 

\subsection{Completing proof of Theorem~\ref{thm:mcts}}
Following an inductive procedure, we can see that the convergence result~\eqref{eqn:induction} also holds at the MCTS root node $s^{(0)}$. After $n$ rounds of MCTS simulations starting from the root node, the empirical mean reward collected at $s^{(0)}$ satisfies:
\begin{equation}
\abs{\frac{1}{n}\e{\tilde{v}_n^{(0)}(s^{(0)})-\mu_*^{(0)}(s^{(0)})}} \leq   \frac{C_0}{n^{\zeta^{(0)}}},
\end{equation}
where $\mu_*^{(0)}(s^{(0)})$ is the value function estimation for $s^{(0)}$ after $D$ rounds of value iteration starting from $\hat{V}$, and $\zeta^{(0)}\in (0,\frac{1}{2})$ if we set $\alpha^{(0)} = \xi^{(0)}\eta^{(0)}(1-\eta^{(0)})$. Recall from Equation~\eqref{eqn:vi} that $\left|\mu_*^{(0)}(s^{(0)})-V^{*}(s^{(0)})\right| \leq \gamma^{D}\left\|\hat{V}-V^{*}\right\|_{\infty} = \gamma^D \varepsilon_0$. By the triangle inequality, we conclude that
\[
\abs{\frac{1}{n}\e{\tilde{v}_n^{(0)}(s^{(0)})-V^{*}(s^{(0)})}} \leq  O\left(\frac{1}{n^{\zeta}}\right) +  \gamma^D \varepsilon_0,
\] 
for some $0 < \zeta < 1/2$. This completes the proof of Theorem~\ref{thm:mcts}.

\section{Technical Lemmas}\label{appendix:lemmas}

\begin{lemma}\label{lemma:3}
	(Lemma 3 in~\cite{bubeck2011x}) Under Assumptions~\ref{assumption:1} and~\ref{assumption:2}, for some region $\mc{P}_{h,i}$, if $\Delta_{h,i}\leq c\nu_1 \rho^h$ for some constant $c\geq 0$, then all the arms in $\mc{P}_{h,i}$ are $\max\{2c,c+1\}$-optimal.
\end{lemma}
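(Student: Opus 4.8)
The plan is to obtain the bound directly from Assumptions~\ref{assumption:1} and~\ref{assumption:2}, with no additional ingredients. Recall that $\Delta_{h,i} = f^{*} - f_{h,i}^{*} = f^{*}-\sup_{z\in\mc{P}_{h,i}}f(z)$, and that an arm $x$ being $\varepsilon$-optimal means $x\in\mc{X}_{\varepsilon}$, i.e.\ $f^{*}-f(x)\le\varepsilon$ (here everything is measured in the natural scale $\nu_{1}\rho^{h}$). So, fixing an arbitrary $x\in\mc{P}_{h,i}$, the goal is to show $f^{*}-f(x)\le\max\{2c,c+1\}\,\nu_{1}\rho^{h}$.

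First I would select a near-maximizer of $f$ inside the region: for each $\epsilon>0$ pick $x_{\epsilon}\in\mc{P}_{h,i}$ with $f^{*}-f(x_{\epsilon})<\Delta_{h,i}+\epsilon\le c\,\nu_{1}\rho^{h}+\epsilon$ (the $\epsilon$-slack is only there to cover the case where the supremum defining $\Delta_{h,i}$ is not attained). Then I would apply the smoothness assumption (Assumption~\ref{assumption:2}) to the pair $(x_{\epsilon},x)$, both of which lie in $\mc{P}_{h,i}$, to get
\[
f^{*}-f(x)\ \le\ \bigl(f^{*}-f(x_{\epsilon})\bigr)+\max\bigl\{f^{*}-f(x_{\epsilon}),\ \ell(x_{\epsilon},x)\bigr\}.
\]
Here the first term is at most $c\,\nu_{1}\rho^{h}+\epsilon$ by the choice of $x_{\epsilon}$, and $\ell(x_{\epsilon},x)\le\mathrm{diam}(\mc{P}_{h,i})\le\nu_{1}\rho^{h}$ by Assumption~\ref{assumption:1}(a), since both points lie in $\mc{P}_{h,i}$. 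Substituting and letting $\epsilon\downarrow 0$ yields $f^{*}-f(x)\le c\,\nu_{1}\rho^{h}+\max\{c,1\}\,\nu_{1}\rho^{h}=\bigl(c+\max\{c,1\}\bigr)\nu_{1}\rho^{h}$.

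The remaining step is the elementary identity $c+\max\{c,1\}=\max\{2c,c+1\}$, checked by splitting into the cases $c\ge 1$ (both sides equal $2c$) and $c\le 1$ (both sides equal $c+1$). This gives $f^{*}-f(x)\le\max\{2c,c+1\}\,\nu_{1}\rho^{h}$, and since $x\in\mc{P}_{h,i}$ was arbitrary, every arm in $\mc{P}_{h,i}$ is $\max\{2c,c+1\}$-optimal. I do not anticipate a genuine obstacle: the argument is a two-line consequence of the two assumptions, and the only points needing mild care are the $\epsilon$-argument for a possibly-unattained supremum and the verification of the $\max$ identity — which is why the statement is simply quoted as Lemma~3 of~\citet{bubeck2011x} rather than re-derived in full here.
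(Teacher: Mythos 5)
Your argument is correct and is precisely the standard proof of Lemma 3 in \citet{bubeck2011x}, which the paper itself omits and simply cites: take a near-maximizer $x_\epsilon$ of $f$ in $\mc{P}_{h,i}$, apply Assumption~\ref{assumption:2} to the pair $(x_\epsilon,x)$, bound $\ell(x_\epsilon,x)$ by $\mathrm{diam}(\mc{P}_{h,i})\le\nu_1\rho^h$ via Assumption~\ref{assumption:1}(a), and let $\epsilon\downarrow 0$. The $\epsilon$-slack for the possibly unattained supremum and the identity $c+\max\{c,1\}=\max\{2c,c+1\}$ are both handled correctly, so there is nothing to add.
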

\begin{proof}
	This lemma is stated in exactly the same as way Lemma 3 in~\cite{bubeck2011x}, and we therefore omit the proof here. 
\end{proof}

\begin{lemma}\label{lemma:I_h}
	There exists some constant $C>0$, such that $\abs{I_h}\leq C(\nu_2\rho^h)^{-d'}$ for all $h\geq 0$.
\end{lemma}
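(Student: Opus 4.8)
The plan is to follow the argument of Lemma~4 in~\cite{bubeck2011x}: bound $\abs{I_h}$ by a packing number of a near-optimal set, and then control that packing number via the near-optimality dimension underlying Definition~\ref{dfn:d'}.

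First I would turn membership in $I_h$ into a geometric statement. If $(h,i)\in I_h$ then $\Delta_{h,i}=f^*-f^*_{h,i}\leq 2\nu_1\rho^h$, so applying Lemma~\ref{lemma:3} with $c=2$ gives that every arm in $\mc{P}_{h,i}$ is $\max\{4,3\}=4$-optimal, i.e. $\mc{P}_{h,i}\subseteq \mc{X}_{4\nu_1\rho^h}$. By Assumption~\ref{assumption:1}(b), $\mc{P}_{h,i}$ contains the $\ell$-open ball $\mc{B}_{h,i}=\mc{B}(x^\circ_{h,i},\nu_2\rho^h)$, hence $\mc{B}_{h,i}\subseteq \mc{X}_{4\nu_1\rho^h}$. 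By Assumption~\ref{assumption:1}(c), the balls $\{\mc{B}_{h,i}:(h,i)\in I_h\}$ are pairwise disjoint. Thus we exhibit $\abs{I_h}$ pairwise-disjoint $\ell$-open balls of radius $\nu_2\rho^h$, all contained in $\mc{X}_{4\nu_1\rho^h}$, and the definition of packing number yields $\abs{I_h}\leq \mc{N}(\mc{X}_{4\nu_1\rho^h},\ell,\nu_2\rho^h)$.

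Next I would match this to the near-optimality dimension. Setting $\varepsilon=\nu_2\rho^h$, we have $4\nu_1\rho^h=(4\nu_1/\nu_2)\varepsilon$, so $\mc{N}(\mc{X}_{4\nu_1\rho^h},\ell,\nu_2\rho^h)=\mc{N}(\mc{X}_{(4\nu_1/\nu_2)\varepsilon},\ell,\varepsilon)$. Let $d$ be the $4\nu_1/\nu_2$-near-optimality dimension of $f$; by Definition~\ref{dfn:d'} we have $d'>d$, so $\limsup_{\varepsilon\to 0}\frac{\ln\mc{N}(\mc{X}_{(4\nu_1/\nu_2)\varepsilon},\ell,\varepsilon)}{\ln(\varepsilon^{-1})}\leq d<d'$. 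Hence there is $\varepsilon_0\in(0,1)$ such that for all $\varepsilon<\varepsilon_0$, $\ln\mc{N}(\mc{X}_{(4\nu_1/\nu_2)\varepsilon},\ell,\varepsilon)\leq d'\ln(\varepsilon^{-1})$, i.e. $\mc{N}(\mc{X}_{(4\nu_1/\nu_2)\varepsilon},\ell,\varepsilon)\leq \varepsilon^{-d'}$. Since $\rho\in(0,1)$, the condition $\nu_2\rho^h<\varepsilon_0$ fails for only finitely many $h$, and for every remaining $h$ we obtain $\abs{I_h}\leq (\nu_2\rho^h)^{-d'}$.

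Finally I would absorb the finitely many exceptional values of $h$ into the constant: for each $h$ with $\nu_2\rho^h\geq\varepsilon_0$ the bound $\abs{I_h}\leq 2^h<\infty$ holds trivially, so choosing $C=\max\!\big\{1,\ \max_{h:\,\nu_2\rho^h\geq\varepsilon_0}\abs{I_h}(\nu_2\rho^h)^{d'}\big\}$ gives $\abs{I_h}\leq C(\nu_2\rho^h)^{-d'}$ for every $h\geq 0$, as claimed. I do not expect a real obstacle here, since this is essentially the argument of~\cite{bubeck2011x}; the only points needing care are the bookkeeping of the scale factor $4\nu_1/\nu_2$ when invoking the near-optimality dimension, and the separate treatment of small $h$, both of which are routine.
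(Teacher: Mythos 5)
Your argument is correct and is precisely the one the paper points to: the paper omits the proof and cites the second step of the proof of Theorem~6 in \cite{bubeck2011x}, which is exactly your packing argument (disjoint balls $\mc{B}_{h,i}$ inside $\mc{X}_{4\nu_1\rho^h}$ via Lemma~\ref{lemma:3} and Assumption~\ref{assumption:1}, then the near-optimality dimension bound with $c=4\nu_1/\nu_2$ and absorption of the finitely many large scales into $C$). No gaps.
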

\begin{proof}
	This result is the same as the second step in the proof of Theorem 6 in~\cite{bubeck2011x}. We therefore omit the proof here.
\end{proof}

\begin{lemma}\label{lemma:pu<f}
	Let Assumptions~\ref{assumption:1} and~\ref{assumption:2} hold. Then for every optimal node~\footnote{Recall Definition~\ref{dfn:optimalnode}.} $(h,i)$ and any integer $n\geq 1$, there exists a constant $\beta_1 > 1$, such that
	\[
	\prob{U_{h,i}(n)\leq f^*}\leq \frac{\beta_1}{n^{\alpha-1}}.
	\]
\end{lemma}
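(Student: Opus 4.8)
The plan is to reduce the event $\{U_{h,i}(n)\le f^*\}$ to a single uniform deviation bound over the finitely many arms that a depth-capped HOO subtree below $(h,i)$ can ever create, and then extract the rate from the polynomial bonus. First I would dispose of the trivial case $T_{h,i}(n)=0$ (then $U_{h,i}(n)=\infty$), and otherwise set $s:=T_{h,i}(n)\ge1$ and write $\widehat{\mu}_{h,i,s}$ for the average of the first $s$ rewards collected in $\mc{C}(h,i)$, so that $U_{h,i}(n)=\widehat{\mu}_{h,i,s}+n^{\alpha/\xi}s^{\eta-1}+\nu_1\rho^{h}$ and hence $\{U_{h,i}(n)\le f^*\}$ forces $\widehat{\mu}_{h,i,s}\le f^*-\nu_1\rho^{h}-n^{\alpha/\xi}s^{\eta-1}$. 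Next I would bring in the geometry: since $(h,i)$ is an optimal node, $\Delta_{h,i}=0$, so Lemma~\ref{lemma:3} with $c=0$ (equivalently, the weak-Lipschitz Assumption~\ref{assumption:2} applied at a global optimum together with the diameter bound in Assumption~\ref{assumption:1}) gives $f(x)\ge f^*-\nu_1\rho^{h}$ for every arm $x$ in $\mc{P}_{h,i}$, hence in any descendant cell. Because the HOO tree has depth at most $\bar H$, at most $K\le 2^{\bar H+1}$ distinct arms are ever pulled inside $\mc{C}(h,i)$, and each at most $n$ times over $n$ rounds.

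The core step is one union bound. Fixing $z\ge1$ and applying the lower-tail half of the fixed-arm concentration property~\eqref{eqn:concentration} to each such arm $x$ and each pull count $m\in\{1,\dots,n\}$, the event
\[
\mc{E}_z=\Big\{\ \textstyle\sum_{t=1}^{m}X^{(x)}_t\ \ge\ m\,f(x)-m^{\eta}z\ \ \text{for all arms }x\text{ in }\mc{C}(h,i)\text{ and all }1\le m\le n\ \Big\}
\]
satisfies $\prob{\mc{E}_z^{\,c}}\le Kn\beta/z^{\xi}$. On $\mc{E}_z$ I would split the $s$ rewards behind $\widehat{\mu}_{h,i,s}$ by the arm that produced each one — arm $x$ contributing $m_x$ of them with $\sum_x m_x=s$ — and combine $f(x)\ge f^*-\nu_1\rho^{h}$ with the concavity estimate $\sum_x m_x^{\eta}\le K^{1-\eta}s^{\eta}$ to obtain $\widehat{\mu}_{h,i,s}\ge f^*-\nu_1\rho^{h}-K^{1-\eta}s^{\eta-1}z$. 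Comparing with the reduction above, on $\mc{E}_z$ the event $\{U_{h,i}(n)\le f^*\}$ cannot occur once $K^{1-\eta}z<n^{\alpha/\xi}$, so taking $z=n^{\alpha/\xi}/(2K^{1-\eta})$ (legitimate once this is $\ge1$) yields
\[
\prob{U_{h,i}(n)\le f^*}\ \le\ \prob{\mc{E}_z^{\,c}}\ \le\ \frac{Kn\beta}{z^{\xi}}\ =\ 2^{\xi}K^{1+(1-\eta)\xi}\,\beta\,n^{1-\alpha}.
\]
For the finitely many $n$ with $n^{\alpha/\xi}<2K^{1-\eta}$ the bound is trivial, $\prob{\cdot}\le1\le\beta_1 n^{1-\alpha}$ after enlarging $\beta_1$; I would then let $\beta_1$ be the largest of these constants, which exceeds $1$ since $\beta>1$ and $K\ge1$.

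The hard part is not any individual estimate but the fact that $\widehat{\mu}_{h,i}(n)$ is an empirical mean over a \emph{random} number $T_{h,i}(n)$ of rewards drawn from a \emph{random} subset of arms of $\mc{P}_{h,i}$, so the fixed-arm property~\eqref{eqn:concentration} cannot be applied verbatim; the argument must pass through the uniform event $\mc{E}_z$ and then check that the extra union-bound factor $n$ costs only one power in the exponent, turning the per-realization rate $n^{-\alpha}$ into the claimed $n^{1-\alpha}$. The second thing to get right is that the combinatorial constant $K$ remains a genuine constant only thanks to the depth cap $\bar H$ on the HOO tree — this is precisely the place where the bounded-depth modification is used, since without it the number of arms (hence the union bound) could grow with $n$.
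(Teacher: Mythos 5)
Your proposal is correct and follows essentially the same route as the paper's proof: both reduce $\{U_{h,i}(n)\le f^*\}$ via Lemma~\ref{lemma:3} (all arms in $\mc{P}_{h,i}$ are $\nu_1\rho^h$-optimal) to a deviation of the empirical mean, then union-bound over the boundedly many arms in $\mc{C}(h,i)$ (finite precisely because of the depth cap $\bar H$) and over the $n$ possible values of the random pull counts, apply the fixed-arm concentration~\eqref{eqn:concentration} with $z$ of order $n^{\alpha/\xi}$, and absorb the extra factor of $n$ to land on the rate $n^{1-\alpha}$. The only differences are cosmetic — you phrase the union bound as a uniform good event $\mc{E}_z$ and use the slightly sharper concavity estimate $\sum_x m_x^\eta\le K^{1-\eta}s^\eta$ where the paper uses $n_j^\eta\le T_{h,i}(n)^\eta$ termwise — and both yield the same conclusion.
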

\begin{proof}
	If $(h,i)$ is not played during the first $n$ rounds, then by assumption $U_{h,i}(n) = \infty$ and the inequality trivially holds. Now we focus on the case where $T_{h,i}(n)\geq 1$. From Lemma~\ref{lemma:3}, we know that $f^*- f(x)\leq \nu_1 \rho^h,\ \forall x\in \mc{P}_{h,i}$. Then we have $\sum_{t=1}^{n}\left(f\left(X_{t}\right)+\nu_{1} \rho^{h}-f^{*}\right) \mathbb{I}_{\left\{\left(H_{t}, I_{t}\right) \in \mathcal{C}(h, i)\right\}} \geq  0$. Therefore, 
	\[
	\begin{aligned}
	&\mathbb{P}\left(U_{h, i}(n) \leq  f^{*} \ \text { and } \ T_{h, i}(n) \geq 1\right)\\
	=&\mathbb{P}\left(\widehat{\mu}_{h, i}(n)+ n^{\alpha/\xi}T_{h,i}(n)^{\eta-1} +\nu_{1} \rho^{h} \leq f^{*} \  \text { and } \  T_{h, i}(n) \geq  1\right)\\
	=&\mathbb{P}\left(T_{h, i}(n) \widehat{\mu}_{h, i}(n)+T_{h, i}(n)\left(\nu_{1} \rho^{h}-f^{*}\right) \leq - n^{\alpha/\xi}T_{h,i}(n)^{\eta} \ \text { and }\ T_{h, i}(n) \geq 1\right)\\
	=&\mathbb{P}\Bigg(\sum_{t=1}^{n}\left(Y_{t}-f\left(X_{t}\right)\right) \mathbb{I}_{\left\{\left(H_{t}, I_{t}\right) \in \mathcal{C}(h, i)\right\}}+\sum_{t=1}^{n}\left(f\left(X_{t}\right)+\nu_{1} \rho^{h}-f^{*}\right) \mathbb{I}_{\left\{\left(H_{t}, I_{t}\right) \in \mathcal{C}(h, i)\right\}} \\
	&\qquad \leq - n^{\alpha/\xi}T_{h,i}(n)^{\eta} \ \text { and }\ T_{h, i}(n) \geq 1 \Bigg)\\
	\leq & \mathbb{P}\left(\sum_{t=1}^{n}\left(f\left(X_{t}\right)-Y_{t}\right) \mathbb{I}_{\left\{\left(H_{t}, I_{t}\right) \in \mathcal{C}(h, i)\right\}} \geq n^{\alpha/\xi}T_{h,i}(n)^{\eta}\  \text { and }\ T_{h, i}(n) \geq  1\right)
	\end{aligned}
	\]
	Since the HOO tree has limited depth, the total number of nodes played in $\mc{C}(h,i)$ is upper bounded by some constant $C>1$ that is independent of $n$. Let $X^j$ denote the $j$-th new node played in $\mc{C}(h,i)$, denote the number of times $X^j$ is played as $n_j$, and let $Y^j_t\ (1 \leq t\leq n_j)$ be the corresponding reward the $t$-th time arm $X^j$ is played. Then, by the union bound, we have
	\[
	\begin{aligned} &\mathbb{P}\left(\sum_{t=1}^{n}\left(f\left(X_{t}\right)-Y_{t}\right) \mathbb{I}_{\left\{\left(H_{t}, I_{t}\right) \in \mathcal{C}(h, i)\right\}} \geq n^{\alpha/\xi}T_{h,i}(n)^{\eta}\  \text { and }\ T_{h, i}(n) \geq  1\right)\\
	\leq &\sum_{T_{h,i}(n)=1}^n \mathbb{P} \left(\sum_{t=1}^{n} \left(f\left(X_{t}\right)-Y_{t}\right) \mathbb{I}_{\left\{\left(H_{t}, I_{t}\right) \in \mathcal{C}(h, i)\right\}} \geq n^{\alpha/\xi}T_{h,i}(n)^{\eta}\right)\\
	= &\sum_{T_{h,i}(n)=1}^n\mathbb{P} \left(\sum_{j=1}^{\bar{H}}\sum_{t=1}^{n_j}\left(f\left(X^j\right)-Y^j_{t}\right)  \geq n^{\alpha/\xi}T_{h,i}(n)^{\eta}\right)\\
	\leq & \sum_{T_{h,i}(n)=1}^n\sum_{j=1}^{C}\mathbb{P} \left(\sum_{t=1}^{n_j}\left(f\left(X^j\right)-Y^j_{t}\right)  \geq \frac{n^{\alpha/\xi}}{C}n_j^{\eta}\right)\\
	\leq & \frac{\beta_1}{n^{\alpha-1}},
	\end{aligned}
	\]
	where $\beta_1>1$ is a constant depending on $C$ and $\beta$, and in the last inequality we applied the concentration property of the bandit problem~\eqref{eqn:concentration}. Notice that we can only use the concentration property when the requirement $z = \frac{n^{\alpha/\xi}}{\bar{H}}\geq 1$ is satisfied, but when $z < 1$, the inequality also trivially holds because $\frac{\beta}{z^{\xi}}  > 1$. This completes the proof of $\prob{U_{h,i}(n)\leq f^*}\leq \frac{\beta_1}{n^{\alpha-1}}.$
\end{proof}

\begin{lemma}\label{lemma:expectedtpre}
	(Lemma 14 in~\cite{bubeck2011x}) Let $(h,i)$ be a suboptimal node. Let $0 \leq k \leq h-1$ be the largest depth such that $(k,i_k^*)$ is on the path from the root $(0,1)$ to $(h,i)$, i.e., $(k,i_k^*)$ is the lowest common ancestor (LCA) of $(h,i)$ and the optimal path. Then, for all integers $u\geq 0$, we have
	$$
	\begin{aligned} 
	\mathbb{E}\left[T_{h, i}(n)\right] \leq u+\sum_{t=u+1}^{n} \mathbb{P}\left\{\left[U_{s, i_{s}^{*}}(t) \leq  f^{*} \text { for some } s \in\{k+1, \ldots, t-1\}\right]\right.\\
	\text { or }\left.\left[T_{h, i}(t)>u \text { and } U_{h, i}(t)>f^{*}\right]\right\}.
	\end{aligned}
	$$
\end{lemma}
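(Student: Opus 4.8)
The plan is to follow the classical regret-decomposition argument for UCB-type tree algorithms — this is the HOO counterpart of Lemma~14 in~\cite{bubeck2011x} — while checking that nothing in the argument relies on the precise (logarithmic) shape of the exploration bonus, only on the facts that $U_{h,i}(t)$ is finite once $(h,i)$ has been played and that it dominates the $B$-value, both of which the polynomial bonus respects. I would begin with the \emph{counting} step. Write $T_{h,i}(n)=\sum_{t=1}^n \indicator_{\{(H_t,I_t)\in\mc{C}(h,i)\}}$ and peel at the threshold $u$: if $T_{h,i}(n)\leq u$ the claim is trivial since the sum on the right is nonnegative; if $T_{h,i}(n)>u$, then the rounds at which a descendant of $(h,i)$ is played once $T_{h,i}$ has already exceeded $u$ are exactly the $T_{h,i}(n)-u$ rounds on which $T_{h,i}$ takes the successive values $u+1,\dots,T_{h,i}(n)$, so
\[
T_{h,i}(n)\leq u+\sum_{t=u+1}^{n}\indicator_{\{(H_t,I_t)\in\mc{C}(h,i)\text{ and }T_{h,i}(t)>u\}}.
\]

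The heart of the proof is the \emph{structural} step: on the event $\{(H_t,I_t)\in\mc{C}(h,i)\text{ and }T_{h,i}(t)>u\}$, I claim that if in addition $U_{s,i_s^*}(t)>f^*$ for every $s\in\{k+1,\dots,t-1\}$, then necessarily $U_{h,i}(t)>f^*$. This follows from three facts about the round-$t$ selection path. (i) HOO always descends to the child with the larger $B$-value, so along that path the $B$-values are non-decreasing; in particular $B_{h,i}(t)\leq U_{h,i}(t)$, and letting $(k+1,i')$ be the ancestor of $(h,i)$ on the path at depth $k+1$ — necessarily $i'\neq i_{k+1}^*$ by the definition of the LCA $(k,i_k^*)$ — we get $B_{k+1,i'}(t)\leq B_{h,i}(t)$. (ii) Since at $(k,i_k^*)$ the round-$t$ path chose $(k+1,i')$ over its sibling $(k+1,i_{k+1}^*)$, we have $B_{k+1,i'}(t)\geq B_{k+1,i_{k+1}^*}(t)$. (iii) A backward induction along the optimal sub-path $(k+1,i_{k+1}^*),(k+2,i_{k+2}^*),\dots$ gives $B_{s,i_s^*}(t)>f^*$ for each such $s$: at the deepest such node present in $\mc{T}_t$ one child is absent, hence $B_{s,i_s^*}(t)=U_{s,i_s^*}(t)>f^*$ (or $=\infty$); the inductive step uses $B_{s,i_s^*}(t)=\min\{U_{s,i_s^*}(t),\max\{B\text{ of children}\}\}\geq\min\{U_{s,i_s^*}(t),B_{s+1,i_{s+1}^*}(t)\}>f^*$, and restricting to $s\leq t-1$ is harmless since the HOO tree has depth at most $t-1$ after $t-1$ rounds. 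Chaining (i)--(iii) yields $U_{h,i}(t)\geq B_{h,i}(t)\geq B_{k+1,i'}(t)\geq B_{k+1,i_{k+1}^*}(t)>f^*$.

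It follows that $\indicator_{\{(H_t,I_t)\in\mc{C}(h,i),\,T_{h,i}(t)>u\}}$ is bounded above by the indicator of the event $\{[\,U_{s,i_s^*}(t)\leq f^*\text{ for some }s\in\{k+1,\dots,t-1\}\,]\text{ or }[\,T_{h,i}(t)>u\text{ and }U_{h,i}(t)>f^*\,]\}$; substituting this into the counting inequality and taking expectations gives the stated bound. The step I expect to be most delicate is the structural one, specifically keeping the $B$-value bookkeeping consistent for nodes not yet in $\mc{T}_t$ (where $B=\infty$) and reconciling the mild off-by-one between the round at which the selection is made and the rounds indexing the $U$- and $B$-values; by contrast, the polynomial bonus itself introduces no difficulty, since only monotonicity of $B$ along the path and finiteness of $U_{h,i}(t)$ when $T_{h,i}(t)\geq 1$ are ever invoked.
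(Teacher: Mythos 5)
Your proof is correct and follows exactly the argument that the paper itself defers to: the paper omits this proof entirely, citing Lemma~14 of \cite{bubeck2011x}, and your counting step, the $B$-value monotonicity along the selection path, the comparison at the LCA, and the backward induction along the optimal path are precisely that classical argument. You also correctly identify the only properties actually used (that $U_{h,i}$ dominates $B_{h,i}$ and that unvisited nodes have infinite $B$-value), which is why the polynomial bonus and the depth cap $\bar{H}$ leave the lemma untouched.
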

\begin{proof}
	This lemma is stated in exactly the same way as Lemma 14 in~\cite{bubeck2011x}, and the proof follows similarly. We hence omit the proof here. 
\end{proof}

\begin{lemma}\label{lemma:pu>f}
	For all integers $t \leq n$, for any suboptimal node $(h, i)$ such that $\Delta_{h, i}>\nu_{1} \rho^{h},$ and for all integers $u \geq A_{h,i}(n) = \ceil{\left(\frac{2n^{\alpha/\xi}}{\Delta_{h,i} - \nu_1\rho^h}\right)^{\frac{1}{1-\eta}}}$, there exists a constant $\beta_2 >1$, such that 
	$$
	\mathbb{P}\left(U_{h, i}(t)>f^{*} \text { and } T_{h, i}(t)>u\right)\leq \frac{\beta_2 t}{n^\alpha}.
	$$
\end{lemma}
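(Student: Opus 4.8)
The plan is to follow the template of the proof of Lemma~\ref{lemma:pu<f}, which controls the ``dual'' event $\{U_{h,i}(n)\le f^*\}$. Concretely: unfold $U_{h,i}(t)>f^*$ into a lower bound on the empirical sum $\sum_{s=1}^t Y_s\indicator_{\{(H_s,I_s)\in\mc{C}(h,i)\}}=T_{h,i}(t)\widehat{\mu}_{h,i}(t)$; exploit that every arm played in $\mc{C}(h,i)$ lies in $\mc{P}_{h,i}$ and is therefore $\Delta_{h,i}$-suboptimal; use the standing hypothesis $T_{h,i}(t)>u\ge A_{h,i}(n)$ to turn the resulting bound on the reward noise into a large-deviation event; and close it with the fixed-arm concentration property~\eqref{eqn:concentration}. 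The bounded depth $\bar H$ of every HOO tree is what makes this work: only a constant number $C>1$ of distinct arms, independent of $n,t,u$, are ever associated with nodes in $\mc{C}(h,i)$, so all union bounds below stay finite. Note also that $u\ge A_{h,i}(n)\ge1$, so $T_{h,i}(t)>u$ already forces $T_{h,i}(t)\ge1$ and $\widehat{\mu}_{h,i}(t)$ is well defined.

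First I would unfold the definition. On $\{T_{h,i}(t)\ge1\}$ the event $U_{h,i}(t)>f^*$ is equivalent, after multiplying through by $T_{h,i}(t)$, to $\sum_{s=1}^t Y_s\indicator_{\{(H_s,I_s)\in\mc{C}(h,i)\}}>T_{h,i}(t)(f^*-\nu_1\rho^h)-t^{\alpha/\xi}T_{h,i}(t)^\eta$. Since $X_s\in\mc{P}_{h,i}$ whenever $(H_s,I_s)\in\mc{C}(h,i)$ and hence $f(X_s)\le f^*_{h,i}=f^*-\Delta_{h,i}$, subtracting $\sum_s f(X_s)\indicator_{\{\cdot\}}\le T_{h,i}(t)(f^*-\Delta_{h,i})$ gives, on the event in the lemma,
\[
\sum_{s=1}^t\bigl(Y_s-f(X_s)\bigr)\indicator_{\{(H_s,I_s)\in\mc{C}(h,i)\}}>T_{h,i}(t)(\Delta_{h,i}-\nu_1\rho^h)-t^{\alpha/\xi}T_{h,i}(t)^\eta.
\]
Now $\Delta_{h,i}-\nu_1\rho^h>0$, so $A_{h,i}(n)$ is well defined, and $T_{h,i}(t)>u\ge A_{h,i}(n)\ge\bigl(2n^{\alpha/\xi}/(\Delta_{h,i}-\nu_1\rho^h)\bigr)^{1/(1-\eta)}$ gives $(\Delta_{h,i}-\nu_1\rho^h)T_{h,i}(t)>2n^{\alpha/\xi}T_{h,i}(t)^\eta$; since $t\le n$ implies $t^{\alpha/\xi}\le n^{\alpha/\xi}$, the right-hand side above is $>n^{\alpha/\xi}T_{h,i}(t)^\eta$. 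Therefore $\{U_{h,i}(t)>f^*,\ T_{h,i}(t)>u\}$ is contained in $\bigl\{\sum_{s=1}^t(Y_s-f(X_s))\indicator_{\{\cdot\}}>n^{\alpha/\xi}T_{h,i}(t)^\eta\bigr\}$, which has the same structure as in the proof of Lemma~\ref{lemma:pu<f}, now over at most $t$ rounds.

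It remains to bound the probability of that last event. Let $x_1,\dots,x_C$ be the arms attached to the (at most $C$) nodes of $\mc{C}(h,i)$, let $n_j=n_j(t)\le t$ be the number of pulls of $x_j$ up to time $t$ (so $\sum_j n_j=T_{h,i}(t)$), and $Y^j_1,Y^j_2,\dots$ the corresponding rewards, so that the left-hand side equals $\sum_{j=1}^C\sum_{m=1}^{n_j}(Y^j_m-f(x_j))$. Concavity of $x\mapsto x^\eta$ ($0<\eta<1$) gives $\sum_j n_j^\eta\le C^{1-\eta}T_{h,i}(t)^\eta\le C\,T_{h,i}(t)^\eta$, so on the event there is some $j$ with $\sum_{m=1}^{n_j}(Y^j_m-f(x_j))>\tfrac{n^{\alpha/\xi}}{C}n_j^\eta$. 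A union bound over $j\in\{1,\dots,C\}$ and over the value $N=n_j(t)\in\{1,\dots,t\}$, followed by~\eqref{eqn:concentration} applied with arm $x_j$ and $z=n^{\alpha/\xi}/C$, yields
\[
\pp\bigl(U_{h,i}(t)>f^*,\ T_{h,i}(t)>u\bigr)\le C\cdot t\cdot\beta\,(C/n^{\alpha/\xi})^\xi=\frac{\beta C^{\xi+1}\,t}{n^\alpha},
\]
so $\beta_2=\beta C^{\xi+1}>1$ works; the degenerate case $n^{\alpha/\xi}<C$ (where $z<1$ and~\eqref{eqn:concentration} is vacuous) is split off separately, $n$ then being bounded by a constant so that $\beta_2 t/n^\alpha\ge1$ already dominates any probability.

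The one genuinely non-standard point --- where this departs from the original HOO analysis --- is that the rewards are non-stationary, so one cannot apply Hoeffding to $\widehat{\mu}_{h,i}(t)$; the entire purpose of the reduction above is to recast the non-stationary empirical average as per-arm partial sums, to which the fixed-arm hypothesis~\eqref{eqn:concentration} applies verbatim. The remaining work is bookkeeping: verifying $\Delta_{h,i}-\nu_1\rho^h>0$, the power-mean estimate with constant $C$, and tracking where the factor $t$ enters --- it is exactly the union bound over the possible values $N=n_j(t)\le t$ of the per-arm pull count, mirroring the inner sum in the proof of Lemma~\ref{lemma:pu<f}.
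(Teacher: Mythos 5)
Your proof is correct and follows essentially the same route as the paper's: reduce the event $\{U_{h,i}(t)>f^*,\ T_{h,i}(t)>u\}$ to a per-arm deviation event via $f(X_s)\le f^*_{h,i}=f^*-\Delta_{h,i}$ and the threshold $u\ge A_{h,i}(n)$, then apply the fixed-arm concentration property~\eqref{eqn:concentration} with a union bound over the finitely many arms in $\mc{C}(h,i)$ and over the pull counts (which is exactly where the factor $t$ enters in both arguments). The only cosmetic difference is that the paper carries the deviation threshold as $\tfrac{\Delta_{h,i}-\nu_1\rho^h}{2}\,T_{h,i}(t)$ and converts to $n^{-\alpha}$ at the last step, whereas you absorb the $A_{h,i}(n)$ condition up front to work directly with the threshold $n^{\alpha/\xi}T_{h,i}(t)^{\eta}$, mirroring Lemma~\ref{lemma:pu<f}; the two are algebraically equivalent.
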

\begin{proof}
	The proof idea follows almost the same procedure as the proof of Lemma 16 in~\cite{bubeck2011x}, and we repeat it here due to some minor differences. First, notice that the $u$ defined in the statement of the lemma satisfies $n^{\alpha/\xi} u^{\eta - 1} + \nu_1 \rho \leq \frac{\Delta_{h,i} + \nu_1 \rho^h}{2}$. Then we have
	\[
	\begin{aligned}
	&\mathbb{P}\left(U_{h, i}(t)>f^{*}\ \text { and }\ T_{h, i}(t)>u\right)\\
	=&\mathbb{P}\left(\widehat{\mu}_{h, i}(t)+n^{\alpha / \xi}u^{\eta - 1}+\nu_{1} \rho^{h}>f_{h, i}^{*}+\Delta_{h, i} \  \text { and } \ T_{h, i}(t)>u\right)\\
	\leq &\mathbb{P}\left(\widehat{\mu}_{h, i}(t)>f_{h, i}^{*}+\frac{\Delta_{h, i}-\nu_{1} \rho^{h}}{2} \text { and } T_{h, i}(t)>u\right)\\
	\leq & \mathbb{P}\left(T_{h, i}(t)\left(\widehat{\mu}_{h, i}(t)-f_{h, i}^{*}\right)>\frac{\Delta_{h, i}-\nu_{1} \rho^{h}}{2} T_{h, i}(t) \text { and } T_{h, i}(t)>u\right)\\
	\leq & \mathbb{P}\left(\sum_{s=1}^{t}\left(Y_{s}-f\left(X_{s}\right)\right) \mathbb{I}_{\left\{\left(H_{s}, I_{s}\right) \in \mathcal{C}(h, i)\right\}}>\frac{\Delta_{h, i}-\nu_{1} \rho^{h}}{2} T_{h, i}(t) \text { and } T_{h, i}(t)>u\right)\\
	\leq & \sum_{T_{h,i}(t)=u+1}^t\mathbb{P}\left(\sum_{s=1}^{t}\left(Y_{s}-f\left(X_{s}\right)\right) \mathbb{I}_{\left\{\left(H_{s}, I_{s}\right) \in \mathcal{C}(h, i)\right\}}>\frac{\Delta_{h, i}-\nu_{1} \rho^{h}}{2} T_{h, i}(t)\right),
	\end{aligned}
	\]
	where in the last step we used the union bound. Then, following a similar procedure as in the proof of Lemma~\ref{lemma:pu<f} (defining $X^j$ and $Y^j_t$, and then the concentration property), we get:
	\[
	\begin{aligned}
	& \sum_{T_{h,i}(t)=u+1}^t\mathbb{P}\left(\sum_{s=1}^{t}\left(Y_{s}-f\left(X_{s}\right)\right) \mathbb{I}_{\left\{\left(H_{s}, I_{s}\right) \in \mathcal{C}(h, i)\right\}}>\frac{\Delta_{h, i}-\nu_{1} \rho^{h}}{2} T_{h, i}(t)\right)\\
	\leq & \sum_{T_{h,i}(t)=u+1}^t \frac{\beta_2}{\left(\frac{\Delta _{h,i}-\nu_1\rho}{2}\right)^\xi \left(T_{h,i}(t)\right)^{\xi (1-\eta)}}\\
	\leq &\sum_{T_{h,i}(t)=u+1}^t \frac{\beta_2}{n^\alpha} \leq \frac{\beta_2 t}{n^\alpha},
	\end{aligned}
	\]
	where $\beta_2 > 1$ is a constant independent of $n$, and in the second step we used the fact that $T_{h,i}(t) > u \geq A_{h,i}(n) = \ceil{\left(\frac{2n^{\alpha/\xi}}{\Delta_{h,i} - \nu_1\rho^h}\right)^{\frac{1}{1-\eta}}}$. This completes our proof of $\mathbb{P}\left(U_{h, i}(t)>f^{*} \text { and } T_{h, i}(t)>u\right)\leq \frac{\beta_2 t}{n^\alpha}$.
\end{proof}

\begin{lemma}\label{lemma:expectedt}
	For any suboptimal node $(h,i)$ with $\Delta_{h,i} >\nu_1 \rho^h$ and any integer $n\geq 1$, there exist constants $\beta_1,\beta_2>1$, such that:
	\[
	\e{T_{h,i}(n)} \leq \left(\frac{2n^{\alpha/\xi}}{\Delta_{h,i} - \nu_1\rho^h}\right)^{\frac{1}{1-\eta}} + 1+\beta_1 + \frac{\beta_2}{\alpha -3}.
	\]
\end{lemma}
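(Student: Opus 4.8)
The plan is to obtain the bound directly from the per‑round recursion of Lemma~\ref{lemma:expectedtpre}, fed by the two tail estimates Lemma~\ref{lemma:pu<f} and Lemma~\ref{lemma:pu>f}. First I would dispense with the degenerate case: since $T_{h,i}(n)\le n$ deterministically, if $A_{h,i}(n)\ge n$ the asserted inequality holds trivially, so I may assume $A_{h,i}(n)<n$. I then instantiate Lemma~\ref{lemma:expectedtpre} with the particular choice $u=A_{h,i}(n)=\ceil{\big(\tfrac{2n^{\alpha/\xi}}{\Delta_{h,i}-\nu_1\rho^h}\big)^{1/(1-\eta)}}$, and use $\ceil{x}\le x+1$ so that the leading term $u$ accounts for the first two summands $\big(\tfrac{2n^{\alpha/\xi}}{\Delta_{h,i}-\nu_1\rho^h}\big)^{1/(1-\eta)}+1$ on the right‑hand side of the claim. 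Applying a union bound to the ``or'' inside the probability in Lemma~\ref{lemma:expectedtpre} leaves two tail sums to control:
\[
\e{T_{h,i}(n)}\ \le\ u\ +\ \underbrace{\sum_{t=u+1}^{n}\pp\big(\exists s:\,U_{s,i_s^*}(t)\le f^*\big)}_{(\mathrm{I})}\ +\ \underbrace{\sum_{t=u+1}^{n}\pp\big(T_{h,i}(t)>u,\ U_{h,i}(t)>f^*\big)}_{(\mathrm{II})},
\]
where $s$ ranges over the optimal nodes $(s,i_s^*)$ relevant to $(h,i)$ as in Lemma~\ref{lemma:expectedtpre}.

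For term $(\mathrm{I})$, Lemma~\ref{lemma:pu<f} gives $\pp\big(U_{s,i_s^*}(t)\le f^*\big)\le \beta_1/t^{\alpha-1}$ for each optimal node and each $t\ge1$. Even counting crudely at most $t$ relevant depths $s$ per round, $(\mathrm{I})\le \beta_1\sum_{t\ge 1}t^{-(\alpha-2)}$; since $\alpha>3$ forces $\alpha-2>1$, this $p$-series converges, so $(\mathrm{I})$ is at most a constant depending only on $\alpha$ (and, through the cardinality of the optimal path, on $\bar H$), which I simply rename $\beta_1$.

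For term $(\mathrm{II})$, the choice $u=A_{h,i}(n)$ is exactly what makes Lemma~\ref{lemma:pu>f} applicable for every $t\le n$, giving $\pp\big(T_{h,i}(t)>u,\ U_{h,i}(t)>f^*\big)\le \beta_2 t/n^{\alpha}$. Summing, $(\mathrm{II})\le \frac{\beta_2}{n^{\alpha}}\sum_{t=1}^{n}t\le \frac{\beta_2}{n^{\alpha-2}}\le\beta_2$ for $n\ge1$ because $\alpha>3$; a slightly finer estimate of the tail $\sum_{v>u}v^{-(\alpha-2)}$ (again using $\alpha>3$, so $\alpha-3>0$) lets one present this constant in the tighter shape $\beta_2/(\alpha-3)$, enlarging $\beta_2$ if necessary. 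Collecting the three pieces yields $\e{T_{h,i}(n)}\le \big(\tfrac{2n^{\alpha/\xi}}{\Delta_{h,i}-\nu_1\rho^h}\big)^{1/(1-\eta)}+1+\beta_1+\frac{\beta_2}{\alpha-3}$, which is the statement.

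I expect the only nontrivial ingredients to be the lemmas I am allowed to assume — namely Lemma~\ref{lemma:expectedtpre} (the optimistic‑ancestor decomposition of a round in which a suboptimal node is selected) and Lemma~\ref{lemma:pu>f} (the uniform‑in‑$t$ tail bound, whose validity hinges precisely on $u\ge A_{h,i}(n)$ and on the hypothesis $\Delta_{h,i}>\nu_1\rho^h$). Given those, the present lemma is pure bookkeeping: matching $u$ to $A_{h,i}(n)$, a union bound, and summing two polynomially small series that converge exactly because $\alpha>3$. The only care required is the standard ``the bound is trivial when the invoked concentration inequality is vacuous (right‑hand side $\ge1$)'' remark implicit in Lemmas~\ref{lemma:pu<f}–\ref{lemma:pu>f}, and the reduction to $A_{h,i}(n)<n$ noted at the outset.
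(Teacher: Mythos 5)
Your proposal is correct and follows essentially the same route as the paper: instantiate Lemma~\ref{lemma:expectedtpre} at $u=A_{h,i}(n)$, control the two resulting tail sums via Lemma~\ref{lemma:pu<f} and Lemma~\ref{lemma:pu>f}, and use $\alpha>3$ to make both series summable. The only difference is cosmetic bookkeeping in which of the two terms absorbs the $1/(\alpha-3)$ factor (the paper's own statement and proof already disagree on this labeling of $\beta_1$ versus $\beta_2$), which is immaterial since both are just constants.
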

\begin{proof}
	Let $A_{h,i}(n) = \ceil{\left(\frac{2n^{\alpha/\xi}}{\Delta_{h,i} - \nu_1\rho^h}\right)^{\frac{1}{1-\eta}}}$. Then from Lemma~\ref{lemma:expectedtpre}, we know that
	\begin{small}
	\[
	\begin{aligned}
	\hspace{-10pt}\e{T_{h,i}(n)} \leq A_{h,i}(n)+\hspace{-12pt}\sum_{t=A_{h,i}(n)+1}^{n}\hspace{-5pt}\left(\mathbb{P}\left(T_{h, i}(t)>A_{h,i}(n) \text { and } U_{h, i}(t)>f^{*}\right)+\sum_{s=1}^{t-1} \mathbb{P}\left(U_{s, i_{s}^{*}}(t) \leq  f^{*}\right)\right)
	\end{aligned}
	\]
	\end{small}
	By replacing the right hand side with the results from Lemma~\ref{lemma:pu<f} and Lemma~\ref{lemma:pu>f}, we further have
	\[
	\begin{aligned}
	\e{T_{h,i}(n)} &\leq A_{h,i}(n) + \sum_{t=A_{h,i}(n) + 1}^n\left(\frac{\beta_2 t}{n^\alpha} + \sum_{s=1}^{t-1}\frac{\beta_1}{t^{\alpha-1}}\right)\\
	&\leq A_{h,i}(n)+ \frac{\beta_2}{n^{\alpha-2}} + \int_{u}^n \frac{\beta_1}{t^{\alpha-2}}dt\\
	&\leq \left(\frac{2n^{\alpha/\xi}}{\Delta_{h,i} - \nu_1\rho^h}\right)^{\frac{1}{1-\eta}} + 1+ \beta_2  + \frac{\beta_1}{\alpha-3}.
	\end{aligned}
	\]
	This completes our proof. 
\end{proof}

\begin{lemma}\label{lemma:pt>u}
	Let $(h,i)$ be a suboptimal node. Then for any $n\geq 1$ and any $u> A_{h,i}(n) = \ceil{\left(\frac{2n^{\alpha/\xi}}{\Delta_{h,i} - \nu_1\rho^h}\right)^{\frac{1}{1-\eta}}}$, there exist constants $\beta_1,\beta_2>1$, such that 
	$$
	\prob{T_{h,i}(n)> u } \leq \frac{\beta_2}{n^{\alpha-2}} + \frac{\beta_1(u-1)^{3-\alpha}}{\alpha - 3}.
	$$
\end{lemma}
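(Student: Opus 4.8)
The plan is to reduce the tail bound on $T_{h,i}(n)$ to a union bound over simulation rounds, reusing the \emph{pathwise} decomposition that underlies Lemma~\ref{lemma:expectedtpre}. Let $(k,i_k^*)$ be the lowest common ancestor of $(h,i)$ and the optimal path of Definition~\ref{dfn:optimalnode}. The argument behind Lemma~\ref{lemma:expectedtpre} in fact gives the sample-path inequality $T_{h,i}(n)\le u+\sum_{t=\ceil{u}+1}^{n}\indicator\{E_t\}$, where $E_t$ is the event that either $U_{s,i_s^*}(t)\le f^*$ for some $s\in\{k+1,\dots,t-1\}$, or $T_{h,i}(t)>u$ and $U_{h,i}(t)>f^*$: on any round $t$ at which a descendant of $(h,i)$ is selected while $T_{h,i}(t)>u$, the selection rule forces one of these two situations, and there are at most $\ceil{u}$ rounds with $T_{h,i}(t)\le u$. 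Consequently $\{T_{h,i}(n)>u\}\subseteq\{\sum_{t}\indicator\{E_t\}\ge 1\}$, so by Markov's inequality $\prob{T_{h,i}(n)>u}\le\sum_{t=\ceil{u}+1}^{n}\prob{E_t}$.

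The remaining steps are routine. First I would split $\prob{E_t}\le\sum_{s=k+1}^{t-1}\prob{U_{s,i_s^*}(t)\le f^*}+\prob{T_{h,i}(t)>u,\ U_{h,i}(t)>f^*}$ by a union bound over $s$. For the first piece, Lemma~\ref{lemma:pu<f} (applied to the optimal nodes $(s,i_s^*)$) gives $\prob{U_{s,i_s^*}(t)\le f^*}\le \beta_1/t^{\alpha-1}$, so its contribution is at most $(t-1)\beta_1 t^{-(\alpha-1)}\le\beta_1 t^{-(\alpha-2)}$. For the second piece, since $u> A_{h,i}(n)$ and $A_{h,i}(\cdot)$ is nondecreasing, Lemma~\ref{lemma:pu>f} applies for every $t\le n$ and yields $\prob{T_{h,i}(t)>u,\ U_{h,i}(t)>f^*}\le\beta_2 t/n^\alpha$. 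Summing over $t$, $\prob{T_{h,i}(n)>u}\le\sum_{t=\ceil{u}+1}^{n}\big(\beta_1 t^{2-\alpha}+\beta_2 t\,n^{-\alpha}\big)$. Using $\alpha>3$, the first sum is at most $\beta_1\int_{u-1}^{\infty}x^{2-\alpha}\,dx=\beta_1(u-1)^{3-\alpha}/(\alpha-3)$ by monotonicity, and the second is at most $\beta_2 n^{-\alpha}\sum_{t=1}^{n}t\le\beta_2 n^{2-\alpha}=\beta_2/n^{\alpha-2}$; adding the two (and relabeling the constants) gives the claim.

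The only genuinely delicate point is the first step: promoting Lemma~\ref{lemma:expectedtpre} from an expectation bound to the event inclusion above, which means revisiting its proof to confirm that the domination $T_{h,i}(n)\le u+\sum_{t}\indicator\{E_t\}$ holds pathwise (it does, exactly as in Lemma 14 of~\cite{bubeck2011x}), together with the minor non-integrality bookkeeping — one uses $T_{h,i}(n)>u\Rightarrow T_{h,i}(n)\ge\lfloor u\rfloor+1$, so the indicator sum is a positive integer, hence at least $1$, and one may as well run the whole argument with the integer threshold $\lfloor u\rfloor\ge A_{h,i}(n)$, which only strengthens the final $(u-1)^{3-\alpha}$ term. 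Beyond that, the proof is entirely bookkeeping: noting that the well-definedness of $A_{h,i}(n)$ already encodes the hypothesis $\Delta_{h,i}>\nu_1\rho^h$ required by Lemma~\ref{lemma:pu>f}, and that $\beta_1,\beta_2$ here are the constants furnished by Lemmas~\ref{lemma:pu<f} and~\ref{lemma:pu>f} up to absolute factors.
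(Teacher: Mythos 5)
Your proposal is correct and follows essentially the same route as the paper: both reduce $\{T_{h,i}(n)>u\}$ to a union over $t$ of the same two bad events (some optimal-path $U$-value falling below $f^*$, or $(h,i)$ being over-played while $U_{h,i}(t)>f^*$), then apply Lemmas~\ref{lemma:pu<f} and~\ref{lemma:pu>f} with identical sum/integral estimates. The only cosmetic difference is that you obtain the event inclusion from the pathwise form of Lemma~\ref{lemma:expectedtpre}, whereas the paper re-derives it directly via the monotonicity of the $B$-values and the events $E_1,E_2$.
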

\begin{proof}
	Clearly, this inequality holds for $n \leq u$, as $T_{h,i}(n)\leq n$ and the left hand side would be $0$ in this case. We therefore focus on the case $n > u$. 
	
	We first notice the following monotonicity of the $B$-values: according to the way that $B$-values are defined, the $B$-value of the descendants of a node $(h,i)$ would always be no smaller than the $B$-value of $(h,i)$ itself. Therefore, $B$-values do not decrease along a path from the root to a leaf. 
	
	Now, let $0 \leq k \leq h-1$ be the largest depth such that $(k,i_k^*)$ is on the path from the root $(0,1)$ to $(h,i)$. We define two events: $E_1 = \{\text{For each } t \in [u,n], B_{h,i}(t)\leq f^* \text{ or } T_{h,i}(t)\leq A_{h,i}(t) < u\}$, and $E_2 = \{\text{For each } t\in [u,n], B_{k+1,i^*_{k+1}}(t) > f^*\}$. We can verify that $E_1 \cap E_2 \subseteq \{T_{h,i}(n)\leq u\}$. To see this, suppose that for some $t\in [u,n]$ we have $B_{h,i}(t) \leq f^*$ and $B_{k+1,i^*_{k+1}}(t) > f^*$; then we know that we would not enter the node $(h,i)$. This is because by the monotonicity of the $B$-values, the ancestor of $(h,i)$ at level $k+1$ has a $B$-value no larger than $B_{h,i}(t)$, which in turn satisfies $B_{h,i}(t) \leq f^* < B_{k+1,i^*_{k+1}}(t)$. Therefore, we would always enter $B_{k+1,i^*_{k+1}}$ rather than the ancestor of $(h,i)$ at level $k+1$. In this case, $T_{h,i}$ would not increase at round $t$. Now consider the other case: suppose that for some $t\in [u,n]$ we have $T_{h,i}(t)\leq A_{h,i}(t) < u$ and $B_{k+1,i^*_{k+1}}(t) > f^*$. In this case, we could indeed possibly enter node $(h,i)$ and increase $T_{h,i}$ by $1$, but since $T_{h,i}(t)<u$, we still have $T_{h,i}(t+1)\leq u$ after increasing by $1$. Considering these two cases inductively, we can see that if $E_1\cap E_2$ holds, then $T_{h,i}(u-1) < u$ implies $T_{h,i}(n) \leq u$. Since $T_{h,i}(u-1) < u$ trivially holds, we can conclude that $E_1 \cap E_2 \subseteq \{T_{h,i}(n)\leq u\}$. 
	
	After we have $E_1 \cap E_2 \subseteq \{T_{h,i}(n)\leq u\}$, we know that $\{T_{h,i}(n)> u\} \subseteq E_1^c \cup E_2^c$, where $E^c$ denotes the complement of event $E$. This in turn gives us $\prob{\{T_{h,i}(n)> u\}} \leq  \prob{E_1^c} + \prob{E_2^c}$. From the definition of the $B$-values, $\left\{B_{k+1, i_{k+1}^*}(t) \leq  f^{*}\right\} \subset\left\{U_{k+1, i_{k+1}^{*}}(t) \leq  f^{*}\right\} \cup\left\{B_{k+2, i_{k+2}^{*}}(t) \leq  f^{*}\right\}$, and this can be applied recursively up to depth $t$, where the nodes in depth $t$ have not been played at round $t$ and satisfy $B_{t,i^*_t} = \infty > f^*$. Together with the fact that $U_{h,i}(t) \geq B_{h,i}(t)$ (by definition), we have
	\[
	\begin{aligned}
	&\prob{T_{h,i}(n)> u} \\
	\leq &\prob{\exists t\in [u,n], B_{h,i}(t) > f^* \text{ and } T_{h,i}(t) > A_{h,i}(t)} + \prob{\exists t \in [u,n], B_{k+1,i^*_{k+1}}(t) \leq f^*}\\
	\leq &\prob{\exists t\in [u,n], U_{h,i}(t) > f^* \text{ and } T_{h,i}(t) > A_{h,i}(t)}\\
	&\quad + \prob{\exists t \in [u,n], U_{k+1,i^*_{k+1}}(t) \leq  f^* \text{ or } U_{k+2,i^*_{k+2}}(t) \leq  f^* \text{ or }\dots \text{ or } U_{t-1,i^*_{t-1}}(t)\leq f^* }\\
	\leq &\sum_{t=u}^n\prob{U_{h,i}(t) > f^* \text{ and } T_{h,i}(t) > A_{h,i}(t)}\\
	&\quad + \sum_{t=u}^n\prob{U_{k+1,i^*_{k+1}}(t) \leq  f^* \text{ or } U_{k+2,i^*_{k+2}}(t) \leq  f^* \text{ or }\dots \text{ or } U_{t-1,i^*_{t-1}}(t)\leq f^* }\\
	\leq &\sum_{t=u}^n\prob{U_{h,i}(t) > f^* \text{ and } T_{h,i}(t) > A_{h,i}(t)} + \sum_{t=u}^n\sum_{s=1}^{t-1}\prob{U_{s,i^*_{s}}(t) \leq  f^*},
	\end{aligned}
	\]
	where in the last two steps we used the union bound. Since we know $\prob{U_{s, i_s^*}(t)\leq f^*}\leq \frac{\beta_1}{n^{\alpha-1}}$ from Lemma~\ref{lemma:pu<f}, and $\prob{U_{h,i}(t) > f^* \text{ and } T_{h,i}(t) > A_{h,i}(t)}\leq \frac{\beta_2t}{n^\alpha}$ from Lemma~\ref{lemma:pu>f}, we conclude that 
	\[
	\begin{aligned}
	&\sum_{t=u}^n\prob{U_{h,i}(t) > f^* \text{ and } T_{h,i}(t) > A_{h,i}(t)} + \sum_{t=u}^n\sum_{s=1}^{t-1}\prob{U_{s,i^*_{s}}(t) \leq  f^*}\\
	\leq &\sum_{t=u}^n \frac{\beta_2t}{n^\alpha} + \sum_{t=u}^n \sum_{s=1}^{t-1}\frac{\beta_1}{t^{\alpha-1}}\leq  \sum_{t=u}^n \frac{\beta_2n}{n^\alpha} + \beta_1\int_{u-1}^{\infty}t^{2-\alpha} dt\\
	\leq & \frac{\beta_2}{n^{\alpha-2}} + \frac{\beta_1(u-1)^{3-\alpha}}{\alpha-3}.
	\end{aligned}
	\]
	This completes the proof.
	
	We further remark that if $1<u \leq n$, then $\frac{1}{n^{\alpha-2}}\leq \frac{ u^{3-\alpha}n^{\alpha -3}}{n^{\alpha-2}} \leq \frac{(u-1)^{3-\alpha}}{n}$, which implies \begin{equation}\label{eqn:pt>u} 
	\prob{T_{h,i}(n)> u } \leq \frac{\beta_2(u-1)^{3-\alpha}}{n} + \frac{\beta_1(u-1)^{3-\alpha}}{\alpha - 3}.
	\end{equation} 
	Notice that this inequality also holds when $u > n$, because $T_{h,i}(n) \leq n < u$, and any non-negative value on the RHS is a trivial upper bound for $\prob{T_{h,i}(n)> u }$.
\end{proof}

\begin{remark}
	As a final remark, when we refer to the results of Lemmas~\ref{lemma:pu<f}, \ref{lemma:expectedtpre}, \ref{lemma:pu>f}, \ref{lemma:expectedt} and~\ref{lemma:pt>u}, we typically drop the constant factors $\beta_1$ and $\beta_2$ and proceed with $\beta_1=\beta_2=1$ instead. This does not affect our main results up to a constant factor. 
\end{remark}

\section{Details of the Simulations}\label{appendix:simulations}
In this section, we discuss details of the simulations and empirically evaluate the performance of \texttt{POLY-HOOT} on several classic control tasks. We have chosen three benchmark tasks from the OpenAI Gym~\citep{openai}, and extended them to the continuous-action settings as necessary. These tasks include CartPole, Inverted Pendulum Swing-up, and LunarLander. 

In the CartPole problem, a pole is attached to a cart through a joint. The task is to apply an appropriate horizontal force to the cart to prevent the pole from falling. For every time step that the pole remains standing (up to $15$ degrees from being vertical), a unit reward is given. We have also modified the CartPole problem to a more challenging setting with an increased gravity value (CartPole-IG) to better demonstrate the differences between the algorithms we compare. This new setting requires smoother actions, and bang-bang control strategies easily lead the pole to fall due to the increased momentum. The Inverted Pendulum Swing-up task is also a classic problem in control. A pendulum is attached to a frictionless pivot, starting from a random position. The task is to apply a force to the pendulum to swing it up and let it stay upright. At each time step, a reward is given based on the angle of the current position of the pendulum from being upright. In the LunarLander problem, the task is to design the control signals for a lunar lander to land smoothly on a landing pad. A negative reward is given every time the engine is fired, and a positive reward is given when the lander safely reaches the landing pad. 

In the original problem of CartPole, the action set is a discrete set $\{-1,1\}$. In our CartPole and CartPole-IG environments though, we have extended the action space to a continuous domain $[-1,1]$. In CartPole-IG, we have further increased the gravity value from $9.8$ to $50$, increased the mass of the pole from $0.1$ to $0.5$, and increased the length of the pole from $1$ to $2$. The other parameters have remained the same as the discrete setting in OpenAI Gym. For the task of Inverted Pendulum, we have manually reduced the randomness of the initial state to ensure that each run of the simulation is initialized more consistently. The reward discount factor was set to be $\gamma = 0.99$ for all the four tasks. The length of the horizon was taken as $T=150$. 

We compare the empirical performance of \texttt{POLY-HOOT} with three continuous MCTS algorithms, including UCT~\citep{kocsis2006bandit} with manually discretized actions, Polynomial Upper Confidence Trees (PUCT) with progressive widening~\citep{auger2013continuous}, and the original empirical implementation of HOOT~\citep{mansley2011sample} with a logarithmic bonus term. For all four algorithms, we have set the MCTS depth to be $D = 50$, except for the task of LunarLander where we set $D=100$ because this task takes a longer time to finish. We have set the number of simulations at each state to be $n=100$ rounds. For the UCT algorithm with discretized actions, we have fixed the number of actions to be $10$ and sampled the actions using a uniform grid. For PUCT with progressive widening, we have set the progressive widening coefficient to be $0.5$, i.e., the number of discrete action samples grows at a square-root order in time. For HOOT and $\texttt{POLY-HOOT}$, given the dimension $m$ of the action space, we have calculated the $\rho$ and $\nu_1$ parameters by $\rho = \frac{1}{4^{m}}$ and $\nu_1 = 4m$. For \texttt{POLY-HOOT}, we have set the maximum depth of the HOO tree covering to be $\bar{H} = 10$, and we have fixed $\alpha = 5, \xi = 20$, and $\eta = 0.5$. The value function oracle we have used is $\hat{V}(s) = 0,\forall s\in S$ for all four algorithms.

\begin{figure*}[!tbp]
	\centering
	\hspace{-0.2cm}\subfigure[]{\includegraphics[width=0.36\textwidth]{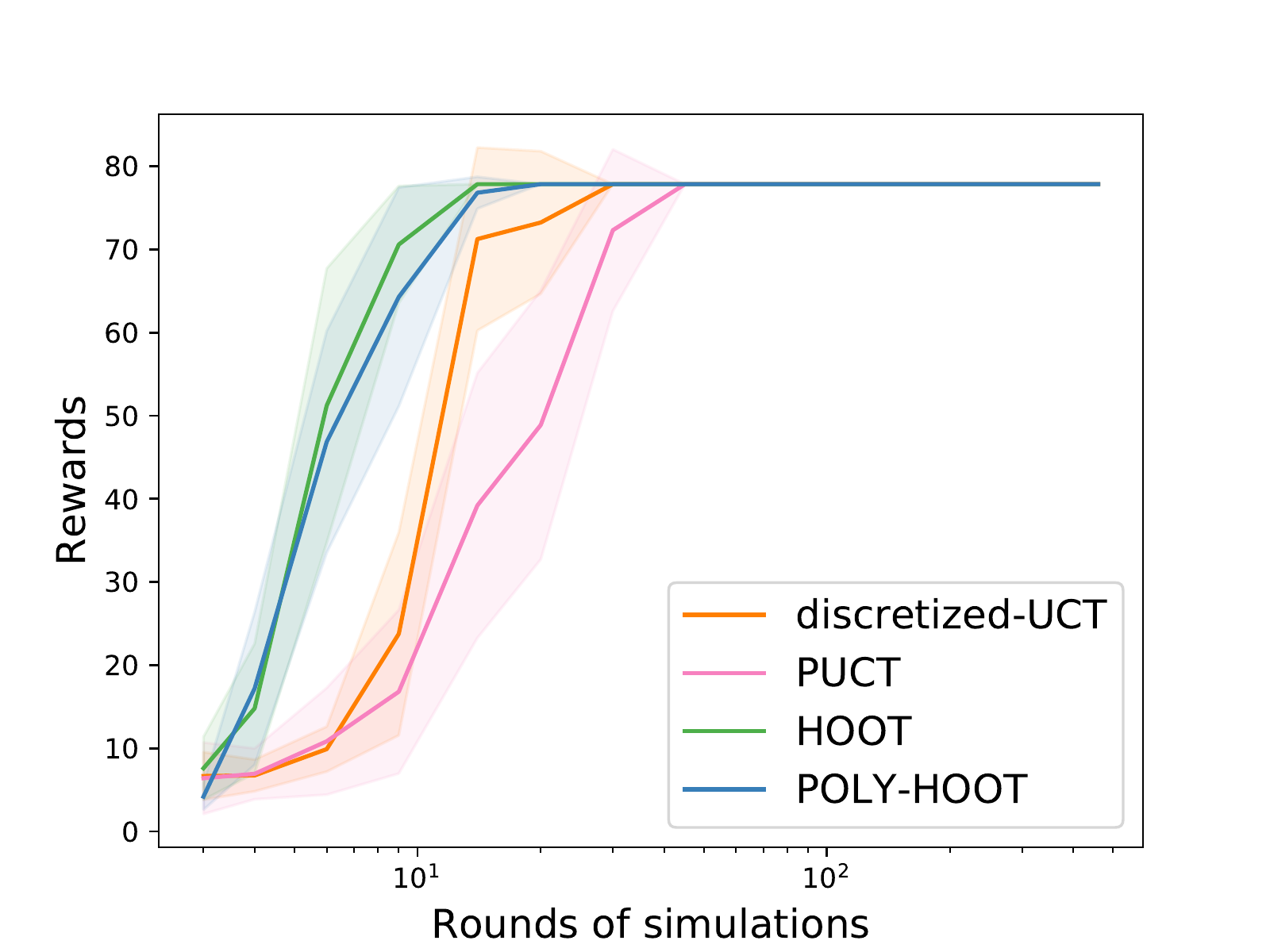}}
	\hspace{-0.55cm}\subfigure[]{\includegraphics[width=0.36\textwidth]{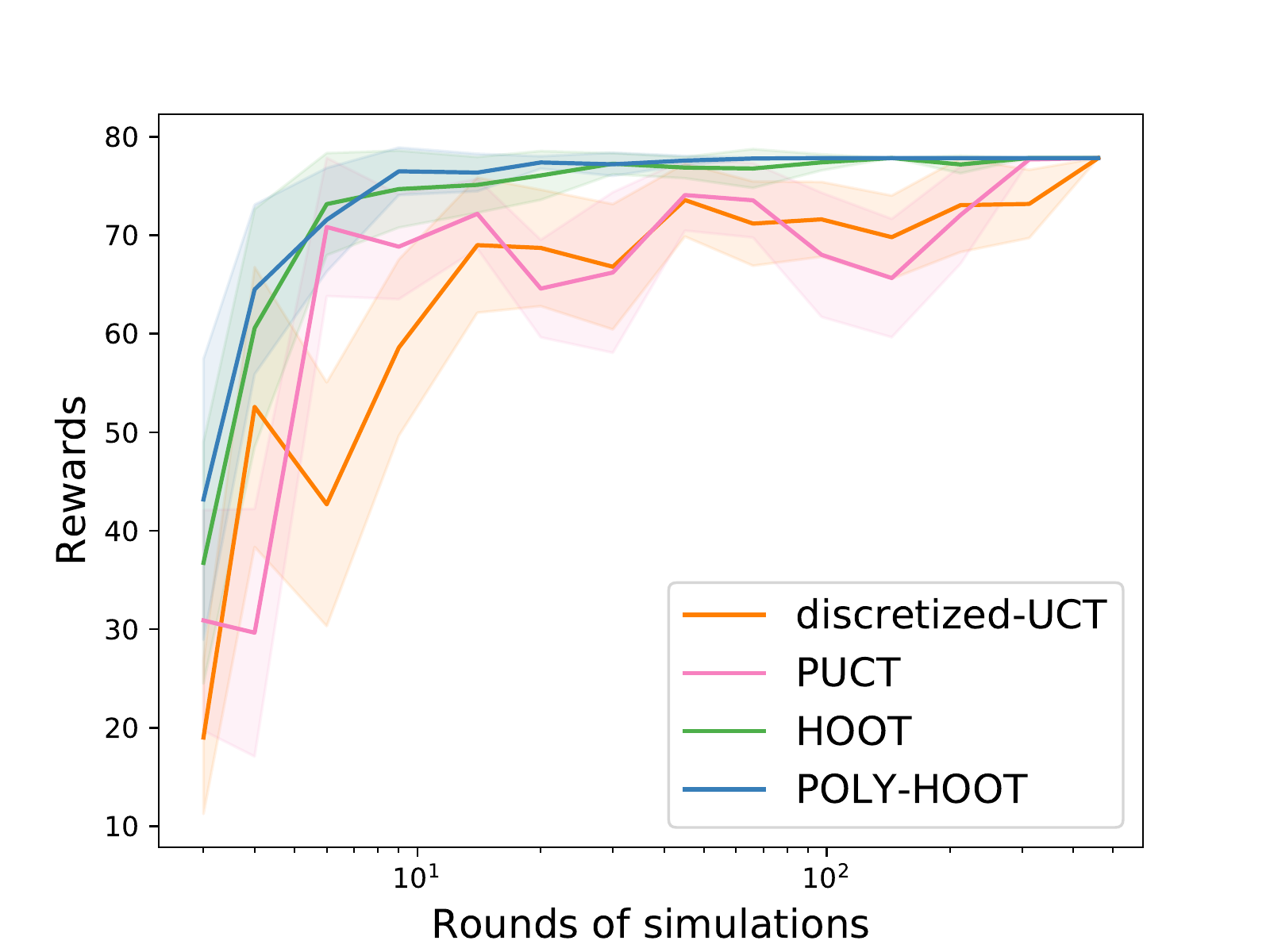}}
	\hspace{-0.55cm}\subfigure[]{\includegraphics[width=0.36\textwidth]{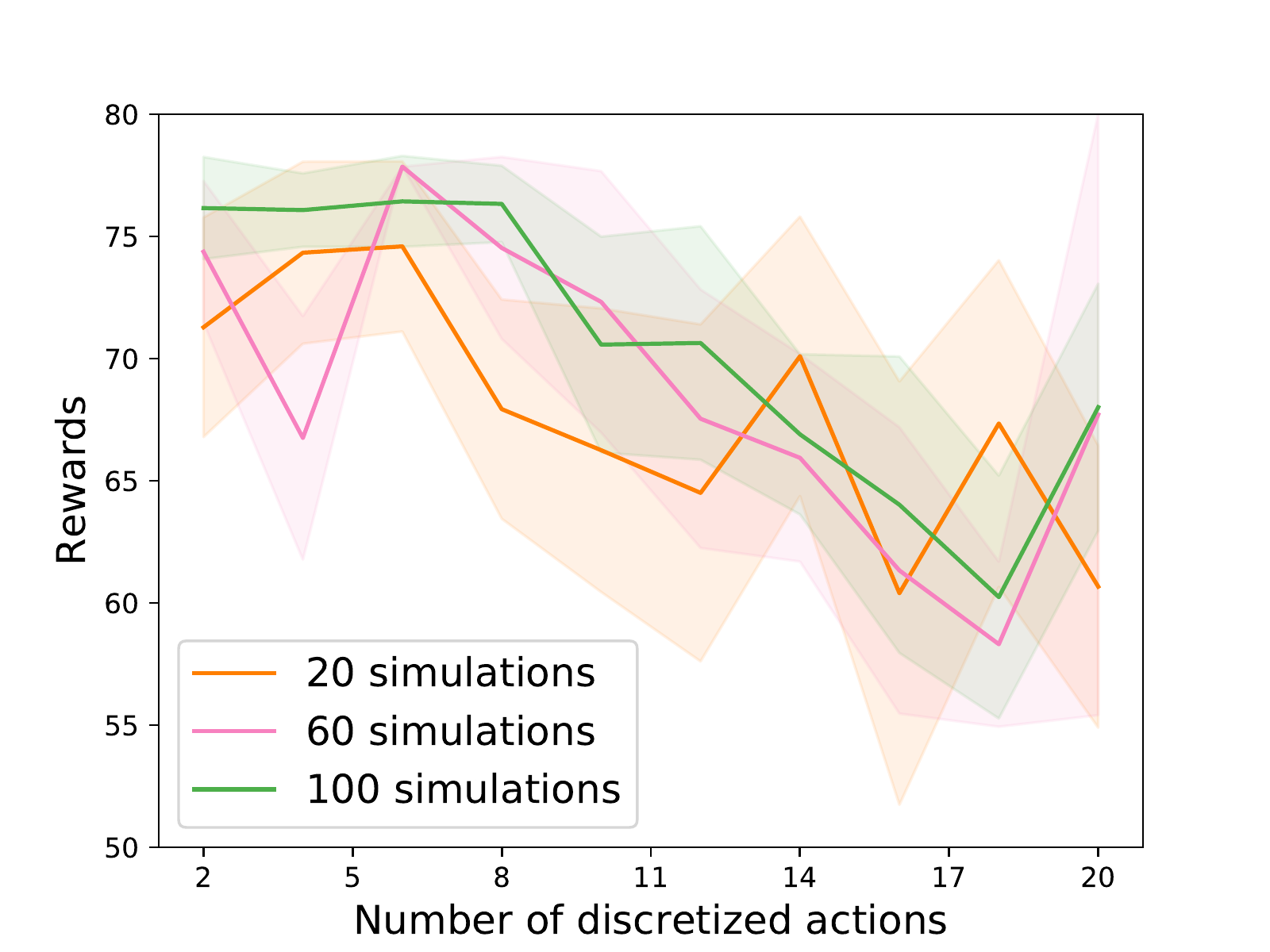}}
	\caption{Figures (a) and (b) show the rewards of the four algorithms with respect to the rounds of simulations per MCTS step on CartPole and CartPole-IG, respectively. The horizontal axes are in logarithmic scales. The shaded areas denote the standard deviations. Figure (c) shows the reward of discretized-UCT with respect to the action discretization level on CartPole-IG. } 
	\label{fig:1}
\end{figure*}

In addition to the evaluation results presented in the main text, we have also tested how the number of simulation rounds per planning step influences the rewards of the four algorithms. The number of simulation rounds is proportional to the number of samples used in each step, and hence we can use this experiment to infer the sample complexities of different algorithms. The evaluation results on CartPole and CartPole-IG are shown in Figures~\ref{fig:1} (a) and (b), respectively. As we can see, HOOT and \texttt{POLY-HOOT} require significantly fewer rounds of simulations to achieve the optimal rewards, which suggests that they have better sample complexities than discretized-UCT and PUCT. 
	 
We have also evaluated how the action discretization level influences the performance of discretized-UCT. The evaluation results on CartPole-IG are shown in Figure~\ref{fig:1} (c), where different curves denote different numbers  of simulation rounds per planning step. As we can see, the performance of discretized-UCT does not necessarily improve with finer granularity of actions. We believe the reason is that, given the fixed number of samples used in each step, each discretized action cannot be well estimated and fully exploited when the discretized action space is large. In addition, there exist huge reward fluctuations even if we only slightly modify the action granularity. This suggests that the performance of discretized-UCT is very sensitive to the discretization level, making this hyper-parameter hard to tune. 
These evaluation results can further demonstrate the advantages of partitioning the action space adaptively in HOOT and \texttt{POLY-HOOT}.

\end{document}